\theoremstyle{plain}
\newtheorem{theorem}{Theorem}[section]
\newtheorem{lemma}[theorem]{Lemma}
\theoremstyle{definition}
\newtheorem{definition}[theorem]{Definition}
\theoremstyle{remark}
\definecolor{codegreen}{rgb}{0,0.6,0}        
\definecolor{codegray}{rgb}{0.5,0.5,0.5}     
\definecolor{codepurple}{rgb}{0.58,0,0.82}   
\definecolor{backcolour}{rgb}{0.95,0.95,0.92}
\definecolor{cyan10}{HTML}{E5F6FF}
\definecolor{cyan20}{HTML}{BAE6FF}
\definecolor{cyan60}{HTML}{0072c3}
\definecolor{cyan70}{HTML}{00539a}
\definecolor{teal10}{HTML}{D9FBFB}
\definecolor{teal20}{HTML}{9EF0F0}
\definecolor{teal60}{HTML}{007d79}
\definecolor{orange10}{HTML}{FFF2E8}
\definecolor{orange20}{HTML}{FFD9BE}
\definecolor{orange60}{HTML}{ba4e00}
\definecolor{blue10}{HTML}{EDF5FF}
\definecolor{blue20}{HTML}{D0E2FF}
\definecolor{magenta10}{HTML}{FFF0F7}
\definecolor{magenta20}{HTML}{FFD6E8}
\definecolor{magenta30}{HTML}{ffafd2}
\definecolor{magenta50}{HTML}{ee5396}
\definecolor{magenta60}{HTML}{d02670}
\definecolor{magenta70}{HTML}{9f1853}
\definecolor{purple10}{HTML}{F6F2FF}
\definecolor{purple20}{HTML}{E8DAFF}
\definecolor{purple30}{HTML}{d4bbff}
\definecolor{purple70}{HTML}{8a3ffc}
\definecolor{rose10}{HTML}{FCF2ED}
\definecolor{rose20}{HTML}{F9D9D1}
\definecolor{red10}{HTML}{FFF1F1}
\definecolor{red20}{HTML}{FFD7D9}
\definecolor{green10}{HTML}{DEFBE6}
\definecolor{green20}{HTML}{A7F0BA}
\definecolor{yellow10}{HTML}{fcf4d6}
\definecolor{yellow20}{HTML}{fddc69}
\definecolor{gray20}{HTML}{e0e0e0}
\definecolor{gray30}{HTML}{c6c6c6}
\definecolor{gray40}{HTML}{a8a8a8}
\definecolor{gray80}{HTML}{393939}
\newcommand{\badval}{\cellcolor[HTML]{FFCCC9}}
\newcommand{\goodval}{\cellcolor[HTML]{9AFF99}}
\newcommand{\aref}[1]{Appendix~\ref{#1}}
\newcommand{\ttref}[1]{Theorem~\ref{#1}}
\newcommand{\tpref}[1]{Proposition~\ref{#1}}
\definecolor{nircolor}{RGB}{181,214,167}
\definecolor{trevorcolor}{RGB}{163,193,244}
\crefname{figure}{Figure}{Figures}
\crefname{table}{Table}{Tables}
\crefname{chapter}{Chapter}{Chapters}
\crefname{section}{Section}{Sections}
\crefname{lstlisting}{Listing}{Listings}
\title{The Dark Side of Rich Rewards: Understanding and Mitigating Noise in VLM Rewards}
\author {
    Sukai Huang\textsuperscript{\rm 1},
    Shu-Wei Liu\textsuperscript{\rm 2},
    Nir Lipovetzky\textsuperscript{\rm 1} and 
    Trevor Cohn\textsuperscript{\rm 1}\thanks{Now at Google DeepMind}
}
\begin{document}
\maketitle 

\begin{abstract}
While Vision-Language Models (VLMs) are increasingly used to generate reward signals for training embodied agents to follow instructions, our research reveals that agents guided by VLM rewards often underperform compared to those employing only intrinsic (exploration-driven) rewards, contradicting expectations set by recent work. We hypothesize that false positive rewards -- instances where unintended trajectories are incorrectly rewarded -- are more detrimental than false negatives. We confirmed this hypothesis, revealing that the widely used cosine similarity metric is prone to false positive estimates. To address this, we introduce \textsc{BiMI} (\textbf{Bi}nary \textbf{M}utual \textbf{I}nformation), a novel reward function designed to mitigate noise. \textsc{BiMI} significantly enhances learning efficiency across diverse and challenging embodied navigation environments. Our findings offer a nuanced insight of how different types of reward noise impact agent learning and highlight the importance of addressing multimodal reward signal noise when training embodied agents.
\end{abstract}

%

\section{Introduction}

\label{sec:chap_3_introduction}

Natural language instructions are increasingly recognized as a valuable source of reward signals for guiding embodied agents to learn complex tasks. In particular, a growing trend in embodied agent learning involves using vision-language models (VLMs) for reward modeling. This approach measures the semantic similarity -- often quantified by cosine similarity -- between the embedding representations of an agent's behaviors (i.e., past trajectories) and the provided instructions, all within the same embedding space \citep{Kaplan2017BeatingAW, DBLP:conf/ijcai/GoyalNM19, DBLP:conf/corl/GoyalNM20, DBLP:conf/icml/DuWWCDA0A23,DBLP:conf/iclr/RocamondeMNPL24,DBLP:conf/icml/WangSZXBHE24}. 

\begin{figure}
    \centering
    \includegraphics[width=0.5\textwidth]{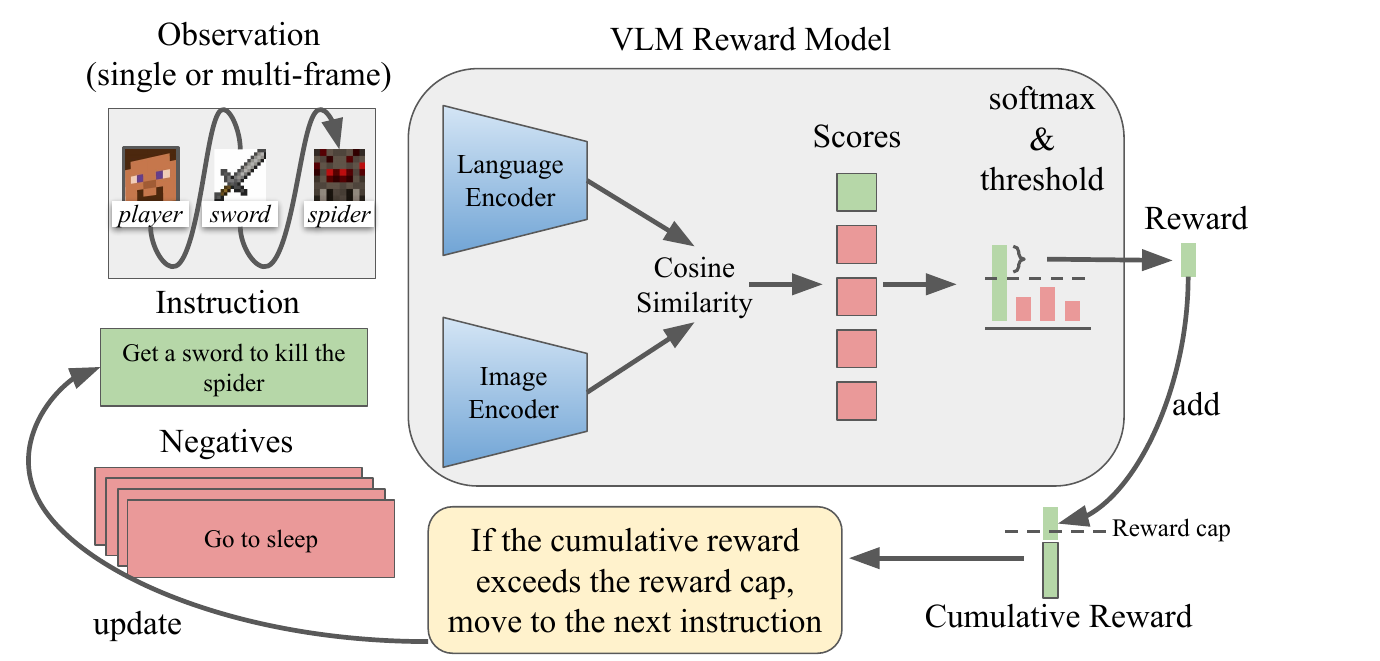}
    \caption{Illustration of embodied RL agents using VLM reward model}
    \label{fig:illustration_of_vlm_reward}
\end{figure}

However, we observed that embodied RL agents trained with VLM rewards, while effective in simplified settings, often struggled with tasks involving complex dynamics and longer action horizons\footnote{Code available at \url{https://shorturl.at/3uVGP}}. This is evident in several recent works -- for instance, \citet{DBLP:conf/ijcai/GoyalNM19} reported the effective use of VLM rewards in Montezuma's Revenge, a notoriously challenging Atari game. However, we observed that this success was confined to individual sub-tasks and the agent struggled when attempting to scale up to the full game. Similarly, \citet{DBLP:conf/icml/DuWWCDA0A23} demonstrated impressive performance of VLM rewards in guiding agents within a 2D survival game. However, their study was conducted in a modified environment with a reduced observation and action space using internal game state information and manually defined macro actions (see \aref{subsec:lit_rev_model_free_rl_LLM_VLM_instruction_reward}). Consequently, when we tested their methods in the original, unmodified environment, we found that their agent's performance did not exceed that of agents using only \emph{intrinsic} (exploration-driven) rewards.

The consistent underperformance of VLM rewards, \textbf{particularly their unexpected failure to outperform intrinsic rewards}, raised concerns about their reliability. This discrepancy, where VLM rewards underperformed contrary to their perceived potential, prompted us to investigate the underlying causes of this performance gap.

Our findings indicate that \textbf{noisy reward estimates} in VLMs are a key factor contributing to poor learning efficacy. Specifically, our analysis centered around two classes of noise: \emph{false positives}, which involve rewarding unintended trajectories and can mislead the agent into reinforcing suboptimal behaviors; and \emph{false negatives}, which occur when correct trajectories are not rewarded, thus providing little feedback and reintroduce the reward sparsity issue. We posit that false positive rewards are not only more prevalent but potentially more detrimental to the learning process than false negatives, a hypothesis supported by our empirical and theoretical findings. Among various sources of reward noise, our study particularly investigates the \textbf{approximation errors} within the commonly used cosine similarity metric. We examine how these errors generate false positive rewards, which in turn hinder learning.

To this end, we propose a novel reward function, \textsc{BiMI} (\textbf{Bi}nary \textbf{M}utual \textbf{I}nformation Reward). It uses binary signals to directly reduce the occurrence of false positives and incorporates a mutual information term to prevent overfitting to noisy signal sources. Our experiments demonstrate that \textsc{BiMI} significantly improves the learning efficacy of instruction-following agents trained with VLM rewards across various complex, \emph{long-horizon} tasks in embodied environments. We validate the effectiveness of \textsc{BiMI} in both \emph{Markovian} and \emph{non-Markovian} reward settings, showcasing its robustness and generalizability across different types of reward models. 


\section{Related Work}
We provide a brief overview of the related work for this study. For detailed ones, please refer to \aref{subsec:lit_rev_model_free_rl_LLM_VLM_instruction_reward}.

\noindent \textbf{VLMs as Reward Models.}\quad Despite the rising use of VLM-based reward models in embodied RL \citep{Kaplan2017BeatingAW, DBLP:conf/ijcai/GoyalNM19, DBLP:conf/corl/GoyalNM20, DBLP:conf/icml/DuWWCDA0A23,DBLP:conf/iclr/RocamondeMNPL24,DBLP:conf/icml/WangSZXBHE24, DBLP:conf/cvpr/WangHcGSWWZ19, Shridhar2021CLIPortWA, DBLP:conf/icml/MahmoudiehPD22}, many works focus on proof-of-concept results in simplified environments, sidestepping the challenges posed by noisy reward signals. Critical issues such as reward noise and its effect on policy convergence remain largely understudied. In particular, some approaches sidestep noise by leveraging internal state information or predefined action macros \citep{DBLP:conf/icml/DuWWCDA0A23, DBLP:conf/nips/WangCCLML23}. However, these methods struggle to scale to real-world applications due to their reliance on environment-specific priors, which lack generalizability in dynamic, unstructured settings.

\noindent \textbf{Pessimistic Value Estimation.}\quad The RL community has demonstrated that pessimism under uncertainty improves robustness in settings like offline RL optimization \citep{uehara2021pessimistic, bai2024pessimistic}. These methods conservatively estimate state values to avoid overoptimism in poorly explored or uncertain regions. However, such techniques are rarely applied to settings with learned VLM-based rewards, \textbf{where noise arises not from environmental uncertainty but from misaligned or overconfident reward signals}. Moreover, these methods remain largely alien to recent studies of VLM reward + RL embodied agent research. Our work bridges this gap by applying pessimism to mitigate learned reward noise, addressing a critical yet overlooked challenge in this emerging field.

\noindent \textbf{Optimistic Exploration in Online RL.}\quad
Pessimistic estimates in RL are mostly an offline phenomenon; online RL instead typically embraces optimism under uncertainty to drive efficient exploration. Classic model-based schemes such as RMax initialize all unknown state-action pairs with the maximum possible reward, thereby provably encouraging the agent to visit under-explored regions of the MDP \citep{brafman2002r}. In the model-free regime, intrinsic reward methods measure the surprise of an agent's actions, such as the prediction error of a forward model \citep{Burda2018ExplorationBR, Wan2023DEIREA}, or the novelty of state visits \citep{Zhang2021NovelDAS}. Our work investigates the integration of pessimistic, VLM-based reward signals with optimistic intrinsic rewards in an online embodied environment.

\noindent \textbf{Reward Signal from Human Preference.}\quad Recent work on RL from Human Feedback (RLHF) \citep{DBLP:conf/nips/Ouyang0JAWMZASR22} also leverage expert preference as a reward signal. However, our work differs in key aspects. Unlike RLHF's focus on textual outputs, our approach involves evaluating cross-modal similarities between visual and textual data in environments requiring long-horizon decision-making and frequent embodied interactions, a domain not typically covered by RLHF.

\noindent \textbf{Mitigating Misspecified Rewards.}\quad Prior works proposed mitigating false positive rewards by training a parallel exploration policy to escape local optima caused by misspecified rewards \citep{Ghosal2022TheEO, icml/FuZ0XB24}. In contrast, we propose a novel reward function that directly penalizes likely false positive reward signals during training. We further show that our method complements exploration-based methods and achieves superior performance when combined.

\section{Formal Problem Statement}
\label{sec:chap_3_background}

We frame our task as an MDP defined by a tuple $\mathcal M = \langle \mathcal S, \mathcal A, \mathcal P, s_0, r^{e}, \gamma \rangle$, where $\mathcal S$ represents a set of states $s \in \mathcal S$, $\mathcal A$ represents a set of actions $a \in \mathcal A$, and $\mathcal P(s' | s, a)$ describes the dynamics of the environment. $s_0 \in \mathcal S$ is the initial state and $\gamma \in (0,1)$ is the reward discount factor. $r^{e}(s, a)$ is the environmental reward function. An agent's trajectory is a sequence of states and actions $\tau_t = \langle s_{0}, a_{0}, \ldots, s_{t} \rangle$. 

In this work, we focus on a sparse reward setting, where the agent receives a $+1$ reward only when reaching goal states $S_G \subset \mathcal{S}$, and 0 otherwise, with $|S_G| \ll |\mathcal{S}|$. This sparse reward setting motivates the use of expert instructions and VLMs to provide auxiliary reward signals for more effective RL. Here, it's crucial to note that by VLM, we specifically mean encoder-based VLMs, as opposed to generative VLMs (see \aref{subsubsec:lit_rev_architecture_training_VLM_architecture}). For conciseness, we will simply use `VLM' to refer to encoder-based VLMs.

Specifically, we have a walkthrough $L$ that breaks down a complex task into $n$ expert-defined sub-tasks, each represented by a natural language instruction that is not necessarily atomic and can encompass multiple finer sub-goals ($L = \{l_1, l_2, \ldots, l_n\}$). By following these sequential instructions, the agent can navigate from the initial state towards the goal states. A dedicated \emph{non-Markovian} VLM-based reward model\footnote{The use of \emph{non-Markovian} reward functions in MDP has been well-established, particularly through the work on reward machines \citep{DBLP:conf/icml/IcarteKVM18, DBLP:conf/aaai/CorazzaGN22}. For a complete evaluation, we also tested \emph{Markovian} version of VLM reward function (Pixl2R) in our experiments.} $r^{v}(\tau_t, l_{m(t)})$ is used to assess how well the agent's trajectory at current time $t$ fulfills an instruction sentence $l_{m(t)}$. For the detailed mechanism and the pseudo-code of the existing VLM reward + RL algorithm, please refer to \aref{ireward_procedure}. Note that our proposed \textsc{BiMI} reward function uses a different mechanism to determine if instruction $l_{m(t)}$ is completed at time $t$ (explained shortly in \cref{sec:preferring-binary-signal}). The VLM provides auxiliary rewards by evaluating the semantic similarity between $\tau_t$ and $l_{m(t)}$, as illustrated in Figure~\ref{fig:illustration_of_vlm_reward}. The use of VLM-based reward model transform the original MDP into a shaped MDP \(\widetilde{\mathcal{M}} = \langle \mathcal{S}, \mathcal{A}, \mathcal{P}, s_0, \tilde{r}, \gamma \rangle\), where the reward function becomes \(\tilde{r} = r^{e}(s, a) + r^{v}(\tau_t, l_{m(t)})\).



\section{Theoretical Analysis}
\label{sec:chap_3_theoretical_analysis}
In this work, we define ``reward noise'' as errors made by the VLM-based reward model when evaluating an agent's trajectory: either spuriously rewarding a trajectory that doesn't satisfy the instruction (false positive) or failing to reward one that does (false negative). In this section, we first show that, in the absence of noise, auxiliary rewards derived from language instructions, reflecting progress toward the goal, accelerate convergence compared to relying solely on sparse environmental rewards, $r^e$. We then prove that false positive rewards impede convergence more severely than false negatives under the Heuristic guided RL (HuRL) framework \citep{DBLP:conf/nips/ChengKS21}.

\subsection{Auxiliary Rewards and Convergence}
With sparse rewards, the gradient landscape is nearly flat, making gradient-based updates indistinguishable from a random walk in parameter space (see \tpref{prop:sparse_reward_means_random_walk} in \aref{app_subsec:app_sec:sparse_reward_random_walk}).
In this problem, we are interested in $D:= \|\theta_{goal} - \theta_0\|$ which is the distance in parameter space from initial parameters $\theta_0$ to goal parameters $\theta_{goal}$. We make the following assumption:
\begin{restatable}{assumption}{decompositiontotaldistance}
    \label{assump:decomposition_total_distance}    
    Expert knowledge can guide the parameter search along a path in parameter space, defined by a sequence of $n$ intermediate parameter vectors $\theta_1, \dots, \theta_n$, where each $\theta_i$ represents the parameters after learning sub-task $l_i$. As a result, the overall distance $D$ can be decomposed into segments: $ D \approx \sum_{k=1}^{n-1} d_i$, where $d_i = \|\theta_{i+1} - \theta_{i}\|$.
\end{restatable}
We therefore prove the following proposition:
\begin{restatable}{proposition}{taskdecompositionrandomwalkconvergence}
    \label{prop:task_decomposition_random_walk_convergence}
    The sum of expected time for a series of random walks, each covering the shorter distance of an individual sub-task, is less than the expected time to travel the entire distance $D$ in one long random walk: $\frac{1}{n-1}\mathbb E[T_D] \leq \mathbb E\left[ \sum_{i=1}^{n-1} T_{d_i} \right] < \mathbb E[T_D]$.
\end{restatable}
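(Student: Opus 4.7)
The plan is to leverage the well-known $x^2$-scaling of the hitting time of a symmetric random walk, which in the present context means $\mathbb{E}[T_x] = c\, x^2$ for some constant $c$ depending only on the step-size distribution in parameter space. Given this, the proposition reduces to a purely algebraic comparison between $\sum_i d_i^2$ and $D^2 = (\sum_i d_i)^2$, which I would carry out in two short steps corresponding to the two inequalities.

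For the upper bound $\mathbb{E}[\sum_i T_{d_i}] < \mathbb{E}[T_D]$, I would expand
\begin{equation*}
\mathbb{E}[T_D] \;=\; c\Bigl(\sum_{i=1}^{n-1} d_i\Bigr)^2 \;=\; c\sum_{i=1}^{n-1} d_i^2 \;+\; 2c\!\!\sum_{1\le i<j\le n-1}\!\! d_i d_j,
\end{equation*}
and observe that the second (cross) sum is strictly positive whenever at least two $d_i$ are nonzero (which follows from $n \geq 2$ together with the assumption that the sub-tasks correspond to genuinely distinct intermediate parameter vectors). Subtracting $c\sum_i d_i^2 = \mathbb{E}[\sum_i T_{d_i}]$ yields the strict inequality.

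For the lower bound $\frac{1}{n-1}\mathbb{E}[T_D] \leq \mathbb{E}[\sum_i T_{d_i}]$, I would appeal to the Cauchy--Schwarz inequality (equivalently, the power-mean or QM--AM inequality) in the form
\begin{equation*}
\Bigl(\sum_{i=1}^{n-1} d_i\Bigr)^2 \;\leq\; (n-1)\sum_{i=1}^{n-1} d_i^2,
\end{equation*}
which after multiplying by $c$ and dividing by $n-1$ gives exactly $\mathbb{E}[T_D]/(n-1) \leq \mathbb{E}[\sum_i T_{d_i}]$, with equality iff all $d_i$ are equal.

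The main conceptual (rather than technical) obstacle is justifying the reduction to the quadratic hitting-time law: in a truly high-dimensional, non-isotropic gradient landscape the relation $\mathbb{E}[T_x] \propto x^2$ is an idealization of the near-flat sparse-reward regime alluded to via the citation to \citet{antognini2018pca}. I would therefore state this as the modelling premise inherited from that work (matching the framing above Assumption~\ref{assump:decomposition_total_distance}), after which the remainder of the argument is the short algebraic sandwich sketched above.
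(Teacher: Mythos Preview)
Your proposal is correct and matches the paper's own argument essentially step for step: the paper first establishes the $D^2$ scaling of the random-walk hitting time (its Lemma preceding the proposition), then obtains the strict upper bound from positivity of the cross terms in $(\sum_i d_i)^2$ and the lower bound from Cauchy--Schwarz with the all-ones vector, noting equality when all $d_i$ are equal. Your added remark that the strict inequality needs at least two nonzero $d_i$ is a small refinement the paper leaves implicit.
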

\noindent and thus show that subgoal-based auxiliary rewards improve the convergence of random walk optimization in a sparse reward landscape up to a factor of $(n-1)$. See \aref{app_subsec:app_sec:conv_on_sparse_landscape} for the proof. The theoretical takeaway is that auxiliary rewards aligned with intermediate subgoals can accelerate convergence of RL algorithms compared to learning from a single delayed reward at the goal state. In practice, however, this improvement depends critically on the quality of the auxiliary reward signal --- which we examine in the next section.

\subsection{Connection to Heuristic-Guided RL}
The reward noise issue can be framed within the context of HuRL by \citet{DBLP:conf/nips/ChengKS21}. HuRL mandates that auxiliary reward signals serve as heuristics, where $h: \mathcal S \rightarrow \mathbb R$ approximates the future total rewards an agent expects to get starting from state $s$ under the optimal policy $\pi^*$ (i.e., $h(s) \approx V^*(s)$). First of all, we establish the following assumption:

\begin{restatable}{assumption}{instruction_as_landmark}
    \label{assump:instruction_as_landmark}
    The expert instruction sequence $L = {l_1, l_2, \dots, l_n}$ specifies a sequence of state and action landmarks guiding the agent toward goal states $\mathcal{S}_G$. Landmarks are key states or key actions that must hold or occur in any trajectories that reach $\mathcal{S}_G$. 
\end{restatable}

Assumption~\ref{assump:instruction_as_landmark} is reasonable and reflects the quality of expert instructions, which are crucial for the agent to reach the goal efficiently.

We now present the important proposition connecting VLM-based reward models to HuRL:
\begin{restatable}{proposition}{vlmtohurlprop}
    \label{prop:link_vlm_to_hurl}
    In sparse reward settings, VLM-based reward models, as implemented in Algorithm~\ref{alg:instruction_guided_rl}, can be viewed as a heuristic function $h(s_t)$ in HuRL that estimates $V^*(s_t)$.
\end{restatable}

The proof is in \aref{app_subsec:app_sec:vlm_reward_model_heuristic}. HuRL framework allows us to analyze how false positive rewards influence on \emph{optimality gap}, defined as $V^{*}(s_0) - V^{\pi}(s_0)$, a metric that is used for theoretical analysis of the convergence speed of RL algorithms --- the smaller the gap, the faster the convergence. We demonstrate that false positive rewards increase the upper bound of this gap, whereas false negative rewards maintain this upper bound.

\subsection{False Positives and Heuristic Overestimation}
We provide the formal definition of false positive and false negative rewards from both instruction-following (IF) and heuristic perspectives:

\begin{definition}[False Positive Rewards]
    \label{def:false_positive_rewards}
    A false positive reward occurs when: 
    
    \textbf{IF Perspective:} For a trajectory $\tau_t$ that does not satisfy instruction $l_{m(t)}$, the VLM-based reward $r^v(\tau_t, l_{m(t)}) > 0$.
    
    \textbf{Heuristic Perspective:} The heuristic $h(s_t) > V^*(s_t)$, overestimating the optimal value of $s_t$.
\end{definition}

\begin{definition}[False Negative Rewards]
    \label{def:false_negative_rewards}
    A false negative reward occurs when:

    \textbf{IF Perspective:} The VLM-based reward $r^v(\tau_t, l_{m(t)}) \approx 0$ for a trajectory $\tau_t$ that \emph{does} satisfy instruction $l_{m(t)}$.

    \textbf{Heuristic Perspective:} The heuristic $h(s_t) \approx 0 < V^*(s_t)$, underestimating the optimal value of $s_t$.
\end{definition}

\begin{restatable}{proposition}{ifsameasheuristic}
    \label{prop:two_perspective_equivalent}
    False positive rewards in the IF perspective imply false positive rewards in the heuristic perspective.
\end{restatable}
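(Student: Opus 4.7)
The plan is to unfold the construction of the Markovian heuristic $h$ from the non-Markovian VLM reward (given in Appendix~\ref{app_subsec:detailed_construction_of_markov_heuristic}) and then match, term by term, the overestimation coming from an IF false positive with the resulting inequality $h(s_t) > V^*(s_t)$. The overall shape of the argument is: an IF false positive credits the agent for progress on $l_{m(t)}$ that it has not actually made, and since $h$ is built by summing exactly such credit, this surplus propagates into $h(s_t)$ without a matching increase in $V^*(s_t)$.

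First, I would invoke Assumption~\ref{assump:link_vlm_to_hurl} so that ``$\tau_t$ satisfies $l_{m(t)}$'' reduces to the concrete condition ``$\tau_t$ has visited $\mathcal{S}^{m(t)}$''. Under an IF false positive this visit has not occurred, so any optimal continuation policy from $s_t$ must still achieve $\mathcal{S}^{m(t)}$ (and every subsequent $\mathcal{S}^{m}$) before collecting terminal reward. I would use this to upper-bound $V^*(s_t)$ by the value an agent would obtain if it had to carry out the full remaining instruction sequence starting at $s_t$, i.e.\ the heuristic evaluated at a ``fresh'' trajectory that has earned no VLM credit for $l_{m(t)}$ yet.

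Next, I would substitute the explicit formula of $h(s_t)$ from the appendix into the difference $h(s_t) - V^*(s_t)$. The Markovian construction accumulates $r^v(\tau_t, l_{m(t)})$ as a proxy for past progress on $l_{m(t)}$, so when $r^v(\tau_t, l_{m(t)}) \in (0,1]$ while $\tau_t$ has not actually entered $\mathcal{S}^{m(t)}$, this contribution is unwarranted credit for work not yet done. I would isolate this term, show it is strictly positive, and check that the remaining terms are bounded above by the corresponding summands comprising the upper bound on $V^*(s_t)$ from the previous step; the strict sign of the unearned term then gives $h(s_t) > V^*(s_t)$.

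The main obstacle will be handling the non-Markovian character of $r^v$ cleanly. Because $h$ is intended as a state-only function but is defined through a trajectory-dependent VLM score, I need to be precise about the reduction performed in the appendix (for instance, whether $h(s_t)$ is evaluated at a history-augmented state or via a conditional expectation over reaching histories), and to verify that this reduction does not absorb the strict inequality by averaging the false-positive contribution against compensating negatives. Once the reduction is pinned down, the conclusion, namely that an IF false positive induces a heuristic false positive at the same $s_t$, follows directly from the positivity of the unearned VLM contribution.
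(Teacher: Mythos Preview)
Your overall intuition—that an IF false positive grants the agent unearned progress which then inflates $h$—matches the paper's own reasoning. The problem is in your concrete plan to extract this from the explicit appendix formula.

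The appendix construction is $h(s_t) = A(s_t)\,V^*(s_t)$ when $s_t \in \mathcal{S}^{m(t)}$ and $h(s_t)=0$ otherwise, with $A(s_t)\in[0,1]$ a \emph{multiplicative} ratio of products of past $r^v$ scores. It is not ``built by summing'' VLM credit, so there is no additive ``unearned term'' to isolate in $h(s_t)-V^*(s_t)$. More seriously, if the $V^*$ in that formula is read as the true optimal value, the construction yields $h(s_t)\le V^*(s_t)$ in \emph{every} case (either via $A\le 1$ or via $h=0$ when $s_t\notin\mathcal{S}^{m(t)}$, which is precisely the situation under an IF false positive). Your step ``isolate this term, show it is strictly positive'' therefore cannot succeed as stated: the $r^v$ factors in $A$ only \emph{downscale} $h$ and can never push it above the true $V^*$.

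The paper takes a different, shorter route that does not go through the appendix formula. It works directly with the approximation $h(s_t)\approx \gamma^{\widetilde{T}-t}$, where $\widetilde{T}-t$ is the \emph{assumed} number of remaining steps to the goal implied by the VLM's progress assessment, and compares it to $V^*(s_t)=\gamma^{T-t}$ with $T-t$ the \emph{true} remaining distance. An IF false positive means the VLM believes $l_{m(t)}$ is done when it is not; hence $\widetilde{T}-t$ undercounts the work left (the agent must still complete or redo $l_{m(t)}$), giving $\widetilde{T}-t < T-t$ and thus $h(s_t)=\gamma^{\widetilde{T}-t} > \gamma^{T-t}=V^*(s_t)$. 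The overestimation enters through the path-length estimate $\widetilde{T}$ that the heuristic implicitly uses, not through the $r^v$ terms you planned to isolate. If you want to rescue your approach, you would need to make explicit that the ``$V^*$'' appearing inside the heuristic construction is itself an \emph{estimate} driven by the VLM's (incorrect) progress signal; once you do that, your argument collapses to the paper's.
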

\begin{proof}
    The heuristic is approximated as $h(s_t) \approx V^*(s_t) = 1 \cdot \gamma^{\widetilde{T}-t}$ in \emph{sparse reward setting}, calculated based on an assumed optimal path length $ \widetilde{T}$. Here, $\widetilde{T}-t$ measures the remaining steps towards the goal. When a trajectory $ \tau_t $ receives a high reward but fails to fulfill instruction $ l_{m(t)} $, it corresponds to a high $ h(s_t)$, thus a low $\widetilde{T}-t$. Yet the agent must eventually reattempt $l_{m(t)}$ due to Assumption~\ref{assump:instruction_as_landmark}, $ s_t $ is actually further from the goal than estimated. Therefore, the actual distance $T - t$ to reach the goal will exceed $\widetilde{T} -t$, and $ V^*(s_t)$, calculated with the actual $ T $, will be smaller than $ h(s_t)$, calculated with $ \widetilde{T} $. This thus explains how a false positive reward from an instruction-following (or VLM) perspective leads to an overestimation of the heuristic.
\end{proof}

Researchers have advocated the benefits of pessimistic value estimation to enhance the stability of RL algorithms \citep{kumar2020conservative, jin2021pessimism}. In HuRL, \citet{DBLP:conf/nips/ChengKS21} further identify a beneficial property: when a heuristic is Bellman-consistent pessimism (i.e., $ \max_a (\mathcal B h)(s,a) \geq h(s) $), it results in a capped upper bound on the optimality gap.

\begin{definition}[Bellman-consistent Pessimistic $h$]
    Recall Bellman equation of $h$: $(\mathcal B h)(s,a) = r(s,a) +\gamma \mathbb E_{s' \sim \mathcal P(\cdot | s,a)}[h(s')]$. 
    A heuristic function $h$ is said to be \emph{Bellman-consistent pessimism} with respect to an MDP $\mathcal M$ if $\max_a (\mathcal B h)(s,a) \geq h(s)$. This condition essentially means that the heuristic $h$ never overestimates the true value of a state.
\end{definition}

\begin{restatable}{proposition}{fpviolatepessimisticnature}
    \label{prop:fp_violate_pessimistic_nature}
    Even if the heuristic remains conservative for all successor states, a single \textbf{false positive} overestimation, i.e., $h(s) > V^*(s)$, can violate the pessimistic condition by causing $max_a(\mathcal B h)(s,a) < h(s)$.
\end{restatable}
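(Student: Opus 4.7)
The plan is to exploit the monotonicity of the Bellman operator together with the Bellman optimality equation to upper bound $\max_a(\mathcal B h)(s,a)$ by $V^*(s)$ under the stated ``conservative on all successors'' hypothesis, and then contrast that bound with the overestimation assumption $h(s)>V^*(s)$. The contradiction directly yields the violation of pessimism.

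Concretely, I would first fix $s$ and assume $h(s')\le V^*(s')$ for every successor $s'$ reachable from $s$ under any action. Since $(\mathcal B h)(s,a)=r(s,a)+\gamma\,\mathbb E_{s'\sim\mathcal P(\cdot\mid s,a)}[h(s')]$ is monotone in its argument at the successors, this gives
\[
(\mathcal B h)(s,a)\;\le\;r(s,a)+\gamma\,\mathbb E_{s'\sim\mathcal P(\cdot\mid s,a)}[V^*(s')]
\]
for every $a$. Taking $\max_a$ on both sides and applying the Bellman optimality equation $V^*(s)=\max_a\bigl(r(s,a)+\gamma\,\mathbb E_{s'}[V^*(s')]\bigr)$ yields $\max_a(\mathcal B h)(s,a)\le V^*(s)$. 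Combining this with the single-state overestimation hypothesis $h(s)>V^*(s)$ immediately gives $\max_a(\mathcal B h)(s,a)\le V^*(s)<h(s)$, which is precisely the negation of the pessimistic condition at $s$. To certify the ``can'' in the proposition, I would then sketch a minimal two-state MDP (e.g., a goal-absorbing state plus one predecessor with deterministic transition) where $h$ is set to $V^*$ everywhere except at $s$, and where $h(s)$ is perturbed slightly above $V^*(s)$, making the strict inequality explicit and nontrivial.

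I do not expect any real obstacle here; the argument is essentially a one-line consequence of Bellman optimality plus operator monotonicity. The only subtlety worth flagging is to be careful about what ``conservative for all successor states'' means: it suffices to require $h(s')\le V^*(s')$ only for the successors actually reachable from $s$ in one step under some action, since those are the only states that appear inside $(\mathcal B h)(s,\cdot)$. I would also note briefly that the conclusion is tight in the sense that if $h(s)\le V^*(s)$ instead, the same chain of inequalities leaves the pessimistic condition either satisfied or undetermined, which clarifies why it is precisely the \emph{overestimation} at $s$ that triggers the violation, justifying the paper's emphasis on false positives.
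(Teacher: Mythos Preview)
Your proposal is correct and follows essentially the same route as the paper's proof: both arguments use monotonicity of the Bellman backup on the successor values to obtain $\max_a(\mathcal B h)(s,a)\le V^*(s)$ via the Bellman optimality equation, and then combine this with $h(s)>V^*(s)$ to get the violation. Your additions (the explicit two-state witness and the remark that only reachable successors matter) are nice clarifications beyond what the paper provides, but the core mechanism is identical.
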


The proof is in \aref{app_subsec:app_sec:vlm_reward_model_heuristic_pessimistic}. This implies that maintaining a pessimistic heuristic is inherently fragile, as the presence of a false positive in any state disrupts the pessimistic condition.

Our result on the impact of false positive and false negative rewards on convergence is captured as follows:
\begin{restatable}{theorem}{mainresultfalsepositiveincreasesbias}
    \label{theo:main_result_false_positive_increases_bias}
    Adapting the optimality gap analysis from \citet{DBLP:conf/nips/ChengKS21}, the gap $V^*(s_0) - V^\pi(s_0)$ in heuristic-guided RL decomposes into regret and bias terms. False negatives preserve the upper bound on bias, as they maintain the heuristic's pessimism, i.e., $h(s) \leq V^*(s)$. In contrast, false positives breaks the pessimistic property, strictly increase the bias without an upper bound, thereby leading to a larger optimality gap and slower convergence.
\end{restatable}

See Appendix~\ref{app_subsec:detailed_fp_bad_justification} for the formal version and detailed proof. In the next section, we present a case study on cosine similarity metrics, demonstrating how they contribute to false positive rewards in learned reward models.

\section{False Positives From Cosine Similarity}
\label{sec:approximation_error}

\begin{figure}
    \centering
    \includegraphics[width=0.40\textwidth]{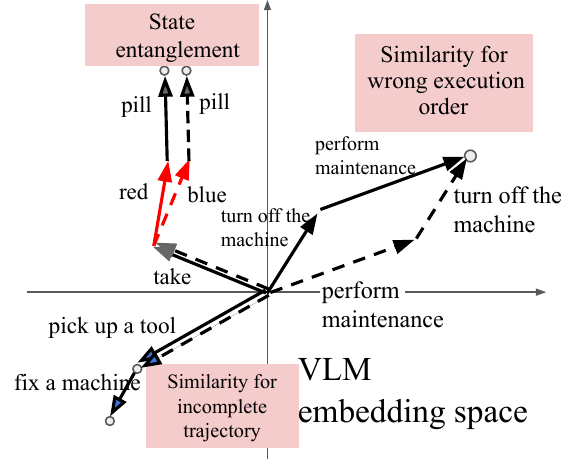}
    \caption[Schematic diagram of false positives in embedding space]{Schematic diagram of false positives in embedding space.}
    \label{fig:false_positive_issue}
\end{figure}
This section identifies and discusses two fundamental issues with cosine similarity scores in sequential decision-making contexts: \emph{state entanglement} and \emph{composition insensitivity}. The former issue, state entanglement, refers to the metric's inability to recognize trajectories that, while being cosine similar to the target instruction in the embedding space, fail to reach the goal states in $S_G$. The latter issue refers to the metric's tendency to reward trajectories even when the temporal relationships between sub-tasks are not satisfied.

\noindent \textbf{The Issue of State Entanglement}\quad State entanglement refers to the issue where the cosine similarity metric erroneously pays more attention to lexical-level similarity while lacking comprehension of the underlying state transitions. Consequently, rewards are given to trajectory-instruction pairs that are cosine similar in embedding space but in fact result in distinct state transitions. For instance, consider the significant contrast between ``take the \emph{red} pill'' and ``take the \emph{blue} pill''. Despite their lexical similarity, they lead to different states. However, the cosine similarity metric may represent them as similar due to the shared words, disregarding the critical difference in state outcomes (see \cref{fig:false_positive_issue} top left). Understanding state transitions is crucial in sequential decision-making scenarios. Otherwise, rewards may be given to trajectories that lead to unintended states, potentially prolonging the path to the goal state by necessitating corrective actions or re-attempts.

\noindent \textbf{The Issue of Composition Insensitivity}\quad One might wonder why composition insensitivity persists despite using a pointer mechanism (\aref{ireward_procedure}) that enforces sequential instruction order. The issue arises because sentences $l$ in the instruction sequence $L$ are not always \emph{atomic}: a single instruction like ``tidy the room before leaving'' may implicitly encode multiple sub-tasks. While the pointer ensures progress through high-level steps, it does not help the VLM parse the internal structure of complex, non-atomic instructions. As a result, ``composition insensitivity'' emerges, misaligning the agent's trajectory and undermining reward accuracy. Unfortunately, \textbf{even state-of-the-art natural language understanding models struggle with decomposing atomic sentences without losing critical sequential or contextual information} \citep{dziri2024faith}. Under this circumstance, composition insensitivity in cosine similarity metrics gives rise to two issues: (1) \emph{rewarding incomplete task execution} -- cosine similarity may incorrectly reward incomplete task execution by giving high scores even when critical steps are missing. We observed this phenomenon particularly in the \emph{Montezuma} environment, where RL agents tend to \emph{hack} the reward system by focusing on the easiest actions that yield rewards (e.g., moving towards a direction) rather than executing more complex, timely actions. \textbf{Eventually, this leads to an overestimation of the agent's progress towards the ultimate goal} (see \cref{fig:false_positive_issue} bottom left). (2) \emph{insensitivity to the ordering of execution} -- VLM models often fails to adequately penalize incorrect execution sequences, rather, it assigns high rewards based merely on the presence of relevant actions, disregarding their order (see \cref{fig:false_positive_issue} top right). In contrast to some advancements in language models, compact visual and sentence embeddings from multimodal VLMs remain largely insensitive to sequential information \citep{pham2020out}. When the task is order-sensitive, executing actions in the wrong sequence prolongs the path towards the goal state, as agents need to re-attempt the correct order --- yet the VLM's reward may still incentivize the agent to continue with the incorrect sequence. We empirically demonstrate these issues in Section~\ref{sec:experiments_on_existence}, showing their substantial impact on agent learning in sparse reward environments.


\section{Experiments on Reward Noise Impact}
\label{sec:experiments_on_existence}
Our experiments test the following hypothesis: \textbf{(H1)} The two issues of \emph{state entanglement} and \emph{composition insensitivity} exist, \textbf{(H2)} \emph{false positive} rewards are prevalent during training, \textbf{(H3)} VLM reward models lacking noise handling mechanisms underperform against intrinsic reward models in sparse reward environments, \textbf{(H4)} \emph{false negatives} may not be as harmful as \emph{false positives}.

\begin{figure*}[t]
    \centering
    
    \begin{minipage}{0.64\textwidth}
        \centering
        \includegraphics[width=\textwidth]{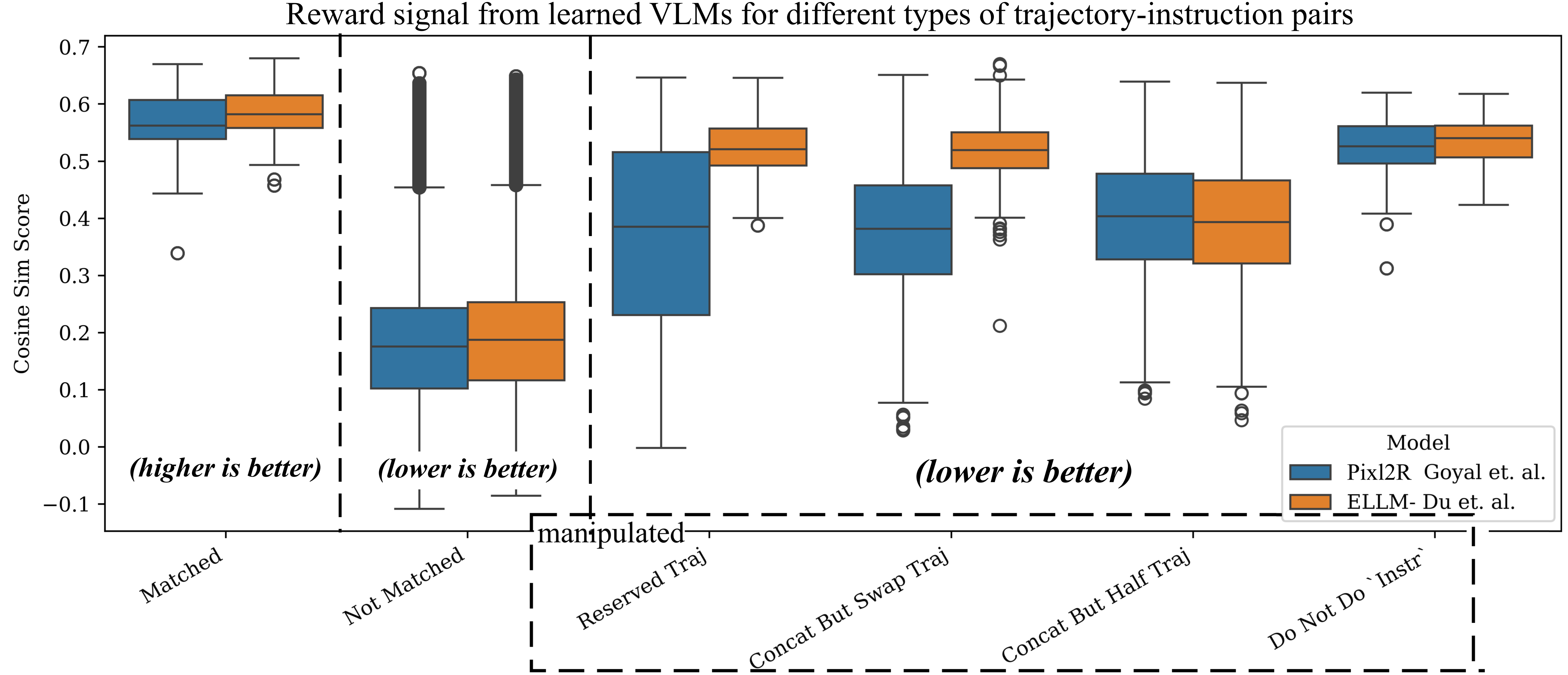}
        \caption{Learned VLM models performed badly with O.O.D. examples. They incorrectly assign high scores to manipulated pairs, which should be low as the trajectories in the manipulated pairs fail the instruction.}
        \label{fig:cosine_sim_score_offline_eval_illu}
    \end{minipage}
    \hspace{0.1cm}
    \begin{minipage}{0.34\textwidth}
        \centering
        \includegraphics[width=\textwidth]{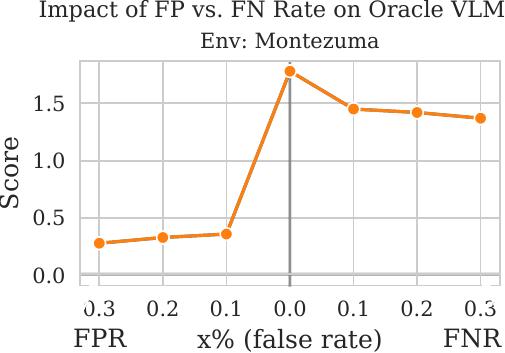}
        \caption{The false positive vs. false negative oracle model. The false positive model get a more severe drop in the final training score.}
        \label{fig:FP_FN_comparison}
    \end{minipage}

\end{figure*}

\begin{figure*}[t]
    \centering
    \begin{minipage}{0.68\textwidth}
        \centering
        \includegraphics[width=\textwidth]{Figures/chapter_3/figures/h2_show_prevalence_of_noisy_rewards.png}
        \caption{The heatmap shows rewards received at various locations, with larger circle sizes indicating higher rewards. The later figures shows the offsets between the state where rewards are given and the actual goal-reaching state. Agents are getting both issues of false positives and false negatives}
        \label{fig:show_prevalence_of_noisy_rewards}
    \end{minipage}
    \hspace{0.1cm}
    \begin{minipage}{0.29\textwidth}
        \centering
        \includegraphics[width=\textwidth]{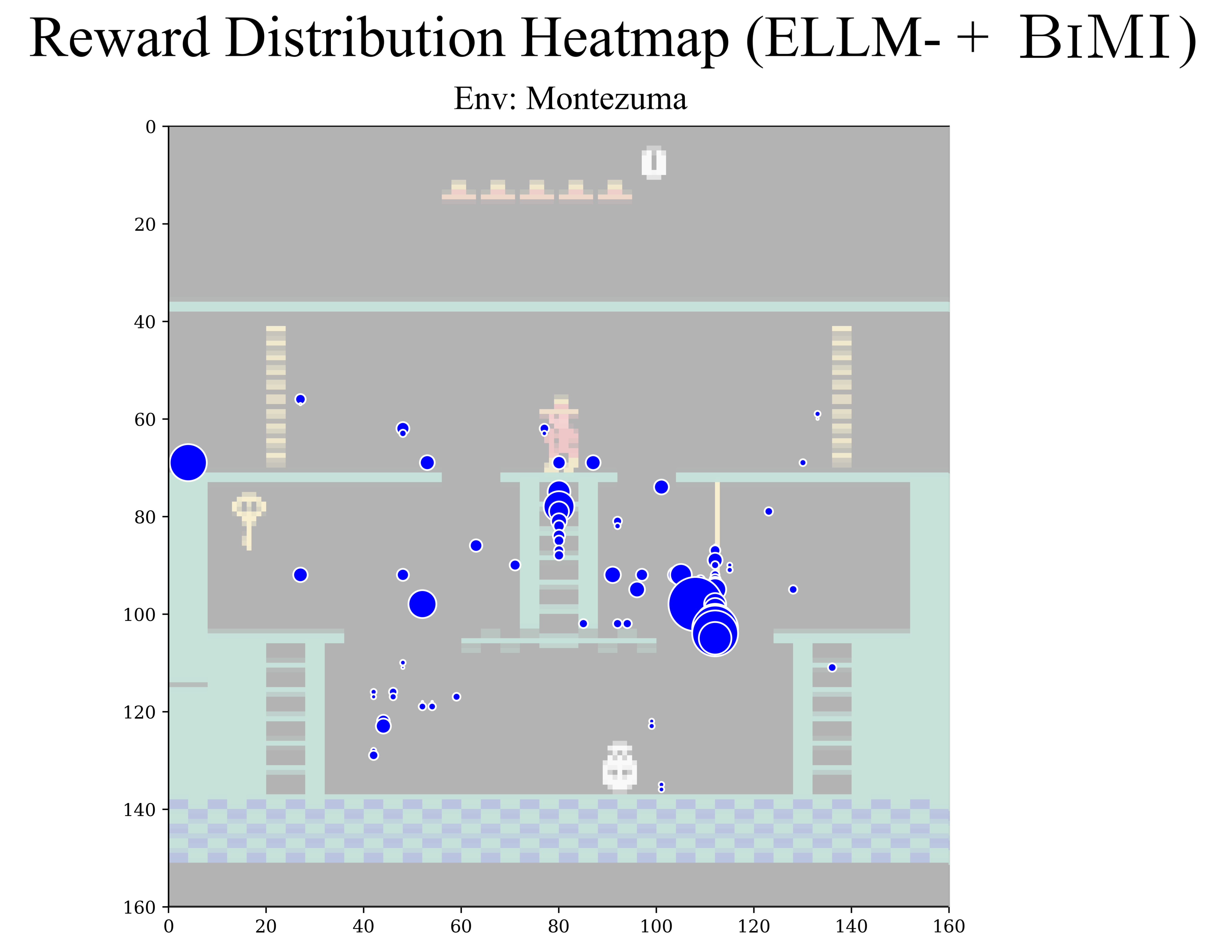}
        \caption{The ratio of false positive rewards is significantly reduced after applying \textsc{BiMI}}
        \label{fig:reward_distribution_heatmap_with_bimi}
    \end{minipage}

\end{figure*}

\textbf{Experimental Setup}\quad We evaluate these hypotheses through various challenging sparse-reward environments: (1) \emph{Crafter}, an open-ended 2D Minecraft \citep{DBLP:conf/iclr/Hafner22}; (2) \emph{Montezuma}, a classic hard adventure game in Atari \citep{DBLP:journals/jair/BellemareNVB13}; and (3) \emph{Minigrid `Go To Seq'}, a hard task involving long-horizon navigation and object interactions \citep{chevalier2018babyai}. Details on why these environments are \textbf{preferred over other test benchmarks}, along with the evaluation metrics, are provided in \aref{app_subsec:app_sec:ant_defense_test_env}. A \textbf{Markovian} and a \textbf{Non-Markovian} reward model were tested: (1) \emph{Pixl2R} by \citet{DBLP:conf/corl/GoyalNM20}, which uses only the current video frame to determine if the goal state specified in the instruction has been reached; and (2) \emph{ELLM-}, a variant of \emph{ELLM} by \citet{DBLP:conf/icml/DuWWCDA0A23} that directly uses preset expert instructions and compares them with the transition differences of the agent's trajectory. The VLM backbones used are: (1) \emph{CLIP} \citep{DBLP:conf/icml/RadfordKHRGASAM21}, pretrained by image-text pairs; and (2) \emph{X-CLIP} \citep{Ma2022XCLIPEM}, pretrained by video-text pairs. To ensure high-quality finetuning data, we used internal information from the game engine to annotate expert trajectories from expert agents. To demonstrate how noisy reward signals hinder learning, we selected a strong intrinsic reward model \emph{DEIR} \citep{Wan2023DEIREA} for comparison. It provides auxiliary rewards based on observation novelty to encourage exploration. The detailed VLM-based reward model procedure, finetuning process and training hyperparameters are in \aref{app:additional_experiment_details}.

\textbf{Reward Noise Issue}\quad To investigate \textbf{H1} --- the existence of two issues, \emph{state entanglement} and \emph{composition insensitivity}, in VLM-based reward models --- we evaluated the models' sensitivity by examining how cosine similarity scores change for manipulated trajectory-instruction pairs. These manipulations, designed to probe to robustness against noise, included the following: 

\begin{enumerate}[leftmargin=1em]
    \item \textbf{Trajectory Reversal:} We inverted the sequence of frames within each trajectory (i.e., \texttt{frames = frame[::-1]}) to assess the model's ability to detect reversed state transitions. This manipulation tests whether the model can distinguish between forward and backward progression in the state transition.
    \item \textbf{Instruction Negation:} We modified the original instructions by adding negation (e.g., changing ``do $l_k$'' to ``do not do $l_k$'' or ``avoid $l_k$''). This tests the model's sensitivity to semantic changes in the instruction that fundamentally alter the goal.
    
    \item \textbf{Concatenation and Order Swapping:} Given two trajectory-instruction pairs $(\tau_1, l_1)$ and $(\tau_2, l_2)$, we created concatenated pairs and then swapped the order in one modality. For example: a) Original concatenation: $(\tau_1 + \tau_2, l_1 + l_2)$; b) Swapped trajectory: $(\tau_2 + \tau_1, l_1 + l_2)$; c) Swapped instruction: $(\tau_1 + \tau_2, l_2 + l_1)$. This tests the model's sensitivity to the order of components in multi-step tasks.
    
    \item \textbf{Concatenation with Partial Content:} We concatenated pairs but truncated one modality. For instance: a) Truncated trajectory: $(\tau_1, l_1 + l_2)$; b) Truncated instruction: $(\tau_1 + \tau_2, l_1)$. This assesses the model's ability to detect partial mismatches in longer sequences.
    
\end{enumerate}

Our results reveal a critical flaw in the reward model: for manipulated pairs that fundamentally fail to fulfill the instruction, the model paradoxically assigns high similarity scores (see Figure~\ref{fig:cosine_sim_score_offline_eval_illu} for overall results and also Appendix~\ref{app_subsec:other_envs_prevalence_of_false_pos} for individual environments). Note that the poor performance in the negation case aligns with broader challenges in natural language processing. Recent studies \citep{hossain2022analysis, truong2023language} have highlighted that negation is central to language understanding but is not properly captured by modern language models. This limitation extends to VLMs and directly leads to false positive rewards.


\textbf{Prevalence of False Positives}\quad To address \textbf{H2}, we analyzed reward distribution heatmap from VLM-based reward models during training. The heatmap (\cref{fig:show_prevalence_of_noisy_rewards} left) revealed a concerning trend: RL agents engage in reward hacking, receiving rewards across vast areas of the environment rather than just at goal states. For instance, in \emph{Montezuma} where the goal is to grab the key and escape the room, we observed that agents received rewards even for falling off cliffs, which undoubtedly contribute to false positive rewards. For environments without fixed camera views, we calculated the step offset between the current rewarded state and the actual goal state. A positive offset indicates a false positive reward, as the reward was given before reaching the goal. Conversely, a negative offset indicates a false negative, where the agent reached the goal but the reward model failed to acknowledge it (see Figure~\ref{fig:show_prevalence_of_noisy_rewards}). Notably, besides positive offsets, we observed a large amount of negative offsets in Minigrid environments. We attribute this to Minigrid's abstract shape-based visual representations, which fall outside the VLM's training distribution.


\textbf{Impact on Learning}\quad We trained agents using learned VLM reward models and compared 

\begin{table}[t]
    \centering
    \captionof{table}[Agent performance in sparse reward environments (No noise handling)]
    {Score metric across environments (equivalent to total rewards, higher is better). $\star$ denotes baseline intrinsic reward model. VLM reward models without noise handling underperformed. All are based on PPO.}
        \label{tab:stage_1_rl_results}
    \resizebox{\columnwidth}{!}{%
    \begin{tabular}{lccccc}
    \hline
    Models        & Type          & \multicolumn{1}{c}{Monte.} & \multicolumn{1}{c}{Minigrid} & \multicolumn{1}{c}{Crafter} & \% vs. DEIR \\ \hline
    PPO           & Pure          & $0.151$             & $24.9$          & $16.8$          & \badval $-28\%$              \\
    DEIR $\star$  & Intrinsic     & $0.174$             & $55.5$           & $19.7$          & --                   \\ \hline
    Pixl2R        & VLM & $0.142$             & $12.4$           & $9.40$           & \badval $\mathbf{-49\%}$     \\
    ELLM-         & VLM & $0.150$             & $19.4$          & $10.8$          & \badval $-41\%$              \\
    Pixl2R + DEIR & VLM + intr. & $0.176$             & $17.3$           & $10.4$          & \badval $-38\%$              \\
    ELLM- + DEIR  & VLM + intr. & $0.178$             & $30.9$           & $11.8$          & \badval $-27\%$             \\ \hline
    \end{tabular}%
    }
\end{table}
their learning efficacy against intrinsic reward models. As shown in Table~\ref{tab:stage_1_rl_results}, our results confirmed \textbf{H3}: \textit{instruction-following RL agents using learned VLM reward models without noise handling consistently underperform compared to DEIR, the intrinsic reward-based RL agent.}

In our efforts to assess the impact of false positive rewards from auxiliary reward model without the interference of other factors such as domain shift, poor data quality, and errors from other issues such as the choice of multimodal architectures, we devised a \textbf{simulated} reward model that access to internal state information from the game engine. Experiments with these simulated models provide initial evidence of the differential impact of false negatives versus false positives on training outcomes, as posited by \textbf{H4}, suggesting that false positives --- particularly those tied to temporal insensitivity --- more severely degrade final scores. Due to page limits, we move the details to \aref{app_subsec:simulated_oracle_reward_model}.

To compare the effects of reward noise, we designed an oracle Pixl2R model with two variants: one simulating false negatives by randomly removing $x\%$ of true subgoal rewards, and one simulating false positives by adding small one-off rewards (0.1) to $x\%$ of irrelevant states. This design reflects a realistic failure mode of VLMs, where false positive rewards --- despite being only 1/10 the magnitude of true subgoal rewards and sometimes located far from the optimal trajectory --- can still be exploited by the agent through training iterations, a phenomenon known as \emph{reward hacking}. In contrast, false negatives merely reduce the frequency of positive feedback without misleading the agent. The results indicate that false negatives were less detrimental to agent performance than false positives (see Figure~\ref{fig:FP_FN_comparison}). This aligns with our theoretical analysis in \ttref{theo:main_result_false_positive_increases_bias}.

\section{Addressing the False Positive Issue}
\textbf{Binary Signal and Conformal Prediction}
\label{sec:preferring-binary-signal}\quad Having established the detrimental effects of false positive rewards, our solution is to strategically manage the trade-off: reducing false positives, even at the cost of a slight increase in false negatives.
First, we propose a reward function that issues a one-time binary reward only when the similarity between the agent's current trajectory and the instruction exceeds a high confidence threshold. This approach contrasts with previous methods, which provide continuous rewards whenever the reward score exceeds a predefined threshold, and continue to do so until reaching a maximum cap. Our method, however, delivers this reward only once. This approach minimizes the likelihood of accumulating false positive rewards while maintaining adherence to Proposition~\ref{prop:link_vlm_to_hurl}.

To achieve this, we introduce a thresholding mechanism using a calibration set of true positive trajectory-instruction pairs. This threshold, denoted as $\hat q$, is set to the empirical quantile of cosine similarity scores at the significance level $1 - \alpha$. Pairs whose similarity scores fall below this threshold $\hat q$ receive no reward. Conversely, pairs exceeding $\hat q$ receive a one-time $+1$ reward:

\begin{equation}
    r^v_{\textsc{Bi}}(\tau, l_k) = \mathbf{1}_{\{p(l_k\mid \tau) \geq \hat q\}}
\end{equation}

It statistically guarantees a high probability (at least $1 - \alpha$) that the true positive pairs are recognized (i.e., above the threshold) while minimizing the average number of mistakes predicting false positives \citep{sadinle2019least}. The threshold calculation is detailed in Algorithm~\ref{alg:chapt_3_threshold_calculation}.

\begin{figure*}[t]
    \centering
\begin{minipage}{0.595\textwidth}
        \captionof{table}{Model score across various environments. $\star$ is the baseline agents with a learned VLM-based reward model to compare with. \textsc{BiMI} significantly improves performance in \emph{Montezuma} and \emph{Minigrid}, while showing mixed results in \emph{Crafter}}
        \label{tab:stage_2_rl_performance}
        \resizebox{\columnwidth}{!}{%
\begin{tabular}{@{}llclclc@{}}
\toprule
Methods              & \multicolumn{1}{c}{Montezuma} & \% vs. $\star$ & \multicolumn{1}{c}{Minigrid} & \% vs. $\star$ & \multicolumn{1}{c}{Crafter} & \% vs. $\star$ \\ \midrule
Pixl2R $\star$       & $0.142 \pm 0.003$             & --                   & $12.4 \pm 2.43$           & --                   & $9.40 \pm 0.022$           & --                   \\
Pixl2R + Bi          & $0.137 \pm 0.009$             & $-3.5\%$             & $31.2 \pm 2.04$           & \goodval $+151\%$              & $10.7 \pm 0.784$          & \goodval $+14\%$               \\
\textbf{Pixl2R + BiMI}        & $0.162 \pm 0.022$             & \goodval $+14\%$               & $37.5 \pm 7.83$           & \goodval $+199\%$              & $7.95 \pm 0.351$           & \badval $-15\%$              \\ \midrule
Pixl2R + DEIR        & $0.176 \pm 0.009$             & $+23\%$               & $17.3 \pm 0.51$           & $+39\%$               & $10.4 \pm 1.015$          & $+10\%$               \\
\textbf{Pixl2R + BiMI + DEIR} & $0.267 \pm 0.016$             & \goodval $+88\%$               & $57.7 \pm 2.15$           & \goodval $+364\%$              & $11.0 \pm 0.190$          & \goodval $+17\%$               \\ \midrule
ELLM- $\star$        & $0.150 \pm 0.004$             & --                   & $19.4 \pm 10.06$          & --                   & $10.8 \pm 1.017$          & --                   \\
ELLM- + Bi           & $0.151 \pm 0.016$             & $+0.6\%$              & $29.7 \pm 1.29$           & \goodval $+53\%$               & $11.1 \pm 0.601$          & \goodval $+3.2\%$              \\
\textbf{ELLM- + BiMI}         & $0.156 \pm 0.014$             & \goodval $+4.0\%$              & $33.6 \pm 3.99$           & \goodval $+74\%$               & $9.42 \pm 0.267$           & \badval $-12\%$              \\ \midrule
ELLM + DEIR          & $0.178 \pm 0.029$             & $+20\%$               & $30.9 \pm 3.50$           & $+59\%$               & $11.8 \pm 1.152$          & $+9.5\%$               \\
\textbf{ELLM- + BiMI + DEIR}  & $0.279 \pm 0.078$             & \goodval $+86\%$             & $56.2 \pm 6.19$           & \goodval $+190\%$              & $13.1 \pm 0.393$          & \goodval $+22\%$               \\ \bottomrule
\end{tabular}%
}
    
\end{minipage}
\hspace{0.1cm}
\begin{minipage}{0.385\textwidth}
    \centering
    \includegraphics[width=0.88\linewidth]{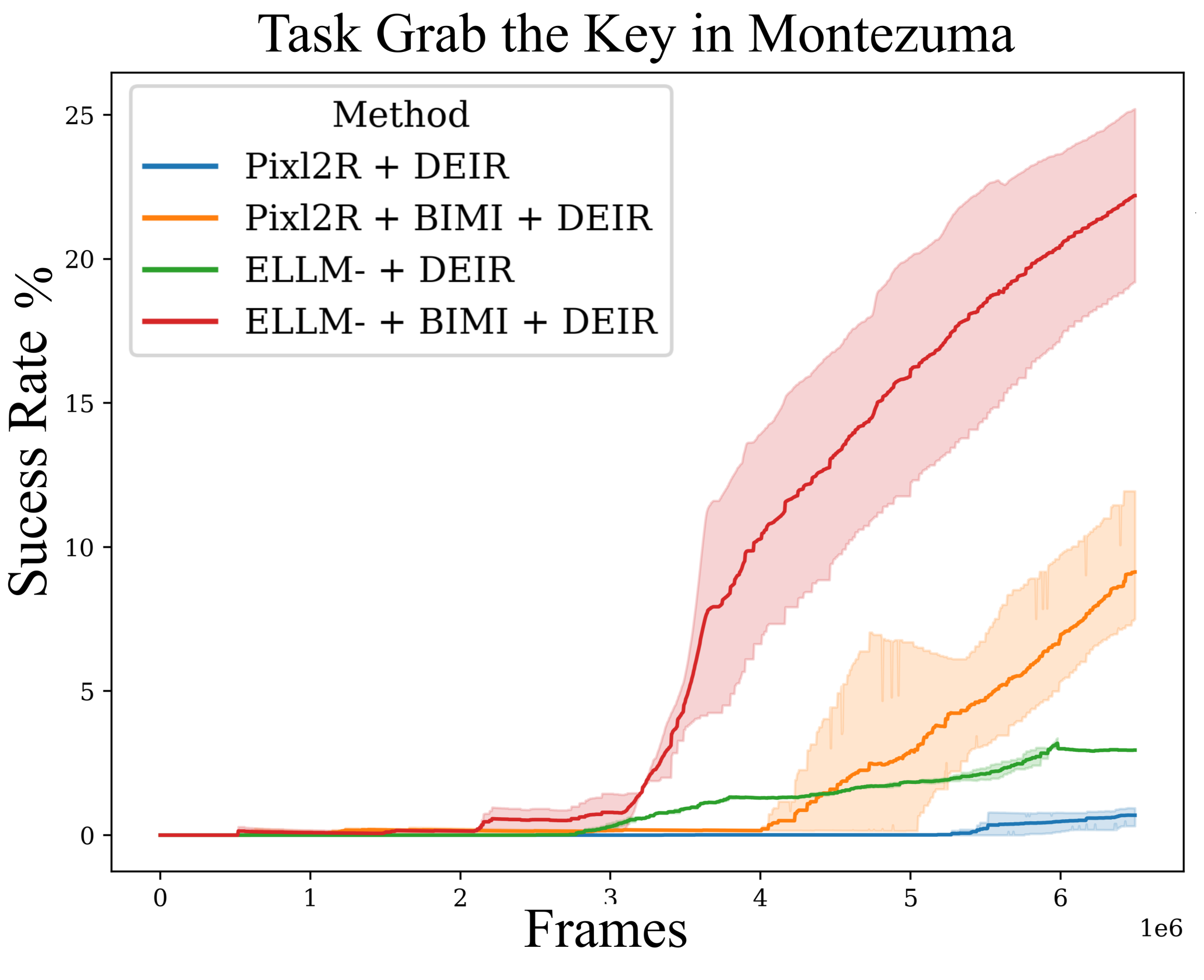}
    \caption{\textsc{BiMI} reward showed faster and higher success rates on difficult tasks in Montezuma} 
    \label{fig:montezuma_difficult_task_success_rate}
\end{minipage}
\end{figure*}

\textbf{Mutual Information Maximization}\quad Intuitively, when we observe rewards coming from a particular signal source too frequently, we tend to downplay the significance of that signal to avoid over-reliance. This intuition is effectively captured by incorporating a \emph{mutual information maximization} term into the reward function. Specifically, the updated reward function $r^v_{\textsc{MI}}(\tau, l_k)$ measures the mutual information between the agent's trajectory and the instruction. Formally, it can be expressed as:
\begin{flalign}
    \label{eq:vlm_reward_model_mi}
    r^v_{\textsc{MI}}(\tau, l_k) & = I(l_k; \tau) = D_{KL}(p(l_k, \tau) \mid\mid p(l_k)p(\tau))\nonumber \\
                & = \mathbb E_{ \tau \sim \pi_{\theta}, l_k \sim L}[\log p (l_k \mid \tau) - \log p(l_k)]
\end{flalign}
\noindent where $\tau = \langle s_{t-W}, a_{t-W}, \ldots, s_t \rangle$ is the agent's trajectory up to current time step $t$, and $W$ is the memory size of the agent for its past trajectory. $p(l_k\mid \tau)$ comes from the similarity score provided by VLMs, referring to the likelihood of the instruction $l_k$ being fulfilled by trajectory $\tau$. $p(l_k)$ is overall likelihood of encountering the instruction $l_k$ in the learning environment. Therefore, the second term in the equation serves as a regularization term that downplays the significance of the reward signal when it is too frequent. For instance, if a VLM frequently detects that the agent's actions are fulfilling the ``climbing the ladder'' instruction, even when the agent is performing unrelated tasks, any reward signal from this instruction will be downplayed. 

We treat the set of instructions $\{l_k\}$ as a finite discrete support and define an empirical frequency:
\begin{flalign}
    s(l_k) \;=\; \mathbb{E}_{\tau \sim \pi_{\theta_{-1}}}\!\Bigl[\tfrac{1}{T_\tau}\sum_{t=1}^{T_\tau}\mathbf{1}\bigl\{p(l_k\mid \tau_t)\ge\hat q\bigr\}\Bigr],
\end{flalign}

\noindent where $\tau_t$ is the agent's trajectory up to time $t$, and we count one whenever the VLM scores $\tau_t$ as fulfilling $l_k$ over the total trajectory length $T_\tau$.  We then normalize to obtain a \emph{probability mass function} (PMF) over instructions: $p(l_k) \;=\;\frac{s(l_k)}{\sum_j s(l_j)}$, and $\sum_j s(l_j)=1$. Because $\{l_k\}$ is discrete, $p(l_k)$ is a valid PMF (not a continuous density) and can be plugged directly into the usual discrete mutual information in \cref{eq:vlm_reward_model_mi}. The subscript $\theta_{-1}$ in $\pi_{\theta_{-1}}$ indicates that the trajectories are sourced from rollouts in the previous policy iteration, acknowledging the impracticality of real-time computation of $p(l_k)$ during an ongoing episode.

To enhance the stability of the training process, we adopt a linearized version of the mutual information maximization approach, as proposed by \citet{li2023internally}. Overall, \textsc{BiMI}, the proposed reward function that enhances the noise resilience of VLM-based reward models, can be expressed as follows:
\begin{equation}
    r^v_{\textsc{BiMI}}(\tau, l_k) = \max(\mathbf{1}_{\{p(l_k\mid \tau) \geq \hat q\}} - p(l_k),0)
\end{equation}
\noindent Note that the \textsc{BiMI} approach primarily mitigates false positives (FPs) rather than false negatives (FN). Both \textsc{Bi} and \textsc{MI} aim to reduce the likelihood of rewarding unintended trajectories, thus addressing the FP issue. While this conservative approach may increase false negatives by pruning some correct trajectories, this trade-off is beneficial, as demonstrated by both our empirical and theoretical results.

\section{Experiments and Results}
\label{sec:chapt_3_experiments}
We follow the same experimental setup as in Section~\ref{sec:experiments_on_existence} to ensure a fair comparison. Furthermore, we set confidence level for empirical quantile calculation to be $1 - \alpha=0.9$. We adhered to the standard requirement of limiting the training budget to 1 million frames \citep{DBLP:conf/iclr/Hafner22}.




\textbf{Montezuma} \quad Pixl2R$+$\textsc{BiMI} demonstrated 14\% performance increase compared to the original models (see Table~\ref{tab:stage_2_rl_performance}), which is slightly below our expectations. We attribute this result to \textsc{BiMI}'s intentional strategy of providing less frequent discrete rewards. While this strategy effectively reduces FPs, it does not substantially mitigate the inherent reward sparsity issue in \emph{Montezuma}. However, \textbf{we discovered a remarkable synergy between \textsc{BiMI} and intrinsic reward models.} While previous models showed no significant improvements with \emph{DEIR} (the intrinsic reward model) alone, combining \textsc{BiMI} and \emph{DEIR} led to a 65\% performance gain. The gap in collaboration effectiveness can be attributed to two factors. In the previous setup, the consistent presence of false positive rewards misled agents towards unacceptable behaviors and hindered further exploration. Now, \text{BiMI}'s less frequent but more meaningful rewards provide anchor points for the agent's learning. Meanwhile, \emph{DEIR}'s intrinsic rewards fill the gaps between these anchor points, encouraging the agent to explore efficiently in the interim.

See \cref{fig:show_prevalence_of_noisy_rewards} left and \cref{fig:reward_distribution_heatmap_with_bimi} for a \textbf{quantitative analysis}: \textsc{BiMI} rewards are now concentrated on key locations. A significant improvement is the minimal rewards given for falling off cliffs, which was a common source of FPs in the original model. Figure~\ref{fig:montezuma_difficult_task_success_rate} demonstrates a higher success rate in grabbing the key in the first room, one of the most difficult tasks in \emph{Montezuma}, highlighting the effectiveness of the proposed reward function and its synergy with intrinsic reward models in guiding agents to solve difficult sparse-reward tasks. 

\noindent\textbf{Minigrid:}\quad ELLM-$+$\textsc{BiMI} achieved a remarkable 74\% improvement in performance compared to the original models.This substantial gain is particularly noteworthy given the unique challenges presented by \emph{Minigrid}. The abstract, shape-based visuals diverge drastically from the natural images used in VLMs' pretraining, preventing the models from effectively utilizing their prior pretraining knowledge. Consequently, VLMs struggled to accurately assess similarities between Minigrid's abstract visuals and textual instructions, resulting in highly noisy reward signals. The significant improvement demonstrated by \textsc{BiMI} underscores its effectiveness in handling noisy signals, directly addressing our primary research challenge. This capability is crucial for deploying instruction-following agents in real-world, unfamiliar scenarios, where visual inputs often deviate from the VLMs' training distribution, leading to noisy reward signals.

\noindent\textbf{Crafter}:\quad We observed an intriguing pattern of results. The \textsc{Bi} component alone led to 14\% and 3.2\% improvement in performance over the original models. However, contrary to our observations in other environments, the addition of the \text{MI} component actually decreased this improvement. This unexpected outcome can be attributed to the unique nature of \emph{Crafter} task, where agents must repeatedly achieve the same subtasks (e.g., drinking water) for survival. The \textsc{MI} component, designed to discourage over-reliance on frequently occurring signals, inadvertently penalized the necessary repetition of survival-critical actions. Nevertheless, \textsc{Bi} alone still managed to improve performance over vanilla VLM-based reward models, suggesting that reducing FPs is still beneficial across all testing environments. The combination of \textsc{BiMI} with \emph{DEIR} (the intrinsic reward model) also showed promising results, indicating a productive balance between exploration (driven by \emph{DEIR}) and exploitation (guided by \textsc{BiMI} instruction reward).

\textbf{Overall Performance and Ablation Study}\quad The \textsc{BiMI} reward function yielded significant performance gains, as detailed in \cref{tab:stage_2_rl_performance}. For the Markovian \emph{Pixl2R} model, \textsc{BiMI} improved scores by 67\%, while the non-Markovian \emph{ELLM-} model improved by 22\%. These improvements, alongside synergy with intrinsic rewards, are depicted in \cref{fig:main_result_lineplot_score}, which illustrates how combining \textsc{BiMI} with intrinsic rewards (i.e., DEIR) boosts agent performance across different environments.

Our ablation study further demonstrates the distinct contributions of the binary reward (\textsc{Bi}) and mutual information (\textsc{MI}) components within the \textsc{BiMI} framework with results shown in \cref{fig:ablation_bi_bimi_lineplot_score}. The \textsc{Bi} mechanism alone drove a 36.5\% performance increase over baseline models. Excluding the \emph{Crafter} environment, the \textsc{MI} component added a further 23\% improvement over \textsc{Bi} alone. Together, these results underscore the importance of both components in mitigating false positive rewards and enhancing agent performance.

\section{Conclusion and Discussion}

Most existing work on VLM-based reward models for embodied instruction-following agents remains at the proof-of-concept stage in simplified environments \citep{DBLP:conf/icml/DuWWCDA0A23,DBLP:conf/iclr/RocamondeMNPL24,DBLP:conf/icml/WangSZXBHE24}, overlooking the challenge of noisy rewards in complex, long-horizon tasks. Our work addresses this gap through theoretical analysis grounded in \emph{heuristic admissibility} --- a classical concept from the automated planning domain but rarely applied to embodied learning agents. This perspective enables us to rigorously explain why VLM-based reward models are prone to failure without explicit noise handling. Building on this foundation, we identify two key insights: (1) false positive rewards are substantially more harmful to policy learning than false negatives, and (2) the proposed \textsc{BiMI} reward function, which incorporates pessimistic rewarding, effectively mitigates this problem. Our findings are supported across three challenging embodied tasks, spanning Markovian and non-Markovian reward models.

\textbf{Limitations} \quad We primarily focused on linear sequences of language instructions, excluding more complex cases. Future research should investigate conditional and ambiguous instructions, which likely introduce additional challenges for VLM-based reward models. There is also a gap in providing a rigorous theoretical foundation for why our theoretical findings extend to non-Markovian reward models.
However, with advancements in deep RL, the distinction between non-Markovian and Markovian models has become increasingly blurred \citep{DBLP:conf/aaaifs/HausknechtS15}.

\bibliography{aaai25}

\begin{thebibliography}{53}
\providecommand{\natexlab}[1]{#1}

\bibitem[{Agarwal et~al.(2019)Agarwal, Kakade, Lee, and Mahajan}]{Agarwal2019OnTT}
Agarwal, A.; Kakade, S.~M.; Lee, J.; and Mahajan, G. 2019.
\newblock On the Theory of Policy Gradient Methods: Optimality, Approximation, and Distribution Shift.
\newblock \emph{J. Mach. Learn. Res.}, 22: 98:1--98:76.

\bibitem[{Bai et~al.(2024)}]{bai2024pessimistic}
Bai, C.; et~al. 2024.
\newblock Pessimistic value iteration for multi-task data sharing in Offline Reinforcement Learning.
\newblock \emph{Artificial Intelligence}, 326: 104048.

\bibitem[{Bellemare et~al.(2013)Bellemare, Naddaf, Veness, and Bowling}]{DBLP:journals/jair/BellemareNVB13}
Bellemare, M.~G.; Naddaf, Y.; Veness, J.; and Bowling, M. 2013.
\newblock The Arcade Learning Environment: An Evaluation Platform for General Agents.
\newblock \emph{J. Artif. Intell. Res.}, 47: 253--279.

\bibitem[{Brafman and Tennenholtz(2002)}]{brafman2002r}
Brafman, R.~I.; and Tennenholtz, M. 2002.
\newblock R-max-a general polynomial time algorithm for near-optimal reinforcement learning.
\newblock \emph{Journal of Machine Learning Research}, 3(Oct): 213--231.

\bibitem[{Brockman et~al.(2016)Brockman, Cheung, Pettersson, Schneider, Schulman, Tang, and Zaremba}]{brockman2016openaigym}
Brockman, G.; Cheung, V.; Pettersson, L.; Schneider, J.; Schulman, J.; Tang, J.; and Zaremba, W. 2016.
\newblock OpenAI Gym.
\newblock arXiv:1606.01540.

\bibitem[{Burda et~al.(2018)Burda, Edwards, Storkey, and Klimov}]{Burda2018ExplorationBR}
Burda, Y.; Edwards, H.; Storkey, A.; and Klimov, O. 2018.
\newblock Exploration by random network distillation.
\newblock In \emph{International Conference on Learning Representations}.

\bibitem[{Chan et~al.(2023)Chan, Mnih, Behbahani, Laskin, Wang, Pardo, Gazeau, Sahni, Horgan, Baumli, Schroecker, Spencer, Steigerwald, Quan, Comanici, Flennerhag, Neitz, Zhang, Schaul, Singh, Lyle, Rockt{\"a}schel, Parker-Holder, and Holsheimer}]{chan2023visionlanguage}
Chan, H.; Mnih, V.; Behbahani, F.; Laskin, M.; Wang, L.; Pardo, F.; Gazeau, M.; Sahni, H.; Horgan, D.; Baumli, K.; Schroecker, Y.; Spencer, S.; Steigerwald, R.; Quan, J.; Comanici, G.; Flennerhag, S.; Neitz, A.; Zhang, L.~M.; Schaul, T.; Singh, S.; Lyle, C.; Rockt{\"a}schel, T.; Parker-Holder, J.; and Holsheimer, K. 2023.
\newblock Vision-Language Models as a Source of Rewards.
\newblock In \emph{Second Agent Learning in Open-Endedness Workshop}.

\bibitem[{Chen, Kochenderfer, and Spaan(2018)}]{chen2018improving}
Chen, Y.-C.; Kochenderfer, M.~J.; and Spaan, M.~T. 2018.
\newblock Improving offline value-function approximations for POMDPs by reducing discount factors.
\newblock In \emph{2018 IEEE/RSJ International Conference on Intelligent Robots and Systems (IROS)}, 3531--3536. IEEE.

\bibitem[{Cheng et~al.(2021)}]{DBLP:conf/nips/ChengKS21}
Cheng, C.; et~al. 2021.
\newblock Heuristic-Guided Reinforcement Learning.
\newblock In \emph{NeurIPS}, 13550--13563.

\bibitem[{Chevalier-Boisvert et~al.(2018)Chevalier-Boisvert, Bahdanau, Lahlou, Willems, Saharia, Nguyen, and Bengio}]{chevalier2018babyai}
Chevalier-Boisvert, M.; Bahdanau, D.; Lahlou, S.; Willems, L.; Saharia, C.; Nguyen, T.~H.; and Bengio, Y. 2018.
\newblock BabyAI: A Platform to Study the Sample Efficiency of Grounded Language Learning.
\newblock In \emph{International Conference on Learning Representations}.

\bibitem[{Chevalier{-}Boisvert et~al.(2023)Chevalier{-}Boisvert, Dai, Towers, Perez{-}Vicente, Willems, Lahlou, Pal, Castro, and Terry}]{DBLP:conf/nips/Chevalier-Boisvert23}
Chevalier{-}Boisvert, M.; Dai, B.; Towers, M.; Perez{-}Vicente, R.; Willems, L.; Lahlou, S.; Pal, S.; Castro, P.~S.; and Terry, J.~K. 2023.
\newblock Minigrid {\&} Miniworld: Modular {\&} Customizable Reinforcement Learning Environments for Goal-Oriented Tasks.
\newblock In \emph{NeurIPS}.

\bibitem[{Corazza, Gavran, and Neider(2022)}]{DBLP:conf/aaai/CorazzaGN22}
Corazza, J.; Gavran, I.; and Neider, D. 2022.
\newblock Reinforcement Learning with Stochastic Reward Machines.
\newblock In \emph{{AAAI}}, 6429--6436. {AAAI} Press.

\bibitem[{Du et~al.(2023)Du, Watkins, Wang, Colas, Darrell, Abbeel, Gupta, and Andreas}]{DBLP:conf/icml/DuWWCDA0A23}
Du, Y.; Watkins, O.; Wang, Z.; Colas, C.; Darrell, T.; Abbeel, P.; Gupta, A.; and Andreas, J. 2023.
\newblock Guiding Pretraining in Reinforcement Learning with Large Language Models.
\newblock In \emph{{ICML}}, volume 202 of \emph{Proceedings of Machine Learning Research}, 8657--8677. {PMLR}.

\bibitem[{Dziri et~al.(2024)}]{dziri2024faith}
Dziri, N.; et~al. 2024.
\newblock Faith and fate: Limits of transformers on compositionality.
\newblock \emph{Advances in Neural Information Processing Systems}, 36.

\bibitem[{Frome et~al.(2013)Frome, Corrado, Shlens, Bengio, Dean, Ranzato, and Mikolov}]{DBLP:conf/nips/FromeCSBDRM13}
Frome, A.; Corrado, G.~S.; Shlens, J.; Bengio, S.; Dean, J.; Ranzato, M.; and Mikolov, T. 2013.
\newblock DeViSE: {A} Deep Visual-Semantic Embedding Model.
\newblock In \emph{{NIPS}}, 2121--2129.

\bibitem[{Fu et~al.(2024)Fu, Zhang, Wu, Xu, and Boulet}]{icml/FuZ0XB24}
Fu, Y.; Zhang, H.; Wu, D.; Xu, W.; and Boulet, B. 2024.
\newblock FuRL: Visual-Language Models as Fuzzy Rewards for Reinforcement Learning.
\newblock In \emph{{ICML}}. OpenReview.net.

\bibitem[{Ghosal et~al.(2022)Ghosal, Zurek, Brown, and Dragan}]{Ghosal2022TheEO}
Ghosal, G.~R.; Zurek, M.; Brown, D.~S.; and Dragan, A.~D. 2022.
\newblock The Effect of Modeling Human Rationality Level on Learning Rewards from Multiple Feedback Types.
\newblock In \emph{AAAI Conference on Artificial Intelligence}.

\bibitem[{Glorot and Bengio(2010)}]{DBLP:journals/jmlr/GlorotB10}
Glorot, X.; and Bengio, Y. 2010.
\newblock Understanding the difficulty of training deep feedforward neural networks.
\newblock In \emph{{AISTATS}}, volume~9 of \emph{{JMLR} Proceedings}, 249--256. JMLR.org.

\bibitem[{Goyal, Niekum, and Mooney(2019)}]{DBLP:conf/ijcai/GoyalNM19}
Goyal, P.; Niekum, S.; and Mooney, R.~J. 2019.
\newblock Using Natural Language for Reward Shaping in Reinforcement Learning.
\newblock In \emph{{IJCAI}}, 2385--2391. ijcai.org.

\bibitem[{Goyal, Niekum, and Mooney(2020)}]{DBLP:conf/corl/GoyalNM20}
Goyal, P.; Niekum, S.; and Mooney, R.~J. 2020.
\newblock PixL2R: Guiding Reinforcement Learning Using Natural Language by Mapping Pixels to Rewards.
\newblock In \emph{CoRL}, volume 155 of \emph{Proceedings of Machine Learning Research}, 485--497. {PMLR}.

\bibitem[{Hafner(2022)}]{DBLP:conf/iclr/Hafner22}
Hafner, D. 2022.
\newblock Benchmarking the Spectrum of Agent Capabilities.
\newblock In \emph{{ICLR}}. OpenReview.net.

\bibitem[{Hausknecht and Stone(2015)}]{DBLP:conf/aaaifs/HausknechtS15}
Hausknecht, M.~J.; and Stone, P. 2015.
\newblock Deep Recurrent Q-Learning for Partially Observable MDPs.
\newblock In \emph{{AAAI} Fall Symposia}, 29--37. {AAAI} Press.

\bibitem[{Hossain et~al.(2022)}]{hossain2022analysis}
Hossain, M.~M.; et~al. 2022.
\newblock An analysis of negation in natural language understanding corpora.
\newblock \emph{arXiv preprint arXiv:2203.08929}.

\bibitem[{Icarte et~al.(2018)Icarte, Klassen, Valenzano, and McIlraith}]{DBLP:conf/icml/IcarteKVM18}
Icarte, R.~T.; Klassen, T.~Q.; Valenzano, R.~A.; and McIlraith, S.~A. 2018.
\newblock Using Reward Machines for High-Level Task Specification and Decomposition in Reinforcement Learning.
\newblock In \emph{{ICML}}, volume~80 of \emph{Proceedings of Machine Learning Research}, 2112--2121. {PMLR}.

\bibitem[{Jin, Yang, and Wang(2021)}]{jin2021pessimism}
Jin, Y.; Yang, Z.; and Wang, Z. 2021.
\newblock Is pessimism provably efficient for offline rl?
\newblock In \emph{International Conference on Machine Learning}, 5084--5096. PMLR.

\bibitem[{Kaplan et~al.(2017)}]{Kaplan2017BeatingAW}
Kaplan, R.; et~al. 2017.
\newblock Beating Atari with Natural Language Guided Reinforcement Learning.
\newblock \emph{ArXiv}, abs/1704.05539.

\bibitem[{Kumar et~al.(2020)Kumar, Zhou, Tucker, and Levine}]{kumar2020conservative}
Kumar, A.; Zhou, A.; Tucker, G.; and Levine, S. 2020.
\newblock Conservative q-learning for offline reinforcement learning.
\newblock \emph{Advances in Neural Information Processing Systems}, 33: 1179--1191.

\bibitem[{Li et~al.(2023)Li, Zhao, Lee, Weber, and Wermter}]{li2023internally}
Li, M.; Zhao, X.; Lee, J.~H.; Weber, C.; and Wermter, S. 2023.
\newblock Internally rewarded reinforcement learning.
\newblock In \emph{International Conference on Machine Learning}, 20556--20574. PMLR.

\bibitem[{Lin et~al.(2020)Lin, Wang, Olkin, and Held}]{corl2020softgym}
Lin, X.; Wang, Y.; Olkin, J.; and Held, D. 2020.
\newblock SoftGym: Benchmarking Deep Reinforcement Learning for Deformable Object Manipulation.
\newblock In \emph{Conference on Robot Learning}.

\bibitem[{Liu et~al.(2023)Liu, Li, Wu, and Lee}]{DBLP:conf/nips/LiuLWL23a}
Liu, H.; Li, C.; Wu, Q.; and Lee, Y.~J. 2023.
\newblock Visual Instruction Tuning.
\newblock In \emph{NeurIPS}.

\bibitem[{Ma et~al.(2022)Ma, Xu, Sun, Yan, Zhang, and Ji}]{Ma2022XCLIPEM}
Ma, Y.; Xu, G.; Sun, X.; Yan, M.; Zhang, J.; and Ji, R. 2022.
\newblock X-CLIP: End-to-End Multi-grained Contrastive Learning for Video-Text Retrieval.
\newblock \emph{Proceedings of the 30th ACM International Conference on Multimedia}.

\bibitem[{Mahmoudieh, Pathak, and Darrell(2022)}]{DBLP:conf/icml/MahmoudiehPD22}
Mahmoudieh, P.; Pathak, D.; and Darrell, T. 2022.
\newblock Zero-Shot Reward Specification via Grounded Natural Language.
\newblock In \emph{{ICML}}, volume 162 of \emph{Proceedings of Machine Learning Research}, 14743--14752. {PMLR}.

\bibitem[{Mnih et~al.(2015)Mnih, Kavukcuoglu, Silver, Rusu, Veness, Bellemare, Graves, Riedmiller, Fidjeland, Ostrovski, Petersen, Beattie, Sadik, Antonoglou, King, Kumaran, Wierstra, Legg, and Hassabis}]{nature/MnihKSRVBGRFOPB15}
Mnih, V.; Kavukcuoglu, K.; Silver, D.; Rusu, A.~A.; Veness, J.; Bellemare, M.~G.; Graves, A.; Riedmiller, M.~A.; Fidjeland, A.; Ostrovski, G.; Petersen, S.; Beattie, C.; Sadik, A.; Antonoglou, I.; King, H.; Kumaran, D.; Wierstra, D.; Legg, S.; and Hassabis, D. 2015.
\newblock Human-level control through deep reinforcement learning.
\newblock \emph{Nat.}, 518(7540): 529--533.

\bibitem[{Moon et~al.(2023)Moon, Yeom, Park, and Song}]{moon2023ad}
Moon, S.; Yeom, J.; Park, B.; and Song, H.~O. 2023.
\newblock Discovering Hierarchical Achievements in Reinforcement Learning via Contrastive Learning.
\newblock In \emph{Neural Information Processing Systems}.

\bibitem[{Ng, Harada, and Russell(1999)}]{icml/NgHR99}
Ng, A.~Y.; Harada, D.; and Russell, S. 1999.
\newblock Policy Invariance Under Reward Transformations: Theory and Application to Reward Shaping.
\newblock In \emph{{ICML}}, 278--287. Morgan Kaufmann.

\bibitem[{Ouyang et~al.(2022)Ouyang, Wu, Jiang, Almeida, Wainwright, Mishkin, Zhang, Agarwal, Slama, Ray, Schulman, Hilton, Kelton, Miller, Simens, Askell, Welinder, Christiano, Leike, and Lowe}]{DBLP:conf/nips/Ouyang0JAWMZASR22}
Ouyang, L.; Wu, J.; Jiang, X.; Almeida, D.; Wainwright, C.~L.; Mishkin, P.; Zhang, C.; Agarwal, S.; Slama, K.; Ray, A.; Schulman, J.; Hilton, J.; Kelton, F.; Miller, L.; Simens, M.; Askell, A.; Welinder, P.; Christiano, P.~F.; Leike, J.; and Lowe, R. 2022.
\newblock Training language models to follow instructions with human feedback.
\newblock In \emph{NeurIPS}.

\bibitem[{Pham et~al.(2020)Pham, Bui, Mai, and Nguyen}]{pham2020out}
Pham, T.~M.; Bui, T.; Mai, L.; and Nguyen, A. 2020.
\newblock Out of Order: How important is the sequential order of words in a sentence in Natural Language Understanding tasks?
\newblock \emph{arXiv preprint arXiv:2012.15180}.

\bibitem[{Radford et~al.(2021)Radford, Kim, Hallacy, Ramesh, Goh, Agarwal, Sastry, Askell, Mishkin, Clark, Krueger, and Sutskever}]{DBLP:conf/icml/RadfordKHRGASAM21}
Radford, A.; Kim, J.~W.; Hallacy, C.; Ramesh, A.; Goh, G.; Agarwal, S.; Sastry, G.; Askell, A.; Mishkin, P.; Clark, J.; Krueger, G.; and Sutskever, I. 2021.
\newblock Learning Transferable Visual Models From Natural Language Supervision.
\newblock In \emph{{ICML}}, volume 139 of \emph{Proceedings of Machine Learning Research}, 8748--8763. {PMLR}.

\bibitem[{Rocamonde et~al.(2024)Rocamonde, Montesinos, Nava, Perez, and Lindner}]{DBLP:conf/iclr/RocamondeMNPL24}
Rocamonde, J.; Montesinos, V.; Nava, E.; Perez, E.; and Lindner, D. 2024.
\newblock Vision-Language Models are Zero-Shot Reward Models for Reinforcement Learning.
\newblock In \emph{{ICLR}}. OpenReview.net.

\bibitem[{Sadinle et~al.(2019)}]{sadinle2019least}
Sadinle, M.; et~al. 2019.
\newblock Least ambiguous set-valued classifiers with bounded error levels.
\newblock \emph{Journal of the American Statistical Association}, 114(525): 223--234.

\bibitem[{Schulman et~al.(2017)Schulman, Wolski, Dhariwal, Radford, and Klimov}]{DBLP:journals/corr/SchulmanWDRK17}
Schulman, J.; Wolski, F.; Dhariwal, P.; Radford, A.; and Klimov, O. 2017.
\newblock Proximal Policy Optimization Algorithms.
\newblock \emph{CoRR}, abs/1707.06347.

\bibitem[{Shridhar et~al.(2022)}]{Shridhar2021CLIPortWA}
Shridhar, M.; et~al. 2022.
\newblock Cliport: What and where pathways for robotic manipulation.
\newblock In \emph{Conference on robot learning}, 894--906. PMLR.

\bibitem[{Todorov, Erez, and Tassa(2012)}]{todorov2012mujoco}
Todorov, E.; Erez, T.; and Tassa, Y. 2012.
\newblock Mujoco: A physics engine for model-based control.
\newblock In \emph{2012 IEEE/RSJ international conference on intelligent robots and systems}, 5026--5033. IEEE.

\bibitem[{Truong et~al.(2023)Truong, Baldwin, Verspoor, and Cohn}]{truong2023language}
Truong, T.~H.; Baldwin, T.; Verspoor, K.; and Cohn, T. 2023.
\newblock Language models are not naysayers: an analysis of language models on negation benchmarks.
\newblock \emph{arXiv preprint arXiv:2306.08189}.

\bibitem[{Tschannen et~al.(2025)Tschannen, Gritsenko, Wang, Naeem, Alabdulmohsin, Parthasarathy, Evans, Beyer, Xia, Mustafa, Hénaff, Harmsen, Steiner, and Zhai}]{tschannen2025siglip2multilingualvisionlanguage}
Tschannen, M.; Gritsenko, A.; Wang, X.; Naeem, M.~F.; Alabdulmohsin, I.; Parthasarathy, N.; Evans, T.; Beyer, L.; Xia, Y.; Mustafa, B.; Hénaff, O.; Harmsen, J.; Steiner, A.; and Zhai, X. 2025.
\newblock SigLIP 2: Multilingual Vision-Language Encoders with Improved Semantic Understanding, Localization, and Dense Features.
\newblock arXiv:2502.14786.

\bibitem[{Uehara and Sun(2021)}]{uehara2021pessimistic}
Uehara, M.; and Sun, W. 2021.
\newblock Pessimistic model-based offline reinforcement learning under partial coverage.
\newblock \emph{arXiv preprint arXiv:2107.06226}.

\bibitem[{Wan et~al.(2023)Wan, Tang, Tian, and Kaneko}]{Wan2023DEIREA}
Wan, S.; Tang, Y.; Tian, Y.; and Kaneko, T. 2023.
\newblock DEIR: Efficient and Robust Exploration through Discriminative-Model-Based Episodic Intrinsic Rewards.
\newblock In \emph{International Joint Conference on Artificial Intelligence}.

\bibitem[{Wang et~al.(2019)Wang, Huang, Celikyilmaz, Gao, Shen, Wang, Wang, and Zhang}]{DBLP:conf/cvpr/WangHcGSWWZ19}
Wang, X.; Huang, Q.; Celikyilmaz, A.; Gao, J.; Shen, D.; Wang, Y.; Wang, W.~Y.; and Zhang, L. 2019.
\newblock Reinforced Cross-Modal Matching and Self-Supervised Imitation Learning for Vision-Language Navigation.
\newblock In \emph{{CVPR}}, 6629--6638. Computer Vision Foundation / {IEEE}.

\bibitem[{Wang et~al.(2024)Wang, Sun, Zhang, Xian, Biyik, Held, and Erickson}]{DBLP:conf/icml/WangSZXBHE24}
Wang, Y.; Sun, Z.; Zhang, J.; Xian, Z.; Biyik, E.; Held, D.; and Erickson, Z. 2024.
\newblock {RL-VLM-F:} Reinforcement Learning from Vision Language Foundation Model Feedback.
\newblock In \emph{{ICML}}. OpenReview.net.

\bibitem[{Wang et~al.(2023)Wang, Cai, Chen, Liu, Ma, and Liang}]{DBLP:conf/nips/WangCCLML23}
Wang, Z.; Cai, S.; Chen, G.; Liu, A.; Ma, X.; and Liang, Y. 2023.
\newblock Describe, Explain, Plan and Select: Interactive Planning with LLMs Enables Open-World Multi-Task Agents.
\newblock In \emph{NeurIPS}.

\bibitem[{Yu et~al.(2019)Yu, Quillen, He, Julian, Hausman, Finn, and Levine}]{DBLP:conf/corl/YuQHJHFL19}
Yu, T.; Quillen, D.; He, Z.; Julian, R.; Hausman, K.; Finn, C.; and Levine, S. 2019.
\newblock Meta-World: {A} Benchmark and Evaluation for Multi-Task and Meta Reinforcement Learning.
\newblock In \emph{CoRL}, volume 100 of \emph{Proceedings of Machine Learning Research}, 1094--1100. {PMLR}.

\bibitem[{Yuan et~al.(2025)Yuan, Li, Zhang, Huang, Xu, Ji, Tong, Qi, Feng, and Yang}]{yuan2025sa2vamarryingsam2llava}
Yuan, H.; Li, X.; Zhang, T.; Huang, Z.; Xu, S.; Ji, S.; Tong, Y.; Qi, L.; Feng, J.; and Yang, M.-H. 2025.
\newblock Sa2VA: Marrying SAM2 with LLaVA for Dense Grounded Understanding of Images and Videos.
\newblock arXiv:2501.04001.

\bibitem[{Zhang et~al.(2021)Zhang, Xu, Wang, Wu, Keutzer, Gonzalez, and Tian}]{Zhang2021NovelDAS}
Zhang, T.; Xu, H.; Wang, X.; Wu, Y.; Keutzer, K.; Gonzalez, J.~E.; and Tian, Y. 2021.
\newblock NovelD: A Simple yet Effective Exploration Criterion.
\newblock In \emph{Neural Information Processing Systems}.

\end{thebibliography}

\newpage
\onecolumn
\appendix
\setlist[enumerate]{leftmargin=1em}
\setlist[itemize]{leftmargin=1em}

\begin{center}
    \textbf{\huge Technical Appendix}
\end{center}

\section*{Appendix Table of Contents}
\begin{enumerate}
    \item \textbf{Anticipatory defense on the choice of test environments and metrics} (\aref{app_subsec:app_sec:ant_defense_test_env}) \\
    Justification for chosen test environments and evaluation metrics.
    
    \item \textbf{Detailed Related Work} (\aref{subsec:lit_rev_model_free_rl_LLM_VLM_instruction_reward}) \\
    An expanded discussion of related research in the domain and research gaps.
    
    \item \textbf{The Complete Procedure of the VLM-based Reward Model} (\aref{ireward_procedure}) \\
    The complete procedure of involving VLM-based reward model in the RL training loop.
    
    \item \textbf{Convergence time on a sparse-reward landscape} (\aref{app_subsec:app_sec:conv_on_sparse_landscape}) \\
    Proof of convergence time on a sparse-reward landscape.
    
    \item \textbf{VLM-based reward model is a heuristic function} (\aref{app_subsec:app_sec:vlm_reward_model_heuristic}) \\
    Proof that VLM-based reward model is a heuristic function.
    
    \item \textbf{False Positive and the Violation of Pessimistic Property of Heuristics} (\aref{app_subsec:app_sec:vlm_reward_model_heuristic_pessimistic}) \\
    Proof that false positive rewards violate the pessimistic property of heuristics.
    
    \item \textbf{False Positive gives unbounded bias in the \emph{optimality gap}} (\aref{app_subsec:detailed_fp_bad_justification}) \\
    Proof that false positive rewards give unbounded bias in the \emph{optimality gap}.
    
    \item \textbf{Additional Experimental Setup Details} (\aref{app:additional_experiment_details}) \\
    Supplementary information about experimental configurations.
    
    \item \textbf{Details of Showing the Prevalence of False Positives in VLM Cosine Similarity Scores} (\aref{app_subsec:other_envs_prevalence_of_false_pos}) \\
    Extra figures for stage 1 experiments. 
    
    \item \textbf{Extra Experiment on Simulated Oracle Reward Model to demonstrate \textbf{H3} and \textbf{H4}} (\aref{app_subsec:simulated_oracle_reward_model}) \\
    Supporting experiments for hypothesis H3 and H4 using simulated oracle reward models.

    \item \textbf{Algorithm for Empirical Quantile Calculation} (\aref{app_subsec:empirical_quantile_calculation}) \\
    Algorithm for calculating empirical quantile of cosine similarity scores.

    \item \textbf{Ablation Study Lineplots for BiMI reward} (\aref{app_subsec:ablation_study_lineplots_bimi}) \\
    Lineplots for ablation study of \textsc{BiMI} reward function.
    
    \item \textbf{Encoder-based Vision-Language Models Explanation} (\aref{subsubsec:lit_rev_architecture_training_VLM_architecture}) \\
    A brief overview of two types of VLMs, encoder-based and generative VLMs.

    \item \textbf{Sparse Reward and Random Walk} (\aref{app_subsec:app_sec:sparse_reward_random_walk}) \\
    Proposition and proof of the relationship between RL convergence in sparse reward setting and random walk.
\end{enumerate}

\section{Anticipatory defense on the choice of test environments and metrics}
\label{app_subsec:app_sec:ant_defense_test_env}
As discussed in \aref{subsec:lit_rev_model_free_rl_LLM_VLM_instruction_reward}, we need test environments that clearly demonstrate the negative impact of noisy reward signals from VLM-based reward models. This requires complex state spaces (both large and diverse) where the approximation errors of VLMs become evident --- specifically, where they struggle to provide accurate alignment scores between an agent's trajectory and given instructions due to the complexity of visual observations.  Additionally, we need longer-horizon tasks where the impact of noisy reward signals can accumulate over time. In contrast, many embodied environments used in previous studies are not suitable due to their short-horizon tasks and simple state spaces. \cref{table:chap_2_2_rl_benchmarks} lists common RL embodied environments used in the literature, explaining why certain environments are inadequate for our evaluation purposes while highlighting suitable environments we aim to use.

\begin{table}[H]
    \centering
    \caption[RL embodied environments for evaluating noisy reward signals from VLM-based reward models.]{Common RL embodied environments and their suitability for evaluating noisy reward signals from VLM-based reward models.}
    \label{table:chap_2_2_rl_benchmarks}
    \resizebox{\linewidth}{!}{%
    \begin{tblr}{
      width = \linewidth,
      colspec = {Q[135]Q[135]Q[163]Q[112]Q[373]},
      cells = {c,b},
      cell{2}{1} = {c=5}{0.939\linewidth},
      cell{8}{1} = {c=5}{0.939\linewidth},
      hlines,
      hline{1,12} = {-}{0.08em},
    }
    \textbf{Env}               & \textbf{Type}         & \textbf{State Space Complexity} & \textbf{Horizon Length} & \textbf{Instruction Source}                                            \\
    \textbf{Not Suitable Envs} &                       &                                 &                         &                                                                             \\
    CartPole \citep{brockman2016openaigym}                  & Classic Control       & Simple                          & Short                   & Manual annotation                                                       \\
    MountainCar \citep{brockman2016openaigym}               & Classic Control       & Simple                          & Short                   & Manual annotation                                                       \\
    Humanoid \citep{todorov2012mujoco}                  & Bipedal robot control & Complex                         & Short                   & Manual annotation                                                       \\
    MetaWorld \citep{DBLP:conf/corl/YuQHJHFL19}                 & Robotic arm           & Moderate~                       & Short                   & Predefined task descriptions                                 \\
    SoftGym \citep{corl2020softgym}                   & Robotic arm           & Moderate                        & Short                   & Predefined task descriptions                                       \\
    \textbf{Suitable Envs}     &                       &                                 &                         &                                                                             \\
    Montezuma \citep{DBLP:journals/jair/BellemareNVB13}                  & Platformer   & Complex                         & Long                    & Manual annotation                                                       \\
    Crafter \citep{DBLP:conf/iclr/Hafner22}                    & Open world survival   & Complex                         & Long                    & Programmable (via engine data) \\
    Minigrid \citep{DBLP:conf/nips/Chevalier-Boisvert23}                  & Grid-world maze       & Controllable                    & Controllable            & Predefined task descriptions                                       
    \end{tblr}
    }
    \end{table}

We describe each testing environment used in our experiments. More details introduction can be found in on the official project homepage of each benchmark \citep{DBLP:conf/iclr/Hafner22, DBLP:journals/jair/BellemareNVB13, DBLP:conf/nips/Chevalier-Boisvert23}. 
\begin{itemize}
    \item \textbf{Crafter} features randomly generated 2D worlds where the player needs to forage for food and water, find shelter to sleep, defend against monsters, collect materials, and build tools. The original Crafter environment does not have a clear goal trajectory or instructions; agents are aimed at surviving as long as possible and exploring the environment to unlock new crafting recipes. We modified the environment to include a preset linear sequence of instructions to guide the agent to mine diamond. However, this instruction was found to hinder the agent's performance. The nature of the task requires dynamic strategies and real-time decision-making, but the fixed instructions limited the agent. For example, the instruction did not account for what to do when the agent is attacked by zombies.
    \item \textbf{Montezuma's Revenge} is a classic adventure platform game where the player must navigate through a series of rooms to collect treasures and keys. The game is known for its sparse rewards and challenging exploration requirements. We manually annotate 97 instructions for the agent to follow, guiding it to conquer the game. The instructions were designed to guide the agent through the game's key challenges, such as avoiding enemies, collecting keys, and unlocking doors.
    \item \textbf{Minigrid `Go to seq' Task}: We use the `Go to seq' task in the Minigrid environment, where the agent must navigate through a sequence of rooms and touch target objects in the correct order. This is a sparse reward task where the agent receives a reward of 1 only upon completing the entire sequence correctly. During the training phase, we randomly generate 50 different tasks, each with a room size of 5, 3 rows, and 3 columns. Each task features a unique room layout and target object sequence. The instruction complexity is set to 3, meaning there are at least 3 target objects to interact with in a specific order.
\end{itemize}

\begin{figure}[H]
    \centering
    \includegraphics[width=0.6\columnwidth]{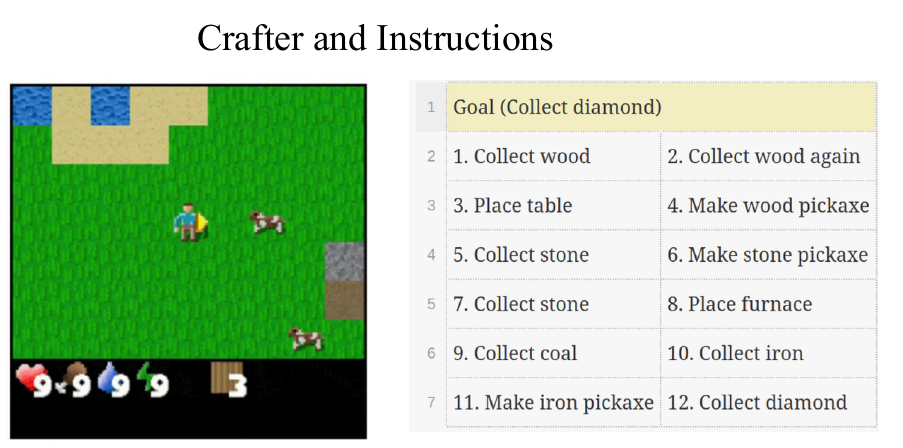}
    \caption[Illustration of the Crafter task]{Illustration of the Crafter task. The agent must survive as long as possible and explore for new crafting recipes.}
\end{figure}

\begin{figure}[H]
    \centering
    \includegraphics[width=0.6\columnwidth]{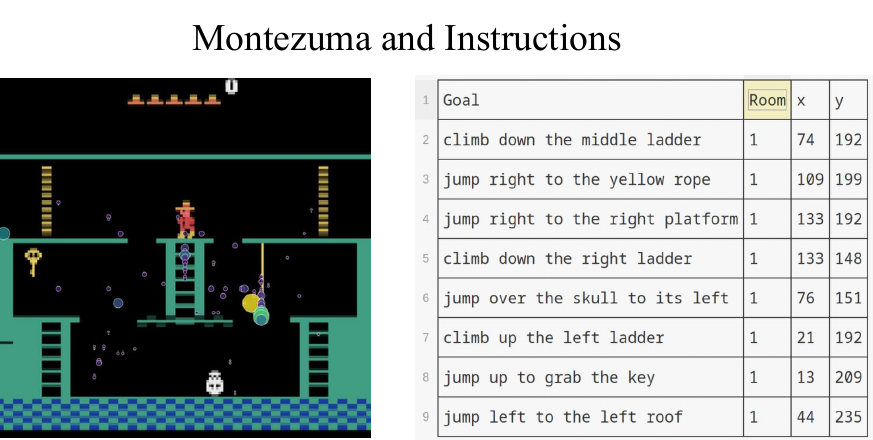}
    \caption[Illustration of the Montezuma's Revenge task]{Illustration of the Montezuma's Revenge task. The agent must navigate through a series of rooms to collect treasures and keys.}
\end{figure}

\begin{figure}[H]
    \centering
    \includegraphics[width=0.6\columnwidth]{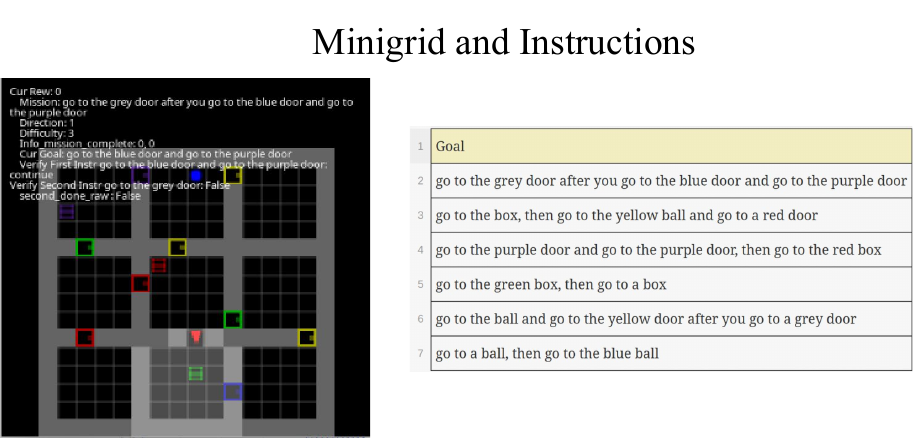}
    \caption[Illustration of the Minigrid `Go to seq' task]{Illustration of the Minigrid `Go to seq' task. The agent must navigate through a sequence of rooms and touch target objects in the correct order.}
\end{figure}

\subsection{Metrics}
\label{subsubsec:lit_rev_evaluation_metrics_benchmarks_rl_metrics}
In our experiments, we used a score metric adapted from the Crafter benchmark \citep{DBLP:conf/iclr/Hafner22} to evaluate agent performance across different environments. This score metric aggregates success rates for individual subtasks using a geometric mean. Formally, the score metric is defined as follows:
\begin{equation}
    \text{Score} = \exp \left( \frac{1}{N} \sum_{k=1}^{N} \ln (1 + s_k) \right) - 1 
\end{equation}
\noindent where $s_k$ is the agent's success rate of achieving instruction $l_k$, and $N$ is the total number of instructions.
 
This metric was chosen over the \emph{maximum total rewards} metric for several reasons:
\begin{enumerate}
    \item \textbf{Consistency in Sparse Reward Settings:} Sparse reward environments often pose significant challenges for reinforcement learning agents. An agent might occasionally achieve high rewards by chance in one rollout but fail to replicate this success consistently in subsequent rollouts. This variability can lead to misleading evaluations if only the maximum total rewards are considered. The Score metric, by measuring the success rate of achieving each subgoal, provides a more stable and consistent measure of an agent's performance.
    \item \textbf{Capturing Learning Stability:} The Score metric evaluates the agent's ability to consistently reproduce successful behaviors across multiple episodes. This is crucial in sparse reward settings, where the agent's performance can fluctuate significantly. By focusing on the success rates of individual subtasks, the Score metric offers a more granular and reliable assessment of the agent's learning progress and stability.
    \item \textbf{Crafter Benchmark Standard:} The Crafter benchmark, which introduces the Score metric, is a well-regarded standard.
\end{enumerate}

Crafter codebase provides \emph{score} metric calculation by default. For Minigrid and Montezuma environments, we use the internal information from the game engine to detect whether the subtasks are completed, thus facilitating the calculation of the \emph{score} metric. 

To evaluate the \textbf{learning speed} of our RL algorithms, we utilize the Area Under the Learning Curve (AUC) metric. This metric, previously employed in the literature \citep{DBLP:conf/ijcai/GoyalNM19,DBLP:conf/corl/GoyalNM20}, quantifies the cumulative performance of the agent throughout the training process, effectively indicating how rapidly the agent improves its policy. We implement a win cap, limiting the maximum number of wins an agent can achieve during training to a certain number (e.g., 5000). Training automatically terminates once this limit is reached, under the assumption that policy learning has \textbf{stabilized} by this point. The AUC is therefore number of wins normalized by the total number of training episodes. Formally, this is expressed as:
\begin{flalign}
\text{AUC} = \frac{\sum_{i=0}^{T} \text{wins}_i}{T} \nonumber
\end{flalign}
where $\text{wins}_i$ is the number of wins at episode $i$, and $T$ is the total number of training episodes.

\section{Detailed Related Work}
\label{subsec:lit_rev_model_free_rl_LLM_VLM_instruction_reward}

The approach of converting natural language instructions into reward signals for RL agents has been a longstanding area of exploration. Early studies \citep{Kaplan2017BeatingAW, DBLP:conf/ijcai/GoyalNM19, DBLP:conf/cvpr/WangHcGSWWZ19, DBLP:conf/corl/GoyalNM20} trained separate text and vision encoders from scratch so as to convert multimodal data into continuous feature vectors, then used a discriminator to assess alignment between trajectories and instructions. The cosine similarity metric, found by \citet{DBLP:conf/nips/FromeCSBDRM13} as an effective measure of semantic alignment, became foundational to these methods. Strikingly, the core architecture of VLM-based reward models, as illustrated in \cref{fig:illustration_of_vlm_reward}, has remained largely unchanged throughout the years. Both pioneering efforts \citep{Kaplan2017BeatingAW, DBLP:conf/ijcai/GoyalNM19, DBLP:conf/cvpr/WangHcGSWWZ19, DBLP:conf/corl/GoyalNM20} and modern implementations continue relying on computing cosine similarity scores between semantic embeddings of instructions and agent trajectories to generate rewards. Formally, given two embedding vectors, \(E(l)\) and \(E(\tau)\), which denote the semantic embeddings of an instruction \(l\) and a trajectory \(\tau\), respectively, the cosine similarity is defined as:

\begin{equation}
    \text{Cosine Similarity}(E(l), E(\tau)) = \frac{E(l) \cdot E(\tau)}{\|E(l)\| \|E(\tau)\|}
\end{equation}

However, before the advent of CLIP \citep{DBLP:conf/icml/RadfordKHRGASAM21} in 2021, these efforts were hampered by the absence of powerful pre-trained encoder-based VLMs. Consequently, researchers focused on proof-of-concept studies, testing their approaches on simpler, short-horizon tasks or sub-tasks --- for example, \citet{DBLP:conf/ijcai/GoyalNM19} evaluated their method on sub-tasks rather than the full Montezuma's Revenge game\footnote{available at \url{https://www.retrogames.cz/play_124-Atari2600.php}}. This likely reflected an understanding that complex, long-horizon tasks were impractical at the time. As such, the field had yet to grapple with noisy reward signals, which remained minimal in these simpler environments.

CLIP \citep{DBLP:conf/icml/RadfordKHRGASAM21}, an encoder-based VLM, marked a turning point by providing a high-quality joint semantic embeddings for vision and language data. Leveraging CLIP, researchers started addressing more complex environments. For example, \citet{DBLP:conf/icml/MahmoudiehPD22} used a CLIP-based reward model to guide a robotic arm in manipulation tasks, demonstrating its feasibility for real-world applications.

In addition to the topics mentioned in the main body of the paper, we continue to highlight related work on reward signals and mitigation strategies below.

\noindent \textbf{Reward Signal from Human Preference.}\quad Recent work on RL from Human Feedback (RLHF) \citep{DBLP:conf/nips/Ouyang0JAWMZASR22} also leverage expert preference as a reward signal. However, our work differs in key aspects. Unlike RLHF's focus on textual outputs, our approach involves evaluating cross-modal similarities between visual and textual data in environments requiring long-horizon decision-making and frequent embodied interactions, a domain not typically covered by RLHF.

\noindent \textbf{Mitigating Misspecified Rewards.}\quad Prior works proposed mitigating false positive rewards by training a parallel exploration policy to escape local optima caused by misspecified rewards \citep{Ghosal2022TheEO, icml/FuZ0XB24}. In contrast, we propose a novel reward function that directly penalizes likely false positive reward signals during training. We further show that our method complements exploration-based methods and achieves superior performance when combined.

\noindent \textbf{Research Gap}\quad Successes with CLIP-based reward models have been particularly pronounced in environments with \textbf{short-horizon} tasks. For instance, \citet{DBLP:conf/iclr/RocamondeMNPL24} achieved promising results in classic control tasks such as CartPole, MountainCar \citep{brockman2016openaigym}, and Humanoid \citep{todorov2012mujoco}, environments that lack long-horizon dependencies. Similarly, \citet{DBLP:conf/icml/WangSZXBHE24} used the SoftGym environment \citep{corl2020softgym}, which includes a suite of short-horizon robotic arm manipulation tasks, as well as MetaWorld environment \citep{DBLP:conf/corl/YuQHJHFL19}, which lacks strict ordering constraints on subgoals, eventually leading to a lack of long-horizon dependencies. 

Importantly, VLM reward models are learned-based reward models and, as such, inherently suffer from noisy reward signals. Sadly, the tendency to focus on simpler environments, or to sidestep the challenges of noisy reward signals, keeps appear in recent work. For example, \citet{chan2023visionlanguage} explored CLIP-based rewards to reduce dependence on human-engineered reward functions, but their experiments were confined to simple housekeeping tasks with small action and state spaces --- far less complex than pixel-based environments like Montezuma's Revenge. Similarly, ELLM \citep{DBLP:conf/icml/DuWWCDA0A23} relied on a \textbf{hard-coded oracle reward model} (labeled as a ``VLM reward model'' for conceptual demonstration) to maintain training stability. Notably, without this oracle, their agent performed worse than a pure RL agent without any reward shaping --- highlighting the limitations imposed by noisy VLM-based rewards in complex environments.

Despite the growing interest of using VLM-based reward models in embodied RL, the literature has largely overlooked the detrimental effects of noisy reward signals on learning efficiency. Key questions remain underexplored: how reliable are these models in complex, long-horizon environments, how do false positive and false negative reward noise affect learning speed, and how to maintain effective learning under noisy reward conditions. These gaps highlight a critical area for progress. In this work, we aim to address this gap by examining these issues in depth and proposing solutions to enhance the applicability of language-guided RL in real-world scenarios.

\section{The Complete Procedure of the VLM-based Reward Model}
\label{ireward_procedure}

Language instructions in real-world tasks are rarely singular; they typically form a sequence that guides an agent step-by-step. A VLM-based reward model must therefore include a mechanism to transition between instructions as the agent progresses. Existing VLM-reward implementations applied a pointer mechanism to decide which instruction the agent should consider at the current step. We follow this implementation and denote the pointer as $m(t)$. It indicates the instruction that the agent is trying to complete at time step $t$. $m(t)$ is updated according to the following rule: 
\begin{equation}
    m(t+1) = \begin{cases}
    1 & \text{if $t = 0$} \\
    m(t) + 1 & \text{if instr. $l_{m(t)}$ completed at $t$} \\
    m(t) & \text{otherwise}
    \end{cases}
\end{equation}
The pointer remains on the current instruction until the accumulated reward for completing $l_{m(t)}$ reaches a predetermined threshold.

Given a sequence of instruction sentences $L = \{l_1, l_2, \dots, l_n\}$, a typical VLM-based reward model, as seen in Pixl2R \citep{DBLP:conf/corl/GoyalNM20} and ELLM \citep{DBLP:conf/icml/DuWWCDA0A23}, maintains this \textbf{pointer} to the current instruction, starting with $l_1$. To track progress, it imposes a maximum reward cap --- for example, 2.0 --- on each instruction. Once the cumulative reward reaches this cap, the pointer advances to the next instruction in the sequence. This approach ensures the agent is incentivized to complete one subtask before moving on to the next.

The complete procedure for training an RL agent with a VLM-based reward model is detailed in \cref{alg:instruction_guided_rl}. This algorithm builds on concepts including MDP formulation, RL algorithms, and the VLM-based reward model (\aref{subsec:lit_rev_model_free_rl_LLM_VLM_instruction_reward}).

\begin{algorithm}[H]
    \caption{Instruction-following RL training with VLM-based reward model}
    \label{alg:instruction_guided_rl}
    \begin{algorithmic}[1]
    \State Initialize policy network $\pi_\theta$
    \State Initialize value network $Q_\phi$
    \State Setup VLM-based reward model $E(\cdot)$
    \State Split instruction essay into sentences $\{l_1, l_2, ..., l_K\}$
    \State Initialize instruction pointer $p = 1$
    \State Initialize cumulative VLM reward $r_{\text{cum}} = 0$
    \State Initialize cumulative VLM reward threshold $q$
    \State Initialize replay buffer $\mathcal{D}$
    \State Initialize agent trajectory memory queue $\tau$ with length $W$
    
    \For{each episode}
        \State Initialize state $s_0$
        \For{$t = 0$ to $T-1$}
            \State Select action $a_t \sim \pi_\theta(a_t|s_t)$
            \State Execute $a_t$, observe next state $s_{t+1}$ and environmental reward $r^e_t$
            \State Enqueue $(s_t, a_t, r_t, s_{t+1})$ in $\tau$
        
            \LComment{Compute VLM reward}:
            \State $r^v_t = p(l_p | \tau) = \frac{E(\tau) \cdot E(l_p)}{\|E(\tau)\| \|E(l_p)\|}$ 
            
            \State Combine rewards: $r_t = r^e_t + (1 -\beta) \gamma r^v_t$ \Comment{$\beta$ is a scaling factor}
            
            \State Store $(s_t, a_t, r_t, s_{t+1})$ in $\mathcal{D}$

            \State $r_{\text{cum}} \gets r_{\text{cum}} + r^v_t$
            \If{$r_{\text{cum}} \geq q$}
                \State $p \gets \min(p + 1, K)$
                \State $r_{\text{cum}} \gets 0$
            \EndIf
            
            \If{Reach Update Frequency}
                \State Sample mini-batch $\{(s_j, a_j, r_j, s_{j+1})\}$ from $\mathcal{D}$
                
                \LComment{Compute TD errors}:
                \State $\delta_j = r_j + \gamma Q_\phi(s_{j+1}, a_{j+1}) - Q_\phi(s_j, a_j)$
                
                \LComment{Update value network}:
                \State $\phi \gets \phi + \alpha_v \sum_j \delta_j \nabla_\phi Q_\phi(s_j, a_j)$
                
                \LComment{Update policy network}:
                \State $\theta \gets \theta + \alpha_p \sum_j Q(s_j, a_j) \nabla_\theta \log \pi_\theta(a_j|s_j)$

            \EndIf
        \EndFor
    \EndFor
    \end{algorithmic}
\end{algorithm}

\section{Convergence time on a sparse-reward landscape}
\label{app_subsec:app_sec:conv_on_sparse_landscape}
In this work, \cref{sec:chap_3_theoretical_analysis}, we have shown that the convergence time of a policy learning algorithm on a sparse-reward landscape can be characterized by the distance in the parameter space. Here, we provide a detailed proof of the convergence time on a sparse-reward landscape.
We can pin down a characteristic convergence time on a sparse-reward landscape. The sparse-reward setting enforces that
\begin{align}
    r^e(s_t) = \begin{cases}
    1 & \text{if $s_t \in \mathcal{S}_G$} \\
    0 & \text{otherwise}
    \end{cases}.
\end{align}
It is clear from the definition that a good trajectory must always be a goal trajectory, and the cumulative reward is simply the reward at the final goal state so $G^e(\tau) = \gamma^T$ in this setting. For a randomly initialized policy, it is highly unlikely that the initial distribution of trajectories contains any goal trajectory due to the sparsity of goal states. The optimization of $V^e_{\pi_\theta}$ thus consists of two parts. The first part is to search for a goal trajectory. The gradient landscape is almost 0 everywhere, except for cases where a trajectory is $\delta$-close to a goal trajectory. Here $\delta$ is the differential unit in the numerical differentiation used in the gradient calculation
\begin{align}
    \theta = \theta + \alpha \nabla_\theta V_{\pi_\theta}
\end{align}
such that $\delta$-close means being numerically accessible within a distance of $|\delta|$ in parameter space. And the second part is to reduce $T$ so that goal trajectories become good trajectories and consequently achieving acceptable policies. For the first part, searching for a trajectory for the target is effectively a random walk in the $d$-dimensional parameter space due to the flat gradient landscape.\\
\begin{restatable}{lemma}{togoalstatetFmestep}
    \label{lemma:to_goal_state_time_step}
    For a random walk in $n$-dimensional space, the expected number of steps $T_D$ needed to travel a distance of $D$ scales with $D^2$.
\end{restatable}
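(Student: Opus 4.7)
The plan is to set up the walk explicitly as an isotropic step process in $\mathbb{R}^n$ and then convert the diffusion identity $\mathbb{E}\|X_T\|^2 = \delta^2 T$ into the hitting-time bound $\mathbb{E}[T_D] = \Theta(D^2)$ via the standard squared-displacement martingale and optional stopping.

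First I would fix notation: let $X_0 = 0$ and $X_{T+1} = X_T + \delta\, U_{T+1}$, where $U_1, U_2, \dots$ are i.i.d.\ unit vectors in $\mathbb{R}^n$ drawn from a rotationally symmetric distribution (e.g.\ uniform on $S^{n-1}$), and let $T_D = \inf\{T \ge 0 : \|X_T\| \ge D\}$ be the first exit time from the open ball of radius $D$. Expanding $\|X_T\|^2 = \delta^2 \sum_{i,j} U_i \cdot U_j$ and using $\mathbb{E}[U_i] = 0$ together with the independence and unit-norm properties, all cross terms vanish, giving the diffusion identity $\mathbb{E}\|X_T\|^2 = \delta^2 T$. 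Equivalently, $M_T := \|X_T\|^2 - \delta^2 T$ is a martingale with respect to the natural filtration.

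Second, I would apply the optional stopping theorem to $M_T$ at $T_D$, which yields $\mathbb{E}\|X_{T_D}\|^2 = \delta^2 \, \mathbb{E}[T_D]$. Because each step has length exactly $\delta$, the overshoot is bounded, $\|X_{T_D}\| \in [D, D+\delta]$, so rearranging gives $\mathbb{E}[T_D] = D^2/\delta^2 + O(D/\delta)$, i.e.\ $\mathbb{E}[T_D] = \Theta(D^2)$ as $D \to \infty$ with $\delta$ fixed. Notably, this scaling is independent of the ambient dimension $n$, which mirrors the $\|X_t\|^2 \sim t$ scaling of Brownian motion obtained as the continuum limit of the walk, and it gives precisely the constant relied upon by Proposition~\ref{prop:task_decomposition_random_walk_convergence}.

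The main obstacle will be justifying the optional-stopping step, since it requires $\mathbb{E}[T_D] < \infty$ to avoid circularity. A clean workaround is to apply the theorem at the truncated stopping times $T_D \wedge N$, obtaining $\mathbb{E}\|X_{T_D \wedge N}\|^2 = \delta^2 \, \mathbb{E}[T_D \wedge N]$, then pass $N \to \infty$ using monotone convergence on the right and bounded convergence (via $\|X_{T_D \wedge N}\| \le D+\delta$) on the left. Almost-sure finiteness of $T_D$ with finite mean follows from a projection argument: for any fixed unit vector $e$, the one-dimensional process $e \cdot X_T$ is a zero-mean random walk with step variance $\delta^2/n$, whose exit time from $[-D,D]$ is classically known to have finite expectation, and this dominates $T_D$. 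I would conclude by noting that the lemma's qualitative claim $T_D \sim D^2$ is exactly the leading-order $D^2/\delta^2$ recovered above.
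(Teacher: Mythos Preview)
Your proposal is correct, and both you and the paper build on the same core identity $\mathbb{E}\|S_T\|^2 = T$ (the paper derives it by a telescoping ``cosine rule'' recursion, you by expanding the double sum and killing cross terms). The difference lies in how the inversion from $T$ to $T_D$ is handled. The paper's argument is purely heuristic: after establishing $\mathbb{E}[|\vec S_T|^2] = T$ it writes $\mathbb{E}[|\vec S_T|] \sim \sqrt{T}$, identifies $D$ with this expected displacement, and solves for $T_D \sim D^2$; no stopping time is ever defined and no justification is offered for inverting the relation. Your route is considerably more rigorous: you define $T_D$ as a bona fide first-exit time, recognize $\|X_T\|^2 - \delta^2 T$ as a martingale, apply optional stopping with a careful truncation-and-limit argument, and control the overshoot to extract the sharp leading constant $D^2/\delta^2$. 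The projection argument for finiteness of $\mathbb{E}[T_D]$ is also correct (since $|e\cdot X_T|\ge D$ forces $\|X_T\|\ge D$, the one-dimensional exit time dominates $T_D$). What your approach buys is an actual proof with an explicit constant and error term; what the paper's buys is brevity, which is adequate given that the lemma is only used informally to motivate the $T_D \propto D^2$ scaling feeding into Proposition~\ref{prop:task_decomposition_random_walk_convergence}.
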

\begin{proof} 
Let $\vec{X_1}$, $\vec{X_2}$, ... $\vec{X_T}$ be IID random unit vectors uniformly distributed on a $(d-1)$-dimensional sphere $\mathcal{S}^{d-1} \subset \mathbb{R}^d$, where $\vec{X_i}=(X_{i1},...,X_{id})$ and $|\vec{X_i}|^2=\sum_{j=1}^d X^2_{ij}=1$. Let $\vec{S_T}:=\sum_{i=1}^T \vec{X_i}$. By a $n$-dimensional cosine rule we have
\begin{align}
    |\vec{S}_T|^2 &= |\vec{S}_{T-1}|^2 + 2\vec{S}_{T-1}\cdot \vec{X}_T + |\vec{X}_T|^2,
\end{align}
and because $\mathbb{E}[\vec{X}_T]=\vec{0}$ 
\begin{align}
    \mathbb{E}[|\vec{S}_T|^2] &= \mathbb{E}[|\vec{S}_{T-1}|^2] + \mathbb{E}[2\vec{S}_{T-1}\cdot \vec{X}_T] + 1\\
    &=\mathbb{E}[|\vec{S}_{T-1}|^2)]+ 2\vec{S}_{T-1}\mathbb{E}[\vec{X}_T] + 1\\
    &=\mathbb{E}[|\vec{S}_{T-1}|^2] + 1\\
    &=\mathbb{E}[|\vec{S}_{T-2}|^2] + 1 +1\\
    &\vdots\\
    &=T
\end{align}
\textit{i.e.} $\mathbb{E}[|\vec{S}_T|]\sim \sqrt{T}$. When a policy is randomly initialized with $\theta_0$, the distance $D$ to a goal policy, $D:=|| \theta_{goal} - \theta_0||$ is fixed and is the distance the random walk needs to travel ($\mathbb{E}[|\vec{S}_T|] = D \sim \sqrt{T_D}$), so the characteristic time needed to travel this distance $T_D \sim D^2$ as we have shown above.
\end{proof}

\decompositiontotaldistance*
Auxiliary rewards essentially open the path for a divide-and-conquer approach by introducing intermediate rewards in the learning process. We introduce BiMI rewards as
\begin{equation}
    r^v_{\textsc{BiMI}}(\tau, l_k) = \max(\mathbf{1}_{\{p(l_k\mid \tau) \geq \hat q\}} - p(l_k),0)
\end{equation}
and the cumulative reward of a single trajectory in the presence of BiMI rewards becomes
\begin{align}
    G^v_{\textsc{BiMI}}(\tau) = \sum_{t=0}^{T} \gamma^t r^v_{\textsc{BiMI}}(\tau, l_{m(t)}).
\end{align}
Because $r^v_{\textsc{BiMI}}$ is either 1$-p(l_k)$ or 0, it effectively breaks the entire task into $n$ segments of sub-tasks $\{ l_1, l_2, ..., l_n \}$ and each sub-task is a sparse-reward problem. Because this decomposition is based on expert knowledge, we can reasonably assume that the start-finish distance $D$ in parameter space is partitioned into $D\approx d_1+d_2+...+d_{n-1}$ without incurring much detour. 

\taskdecompositionrandomwalkconvergence*
\begin{proof}
The expected time taken for each of the sub-tasks then scales with $~d_i^2$ respectively (Lemma~\ref{lemma:to_goal_state_time_step}), and we have
\begin{align}
    d_1^2+d_2^2+...+d_{n-1}^2 &< (d_1+d_2+...+d_{n-1})^2\\
    \mathbb{E}\left[\sum_{i=1}^{n-1}T_{d_i}\right] &< \mathbb{E}[T_D]
\end{align}
because $d_i>0 \forall i$. We can also work out that the upper bound for this improvement is a factor of $n-1$ by invoking the Cauchy–Schwarz inequality $\left(\sum_{i=1}^n u_i^2 \right)\left(\sum_{i=1}^n v_i^2 \right) \geq \left(\sum_{i=1}^n u_iv_i \right)^2$:
\begin{align}
        (d_1^2+d_2^2+...+d_{n-1}^2)(1^2+1^2+...+1^2)&\geq (d_1+d_2+...+d_{n-1})^2\\
    (d_1^2+d_2^2+...+d_{n-1}^2) &\geq \frac{1}{n-1}D^2\\
    \mathbb{E}\left[\sum_{i=1}^{n-1}T_{d_i}\right] \geq \frac{1}{n-1}\mathbb{E}[T_D]
\end{align}
and the equality sign holds (indicating maximal improvement) when $d_1=d_2=...=d_{n-1}$. The intuitive interpretation is that the divide-and-conquer approach is the most effective when the task is divided evenly into subtasks.
\end{proof}

\section{VLM-based reward model is a heuristic function}
\label{app_subsec:app_sec:vlm_reward_model_heuristic}

\vlmtohurlprop*

\begin{proof}
    Consider a sparse reward environment where the external reward is defined as:
    \[
    r^e(s_t) = \begin{cases} 
    1 & \text{if } s_t \in \mathcal{S}_G, \\ 
    0 & \text{otherwise},
    \end{cases}
    \]
    with discount factor $\gamma \in (0, 1)$. For an optimal policy $\pi^*$, the value function at state $s_t$ is:
    \[
    V^*(s_t) = \gamma^{\widetilde{T} - t},
    \]
    where $\widetilde{T}$ is the time to reach a goal state $s_g \in \mathcal{S}_G$ along the shortest path from $s_0$, and $\widetilde{T} - t$ is the remaining steps from $s_t$. This follows because the agent receives a single reward of 1 at $s_g$, discounted back to $s_t$.
    
    In Algorithm~\ref{alg:instruction_guided_rl}, the reward $r^v(\tau_t, l_{m(t)}) \in [0, 1]$ measures alignment between the agent's transition $(\tau_{t-1}, s_t, a_t)$ and the expert instruction $l_{m(t)}$ for sub-task $m(t)$. By design, $r^v = 1$ when the transition fully matches $l_{m(t)}$, and $r^v < 1$ otherwise, implying \emph{partial alignment}.
    
    We assume expert instructions $\{l_1, \dots, l_n\}$ define a trajectory $\tau^*$ that reaches $\mathcal{S}_G$ optimally or near-optimally. If the agent fully follows $l_{m(t)}$ at each step, its path aligns with $\tau^*$, and $\widetilde{T}$ is minimized (say, $\widetilde{T}^*$).  When the agent perfectly follows the instruction at each step along $\tau^*$, the VLM reward is maximized, i.e., $r^v(\tau_t, l_{m(t)}) = 1$.  Crucially, moving along the optimal trajectory $\tau^*$ leads to the optimal value function $V^*(s_t) = \gamma^{\widetilde{T}^* - t}$.

    Now, if the agent deviates from $l_{m(t)}$, it takes a suboptimal action, increasing the expected path length to $\mathcal{S}_G$. Let $\widetilde{T}' > \widetilde{T}^*$ denote this detour. Then, $V^*(s_t) = \gamma^{\widetilde{T}' - t} < \gamma^{\widetilde{T}^* - t}$. Since $r^v$ decreases with misalignment (e.g., from 1 to some $0 \leq r < 1$), it exhibits a \textbf{monotonic relationship} with the value function: higher alignment (larger $r^v$) corresponds to movement towards a shorter path and thus a larger $V^*(s_t)$, and vice versa.
    
    Thus, VLM-based rewards, under Algorithm~\ref{alg:instruction_guided_rl}, function as a heuristic $h(s_t)$ that approximates the relative ordering of states in terms of their optimal value $V^*(s)$.
\end{proof}

\section{False Positive and the Violation of Pessimistic Property of Heuristics}
\label{app_subsec:app_sec:vlm_reward_model_heuristic_pessimistic}

\fpviolatepessimisticnature*

\begin{proof}
    We want to show that given:
    \begin{enumerate}
        \item for all successor states $s'$: $h(s') \leq V^*(s')$ for an arbitrary $s$.
        \item false positive at current arbitrary state $s$: $V^*(s) < h(s)$.
    \end{enumerate}
    The objective is to show that under the above conditions, we obtain $\max_a(\mathcal B h) (s, a) < h(s)$.
    
    \begin{enumerate}
        \item Express the Bellman equation for $h$: $(\mathcal{B} h)(s, a) = R(s, a) + \gamma \sum_{s'} P(s'|s, a) h(s')$
        \item Express the Bellman version of $V^*$: $V^*(s) = \max_{a \in \mathcal{A}} \left[ R(s, a) + \gamma \sum_{s'} P(s'|s, a) V^*(s') \right]$. 
        
        \noindent Given that $ h(s') \leq V^*(s') $ for all $s'$, we can expand it to $\sum_{s'} \mathcal P(s'|s, a) h(s') \leq \sum_{s'} \mathcal P(s'|s, a) V^*(s')$. Therefore, we have:
        \begin{equation}
            R(s, a) + \gamma \sum_{s'} \mathcal P(s'|s, a) h(s') \leq R(s, a) + \gamma \sum_{s'} \mathcal P(s'|s, a) V^*(s') = Q^*(s, a)
        \end{equation}
        \item Taking the maximum over actions to both sides, we have:
        \begin{flalign}
            \max_a\left(R(s, a) + \gamma \sum_{s'} \mathcal P(s'|s, a) h(s')\right) &\leq max_a Q^*(s,a)\\
            \max_a(\mathcal B h)(s,a) &\leq V^*(s)
        \end{flalign}
        \item Apply false positive condition: Given $V^*(s) < h(s)$, substituting into the above inequality, we have $\max_a(\mathcal B h)(s,a) < h(s)$
    \end{enumerate}
    
    \noindent \textbf{Implication}:\quad Maintaining a pessimistic heuristic is inherently fragile because the introduction of a false positive in any state disrupts the pessimistic condition.
\end{proof}

\section{False Positive gives unbounded bias in the \emph{optimality gap}}
\label{app_subsec:detailed_fp_bad_justification}
In this work, \cref{sec:chap_3_theoretical_analysis}, we analyze the impact of false positive rewards on the convergence speed of policy learning based on the optimality gap analysis by \citet{DBLP:conf/nips/ChengKS21}. Here, we provide a detailed proof of the impact of false positive rewards on the convergence speed of policy learning.

We have defined the formal definition of false positive and false negative from both instruction-following (IF) and heuristic perspectives in \cref{sec:chap_3_theoretical_analysis}.

\mainresultfalsepositiveincreasesbias*
We analyze the convergence speed of policy learning by examining the optimality gap, which is defined as the difference between the optimal value of the initial state $s_0$, $V^*(s_0)$, and the value of the initial state under an arbitrary policy $\pi$, $V^\pi(s_0)$. Specifically, we focus on deriving an upper bound for this optimality gap. The key intuition is that a smaller upper bound implies faster convergence to the optimal policy, as fewer iterations of policy updates will be required to reach the optimum.

We begin by stating the theorem made by HuRL authors:
\begin{theorem}[Optimality Gap Decomposition \citep{DBLP:conf/nips/ChengKS21}]
    \label{theo:hurl_performance_decomposition}
    For any policy $\pi$, heuristic $h:\mathcal S\rightarrow \mathbb R$, and mixing coefficient $\beta \in [0,1]$,
    \begin{equation}
        V^*(s_0) - V^\pi(s_0) = \mathrm{Regret}(h,\beta,\pi) + \mathrm{Bias}(h,\beta,\pi)
    \end{equation}
    \noindent where the regret and the bias term are expressed as follows:
    \begin{flalign}
        \mathrm{Regret}(h,\beta,\pi) &\coloneqq
        \beta \left( \widetilde{V}^*(s_0) - \widetilde{V}^\pi(s_0) \right)  + \frac{1-\beta}{1-\gamma}  \left( \widetilde{V}^*(d^{\pi}) - \widetilde{V}^\pi(d^{\pi}) \right)
        \label{eq:regret} \\
        \mathrm{Bias}(h,\beta,\pi) &\coloneqq
        \left( V^*(s_0) - \widetilde{V}^*(s_0) \right) + \frac{\gamma(1-\beta)}{1-\gamma} \mathbb E_{s,a\sim d^\pi}   \mathbb E_{s' \sim \mathcal P(\cdot | s, a)} \left[ h(s')  - \widetilde{V}^*(s')\right]  \label{eq:bias} 
    \end{flalign}
\end{theorem}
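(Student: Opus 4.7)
The plan is to take the Performance Gap Decomposition (Theorem~\ref{theo:hurl_performance_decomposition}) directly from \citet{cheng2021heuristic} as given, and then analyze how each of false negatives and false positives acts on the Bias term. Because the Regret term can be driven to zero by running any sound RL algorithm on $\widetilde{\mathcal{M}}$, the asymptotic performance gap $V^*(s_0)-V^\pi(s_0)$ is governed by the Bias term in Eq.~\ref{eq:bias}, so upper-bounding the Bias is exactly the right proxy for ``speed of convergence to an acceptable policy.''

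Next I would invoke the pessimism-based Bias bound from \citet{cheng2021heuristic}: when $h(s)\leq V^*(s)$ for all $s$, both components of Eq.~\ref{eq:bias} can be controlled by the uniform underestimation gap $\sup_s(V^*(s)-h(s))$, giving a clean upper bound on $\mathrm{Bias}(h,\beta,\pi)$. Under the false negative regime of Definition~\ref{def:false_negative_rewards}, we have $h(s_t)\approx 0\leq V^*(s_t)$, so pessimism is preserved pointwise at the noisy states and globally across the heuristic; hence the pessimistic upper bound on the Bias continues to hold unchanged. This gives the ``false negatives maintain the upper bound'' half of the theorem essentially for free.

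For the false positive half I would chain together Proposition~\ref{prop:two_perspective_equivalent} and Proposition~\ref{prop:fp_violate_pessimistic_nature}: an instruction-following false positive implies a heuristic overestimation $h(s)>V^*(s)$, and even a single such state breaks the pessimistic condition $\max_a(\mathcal{B}h)(s,a)\geq h(s)$. I would then show that this broken pessimism strictly inflates the Bias upper bound via two effects. First, by monotonicity of the reshaped Bellman operator $\widetilde{\mathcal{B}}$ (which by Eq.~\ref{eq:bh_same_as_bh} coincides with $\mathcal{B}$), an overestimated $h$ pushes the fixed point $\widetilde{V}^*$ upward, so $V^*(s_0)-\widetilde{V}^*(s_0)$ acquires an extra negative contribution whose absolute value enlarges the first summand of Eq.~\ref{eq:bias}. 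Second, on states where pessimism is violated, the integrand $h(s')-\widetilde{V}^*(s')$ in the second summand picks up strictly positive mass under $d^\pi$, provided $\pi$ visits those states with nonzero probability. The sum of these two effects is strictly larger than the pessimistic baseline bound, establishing the ``false positives increase the bias'' claim.

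The final step is to translate the inflated Bias upper bound into the convergence statement: since $V^*(s_0)-V^\pi(s_0)\geq \mathrm{Bias}(h,\beta,\pi)$ once the Regret has been minimized, a strictly larger Bias bound forces more policy-improvement iterations before $V^\pi$ can come within a fixed $\epsilon$ of $V^*$, which is exactly the ``potentially slower convergence'' conclusion. The main obstacle I anticipate is the first inflation effect --- rigorously propagating a single-state overestimation through $\widetilde{\mathcal{B}}$ to a strict increase in $\widetilde{V}^*(s_0)$ rather than having the effect absorbed by the discount and the mixing coefficient $\beta$. I would address this by a standard monotonicity-plus-contraction argument on $\widetilde{\mathcal{B}}$ combined with the Bellman-inconsistency contradiction already established in Proposition~\ref{prop:fp_violate_pessimistic_nature}, deferring the sharp constants to the appendix.
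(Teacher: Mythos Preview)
Your opening sentence matches the paper exactly: the paper does not prove Theorem~\ref{theo:hurl_performance_decomposition} and writes explicitly ``We will not provide a proof for Theorem~\ref{theo:hurl_performance_decomposition} in this paper. Please refer to \citep{cheng2021heuristic} for details.'' So on the stated theorem you and the paper agree --- both defer to the cited source.

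However, everything after your first sentence is a proof sketch for a \emph{different} statement, namely Theorem~\ref{theo:main_result_false_positive_increases_bias}. If the target really is the Performance Gap Decomposition, your proposal contains no argument for it; you have simply assumed it and moved on. If instead you meant to address Theorem~\ref{theo:main_result_false_positive_increases_bias}, your plan is broadly in the same spirit as the paper's but has a concrete sign problem in the ``first inflation effect'': you claim an overestimated $h$ pushes $\widetilde{V}^*$ upward and thereby \emph{enlarges} the first summand $V^*(s_0)-\widetilde{V}^*(s_0)$ of Eq.~\ref{eq:bias}, but increasing $\widetilde{V}^*(s_0)$ makes that difference smaller (or more negative), not larger. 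The paper avoids this trap by not arguing on Eq.~\ref{eq:bias} directly. It instead passes to the two-term upper bound $B_1+B_2$ with $B_2=\mathbb{E}_{\rho^\pi}\big[\sum_{t\geq 1}\gamma^{t-1}(h(s_t)-\widetilde{V}^*(s_t))\big]$, uses Lemma~\ref{lemma:pessimistic_apply_all_reshaped_mdp} to show that pessimism forces $\widetilde{V}^*(s)\geq h(s)$ everywhere and hence $B_2\leq 0$, and then argues that overestimation destroys this control because the $\rho^\pi$-dependence in $B_2$ creates a feedback loop that can accumulate falsely high $h$-states --- an $\ell_\infty$-error argument rather than a Bellman-monotonicity argument on $\widetilde{V}^*$.
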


We will not provide a proof for Theorem~\ref{theo:hurl_performance_decomposition} in this paper. Please refer to \citep{DBLP:conf/nips/ChengKS21} for details. The theorem demonstrates that the performance gap can be elegantly decomposed into two components: a regret term and a bias term such that:
\begin{enumerate}
    \item The regret term quantifies the difference between $\widetilde{V^*}$ and $\widetilde{V^\pi}$, representing the error caused by $\pi$ being suboptimal in the reshaped MDP $\widetilde{\mathcal M}$. Since $\pi$ is trained by our selected RL algorithm directly on the reshaped MDP $\widetilde{\mathcal M}$, the primary responsibility for minimizing this regret term falls to the RL algorithm itself, not to the design of the auxiliary reward signal. Thus, when evaluating the effects of false positive or false negative rewards, we choose not to focus on bounding the regret term.
    \item The bias term captures two key discrepancies: first, between the true optimal value function $V^*$ of the original MDP $\mathcal M$ and the optimal value function $\widetilde{V}^*$ of the reshaped MDP $\widetilde{\mathcal M}$; second, between $\widetilde{V}^*$ and the heuristic $h(s)$. This term, therefore, reflects the error introduced by addressing the reshaped MDP instead of the original one, alongside how well the heuristic $h$ approximates the optimal value function in the reshaped MDP. Consequently, this bias term directly relates to the quality of the heuristic reward signal $h$. Hence, we focus on analyzing its upper bound to assess the impact of false positive or false negative rewards on this heuristic.
\end{enumerate}

\citet{DBLP:conf/nips/ChengKS21} have further derived the upper bound for the bias term, which we present here as a lemma. We omit the proof for brevity; for detailed proof, please refer to \citep{DBLP:conf/nips/ChengKS21}.

\begin{lemma}[Upper Bound of the Bias Term]
    \begin{flalign}
        \mathrm{Bias}(h,\beta,\pi) \leq (1-\beta)\gamma\left( \mathbb E_{\rho^{\pi^*}}\left[ \sum_{t=1}^\infty (\beta \gamma)^{t-1} (V^*(s_t) - h(s_t)) \right] +  \mathbb E_{\rho^{\pi}}\left[ \sum_{t=1}^\infty \gamma^{t-1} (h(s_t) - \widetilde{V}^*(s_t)) \right]\right)
    \end{flalign}
\end{lemma}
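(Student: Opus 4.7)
The plan is to prove the theorem in three stages by chaining together the machinery already assembled in the appendix. First, I would invoke Theorem~\ref{theo:hurl_performance_decomposition} to write $V^*(s_0)-V^\pi(s_0) = \mathrm{Regret}(h,\beta,\pi)+\mathrm{Bias}(h,\beta,\pi)$, and argue that the regret term depends on how well the RL solver optimizes inside the reshaped MDP $\widetilde{\mathcal M}$ and is therefore insensitive to whether the noise in $h$ is false-positive or false-negative in nature. This isolates the bias term as the only channel through which heuristic-noise asymmetry can propagate into the performance gap, and hence as the only piece that needs to be bounded in a noise-dependent way.

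Second, I would derive the bias upper bound quoted in the lemma. The key observation is that $V^*(s_0)-\widetilde V^*(s_0)$ can be rewritten as a $\pi^*$-discounted telescoping sum over $(V^*(s_t)-h(s_t))$ by repeatedly applying the Bellman equation in $\mathcal M$ and using the identity $(\widetilde{\mathcal B} h)(s,a)=(\mathcal B h)(s,a)$ from Equation~\eqref{eq:bh_same_as_bh}, while the residual gap between $\widetilde V^*$ and $h$ along $\pi$'s rollouts is collected into $\mathbb E_{\rho^\pi}[\sum_t \gamma^{t-1}(h(s_t)-\widetilde V^*(s_t))]$. Summing the two geometric series yields the inequality in the lemma, expressing the bias as a non-negative mass on $\{V^*-h\}$ plus a signed mass on $\{h-\widetilde V^*\}$.

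Third and most important, I would analyze these two masses under the two noise regimes. Under false negatives, $h(s)\le V^*(s)$ holds everywhere, so the first mass is bounded by the trivial envelope $V^*\le \tfrac{1}{1-\gamma}$; moreover, pessimism of $h$ propagates through the Bellman backup to $\widetilde V^*$, making $h-\widetilde V^*$ non-positive, so the second mass does not contribute any additional inflation and the stated upper bound is preserved. Under false positives, Proposition~\ref{prop:fp_violate_pessimistic_nature} shows that at the offending state $\max_a(\mathcal B h)(s,a)<h(s)$, so the chain $h\ge \widetilde V^*$ can fail and the second mass becomes strictly positive; the more often $\pi$ visits pessimism-violating states, the larger this residual mass grows, strictly inflating the bound. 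Finally, I would close the argument by noting that bias is a structural floor on $V^*(s_0)-V^\pi(s_0)$ that is invariant to further policy optimization, so an inflated bias translates directly into a larger irreducible performance gap and therefore slower effective convergence to $\pi^*$.

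The hard part will be step three: formally showing that pessimism of $h$ propagates to $\widetilde V^*$ so that $h-\widetilde V^*\le 0$ under false negatives (which requires a monotonicity argument on $\widetilde{\mathcal B}$), and quantifying how a single pessimism-violating state inflates the second expectation through $\pi$'s discounted visitation. The decomposition, telescoping, and convergence-implication steps are essentially bookkeeping on top of the \citet{cheng2021heuristic} machinery already cited in the excerpt, so the genuine technical content of the theorem lives in the pessimism-propagation analysis.
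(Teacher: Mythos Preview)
Your proposal is over-scoped relative to the stated lemma. The lemma is \emph{only} the inequality bounding $\mathrm{Bias}(h,\beta,\pi)$; your stages~1 and~3 (invoking the regret/bias decomposition, then analysing the two masses under false-positive versus false-negative noise) belong to the proof of Theorem~\ref{theo:main_result_false_positive_increases_bias}, not to this lemma. Only your stage~2 addresses the lemma itself.

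On stage~2: the paper does not prove this lemma at all. It explicitly writes ``We omit the proof for brevity; for detailed proof, please refer to \citep{cheng2021heuristic}'' and uses the bound as a black box. Your telescoping sketch --- bounding $V^*(s_0)-\widetilde V^*(s_0)\le V^*(s_0)-\widetilde V^{\pi^*}(s_0)$, unrolling the latter along $\rho^{\pi^*}$ via $\widetilde r = r + (1-\beta)\gamma\,\mathbb E[h]$ and $\widetilde\gamma=\beta\gamma$ to produce the $(\beta\gamma)^{t-1}(V^*-h)$ series, and rewriting the $d^\pi$ expectation in the second bias term as the $\gamma^{t-1}(h-\widetilde V^*)$ series along $\rho^\pi$ --- is exactly the argument in \citet{cheng2021heuristic}, so it is correct and matches the original source. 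Note that the second bias term becomes an equality after unpacking $d^\pi$; the only genuine inequality in the lemma enters via $\widetilde V^*\ge\widetilde V^{\pi^*}$ in the first term.

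On stage~3, which is really part of Theorem~\ref{theo:main_result_false_positive_increases_bias}: the paper establishes $h\le\widetilde V^*$ under pessimism not through a monotonicity argument on $\widetilde{\mathcal B}$ as you propose, but via Lemma~\ref{lemma:bellman_backup_of_reshaped_M}, which gives $\widetilde V^\pi(s)-h(s)=\tfrac{1}{1-\beta\gamma}\,\mathbb E_{\widetilde d_s^\pi}[(\widetilde{\mathcal B}h)(s,a)-h(s)]$. Choosing $\pi=\arg\max_a(\mathcal Bh)(s,a)$ and invoking $(\widetilde{\mathcal B}h)=(\mathcal Bh)$ from Equation~\eqref{eq:bh_same_as_bh} together with the pessimism condition makes the right-hand side non-negative, and then $\widetilde V^*\ge\widetilde V^\pi\ge h$. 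Your monotonicity route would also work (iterate $\max_a\widetilde{\mathcal B}$ starting from $h$ and use $\max_a(\widetilde{\mathcal B}h)\ge h$), but the paper's route is more direct given the machinery already imported from \citet{cheng2021heuristic}.
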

\noindent where $\rho^\pi$ denotes the trajectory \textbf{distribution} of $s_0, a_0, s_1, ...$ induced by running $\pi$ starting from $s_0$. 

This upper bound elegantly illustrates a trade-off: if the heuristic $h$ is set too high (overestimation error), it reduces the first term but increases the second term. Conversely, if $h$ is set too low (underestimation error), it increases the first term but reduces the second term. Just by inspection, it is not immediately clear whether overestimation or underestimation results in a better upper bound, as both impact the bias term in opposing ways.

We now prove that underestimation of $h$ (pessimistic $h$) is better than overestimation (i.e., $\exists s_t \in \mathcal S, h(s_t) > V^*(s_t)$) for minimizing bias. 
\begin{proof}

    \begin{enumerate}
        \item \textbf{Define Terms}: 
        \begin{itemize}
            \item Let $B_1 = \mathbb E_{\rho^{\pi^*}}\left[ \sum_{t=1}^\infty (\beta \gamma)^{t-1} (V^*(s_t) - h(s_t)) \right]$
            \item Let $B_2 = \mathbb E_{\rho^{\pi}}\left[ \sum_{t=1}^\infty \gamma^{t-1} (h(s_t) - \widetilde{V}^*(s_t)) \right]$
        \end{itemize}
        \item \textbf{Underestimation Case (Pessimistic $h$)}:
        \begin{lemma}
            \label{lemma:pessimistic_apply_all_reshaped_mdp}
            If $h$ is pessimistic with respect to $\mathcal M$, $\forall \beta \in [0,1], s\in \mathcal S, \widetilde{V}^*(s) \geq h(s)$
        \end{lemma}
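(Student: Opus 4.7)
The plan is to reduce the claim, for every $\beta \in [0,1]$, to a standard value-iteration lower-bound argument inside the reshaped MDP $\widetilde{\mathcal M}$. The pivot is the pointwise identity $(\widetilde{\mathcal B} h)(s,a) = (\mathcal B h)(s,a)$ recorded in Equation~(\ref{eq:bh_same_as_bh}), which holds for every choice of $\beta$. Combining this identity with the hypothesis that $h$ is pessimistic with respect to $\mathcal M$ (i.e.\ $\max_a (\mathcal B h)(s,a) \geq h(s)$) immediately upgrades the inequality to the reshaped Bellman backup: $\max_a (\widetilde{\mathcal B} h)(s,a) \geq h(s)$ for every $s \in \mathcal S$. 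So pessimism is preserved under reshaping, uniformly in $\beta$, and the statement is really about the fixed point of the reshaped Bellman optimality operator.

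Next I would define $(\widetilde{\mathcal B}^* v)(s) \coloneqq \max_a (\widetilde{\mathcal B} v)(s,a)$ and invoke two standard properties of this operator: monotonicity (if $u \geq v$ pointwise, then $\widetilde{\mathcal B}^* u \geq \widetilde{\mathcal B}^* v$, which follows because $\widetilde{\mathcal B}$ is affine in $v$ with nonnegative transition weights and the $\max$ preserves order) and $\widetilde \gamma$-contraction in the sup norm with unique fixed point $\widetilde V^*$. The upgraded pessimism is $\widetilde{\mathcal B}^* h \geq h$; iterating monotonicity gives $(\widetilde{\mathcal B}^*)^{n+1} h \geq (\widetilde{\mathcal B}^*)^n h \geq h$ for every $n \geq 0$, so the sequence $(\widetilde{\mathcal B}^*)^n h$ is pointwise nondecreasing. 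Contraction then yields $(\widetilde{\mathcal B}^*)^n h \to \widetilde V^*$ in sup norm, and passing to the limit in the chain of inequalities gives $\widetilde V^*(s) \geq h(s)$ for all $s \in \mathcal S$, which is exactly the claim.

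The argument is short because the substantive work was absorbed into Equation~(\ref{eq:bh_same_as_bh}); the main thing to watch is uniformity in $\beta$. Contraction requires $\widetilde \gamma < 1$, which is automatic when $\beta < 1$ (since $\widetilde \gamma = \beta \gamma$) and still holds at $\beta = 1$ because $\widetilde \gamma = \gamma < 1$; the degenerate case $\beta = 0$ makes $\widetilde{\mathcal B}^*$ a one-step operator, so $\widetilde V^*(s) = \max_a (\mathcal B h)(s,a) \geq h(s)$ directly by pessimism, without needing to iterate. The argument also tacitly uses that $h$ and the per-step reward are bounded so that $(\widetilde{\mathcal B}^*)^n h$ remains in the domain where contraction applies; this is standard in the HuRL setup and not a real obstacle. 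The only conceptual hurdle is noticing that pessimism, although stated in $\mathcal M$, is literally the same inequality in every $\widetilde{\mathcal M}$ via Equation~(\ref{eq:bh_same_as_bh}); once that is recognized, the classical monotone-value-iteration lower bound closes the proof.
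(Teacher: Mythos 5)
Your proof is correct, and it takes a genuinely different route from the paper's, even though both hinge on the same pivot, the identity $(\widetilde{\mathcal B} h)(s,a) = (\mathcal B h)(s,a)$ of Equation~\ref{eq:bh_same_as_bh}, which transfers pessimism from $\mathcal M$ to every reshaped $\widetilde{\mathcal M}$. After that point you argue entirely with the Bellman optimality operator of $\widetilde{\mathcal M}$: from $\widetilde{\mathcal B}^* h \geq h$, monotonicity gives a pointwise nondecreasing sequence $(\widetilde{\mathcal B}^*)^n h$, and $\widetilde\gamma$-contraction (valid uniformly since $\widetilde\gamma = \beta\gamma \leq \gamma < 1$) sends it to the fixed point $\widetilde V^*$, so $\widetilde V^* \geq h$; your handling of $\beta = 0$ is harmless but not even needed, as contraction modulus zero is still a contraction. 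The paper instead instantiates Lemma~\ref{lemma:bellman_backup_of_reshaped_M} from \citet{cheng2021heuristic}, which writes $\widetilde V^\pi(s_0) - h(s_0)$ as a $\frac{1}{1-\beta\gamma}$-scaled expectation of $(\widetilde{\mathcal B} h)(s,a) - h(s)$ under the discounted occupancy of $\pi$, chooses $\pi$ greedy with respect to $\mathcal B h$ so that pessimism makes the integrand nonnegative, and then uses $\widetilde V^* \geq \widetilde V^\pi$. Your version is more elementary and self-contained (no imported occupancy identity, just textbook monotone value iteration), while the paper's version yields slightly more: an explicit witness policy whose reshaped value already dominates $h$, together with a quantitative expression for the gap, which fits the HuRL machinery used in the rest of the appendix. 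Either argument proves the lemma as stated.
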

        \begin{proof}
            To begin with, we need to use another lemma from \citep{DBLP:conf/nips/ChengKS21}, shown as follows:
            \begin{lemma}[Bellman equation of reshaped MDP \citep{DBLP:conf/nips/ChengKS21}]
            \label{lemma:bellman_backup_of_reshaped_M}
                For any policy $\pi$, we have
                \begin{equation}
                    \widetilde{V}^\pi(s_0) - h(s_0) = \frac{1}{1-\beta\gamma} \mathbb E_{\widetilde{d}_{s_0}^{\pi}}[ (\widetilde{\mathcal B} h)(s,a) - h(s)]
                \end{equation}
                where $\widetilde{d}_{s_0}^{\pi}$ refers to the \textbf{discounted average state distribution} of policy $\pi$ in reshaped MDP $\widetilde{\mathcal M}$. It can be expressed as $\widetilde{d}_{s_0}^{\pi} = (1 -\widetilde{\gamma}) \sum_{t=0}^\infty \widetilde{\gamma}^t d_t^{\pi}$, and $d_t^{\pi}$ is the state distribution of policy $\pi$ at time $t$ with $d_0^{\pi} = \mathbf 1\{s = s_0\}$.
            \end{lemma}
            For brevity, we do not include the proof for Lemma~\ref{lemma:bellman_backup_of_reshaped_M}.

            First of all, due to the definition of optimal value $V^*$, we have
            \begin{equation}
                \widetilde{V}^*(s_0) \geq \widetilde{V}^\pi(s_0)
            \end{equation}
            Then, according to Lemma~\ref{lemma:bellman_backup_of_reshaped_M}, we can get 
            \begin{equation}
                \widetilde{V}^*(s_0) \geq \widetilde{V}^\pi(s_0) = h(s_0) + \frac{1}{1-\beta\gamma} \mathbb E_{\widetilde{d}_{s_0}^{\pi}}[ (\widetilde{\mathcal B} h)(s,a) - h(s)]
            \end{equation}

            Recall that the use of VLM-based reward model transform the original MDP into a shaped MDP \(\widetilde{\mathcal{M}} = \langle \mathcal{S}, \mathcal{A}, \mathcal{P}, s_0, \tilde{r}, \gamma \rangle\), where the reward function becomes \(\tilde{r} = r^{e}(s, a) + r^{v}(\tau_t, l_{m(t)})\). Also, it is important to distinguish the Bellman backup equation under the two different MDPs: the original $\mathcal M$ and the reshaped $\widetilde{\mathcal M}$. By definition, $(\mathcal B h)(s,a) = r(s,a) +\gamma \mathbb E_{s' \sim \mathcal P(\cdot | s,a)}[h(s')]$. In contrast, $(\widetilde{\mathcal B} h)(s,a) = \widetilde r(s,a) + \widetilde \gamma \mathbb E_{s' \sim \mathcal P(\cdot | s,a)}[h(s')]$. Nevertheless, the two Bellman backup equations possess a remarkable property -- they are essentially equivalent to each other:
 
            \begin{flalign}
                (\widetilde{\mathcal B} h)(s,a) &= \widetilde r(s,a) + \widetilde \gamma \mathbb E_{s' \sim \mathcal P(\cdot | s,a)}[h(s')] \nonumber \\
                &= \left( r(s,a) + (1-\beta) \gamma \mathbb E_{s' \sim \mathcal P(\cdot | s ,a)}[h(s')] \right) + \beta \gamma \mathbb E_{s' \sim \mathcal P(\cdot | s,a)}[h(s')] \nonumber\\ 
                &= r(s,a) +\gamma \mathbb E_{s' \sim \mathcal P(\cdot | s,a)}[h(s')] \nonumber \\
                &= (\mathcal B h)(s,a)      \label{eq:bh_same_as_bh}
            \end{flalign}

            Let $\pi$ denote the greedy policy of $\arg \max_a (\mathcal B h)(s,a)$ and then trace back to Equation~\ref{eq:bh_same_as_bh}, we have $(\widetilde{\mathcal B} h)(s,a) = (\mathcal B h)(s,a)$, that means
            \begin{flalign}
                \widetilde{V}^*(s_0) \geq \widetilde{V}^\pi(s_0) &= h(s_0) + \frac{1}{1-\beta\gamma} \mathbb E_{\widetilde{d}_{s_0}^{\pi}}[ (\widetilde{\mathcal B} h)(s,a) - h(s)] \\
                &= h(s_0) + \frac{1}{1-\beta\gamma} \mathbb E_{\widetilde{d}_{s_0}^{\pi}}[ (\mathcal B h)(s,a) - h(s)] \\
                &\geq h(s_0) \quad \text{[direct result of $h$ being pessimistic]}
            \end{flalign}
            \textbf{Generalization to Any State}: The lemma from \citep{DBLP:conf/nips/ChengKS21} (Lemma~\ref{lemma:bellman_backup_of_reshaped_M}) is stated in terms of any starting state $s_0$. Therefore, we can replace $s_0$ with any state $s \in \mathcal{S}$ in our analysis.
            Thus we get $\forall s \in \mathcal S, \widetilde{V}^*(s) \geq h(s)$
        \end{proof}
        Lemma~\ref{lemma:pessimistic_apply_all_reshaped_mdp} implies that when $h$ is pessimistic, $B_2 = \mathbb E_{\rho^{\pi}}\left[ \sum_{t=1}^\infty \gamma^{t-1} (h(s_t) - \widetilde{V}^*(s_t)) \right] \leq 0$ because $\widetilde{V}^*(s) \geq h(s)$ for all $s$. Therefore, the bias error is only bounded by $B_1$ when $h$ is pessimistic, i.e., $\mathrm{Bias}(h,\beta,\pi) \leq B_1$. 

        \item {\textbf{Overestimation Case ($\exists s_t \in \mathcal S, h(s_t) > V^*(s_t)$)}}:
        
        We show briefly that, unlike the underestimation error, the overestimation error does not have a closed-form upper bound expression. The difficulty, as pointed out by \citet{DBLP:conf/nips/ChengKS21}, originates from the trajectory-dependence on $\mathbb{E}_{\rho^\pi}[\cdot]$ as the $l_\infty$ approximation error here can be difficult to control in large state spaces. It is possible that, upon picking up falsely high $h$ states, $\rho^\pi$ gets further distorted away from $\rho^{\pi^*}$ and accumulates more falsely high $h$ states. In other words, this trajectory dependence makes $B_2$ prone to a feedback loop of accumulating overestimation errors and resulting in a much larger upper bound for the bias.
    \end{enumerate}
\end{proof}

\subsection{Convergence Guarantee of HuRL Compared With Potential-Based Reward Shaping}
\label{app_subsec:convergene_guarantee_of_hurl}
Previous work on potential-based reward shaping (PBRS) has established that learning a policy on a reshaped MDP can converge to the optimum of the original MDP, as proved by \citet{icml/NgHR99}. They proved that the optimal $Q$ value of the reshaped MDP is equivalent to the optimal $Q$ value of the original MDP minus a state-dependent function. Consequently, for an optimal policy defined as $\pi^* = \arg \max_{a \in \mathcal{A}} Q^*(s,a)$, this additional state-dependent value does not alter the policy. Thus, the optimal policy of the original MDP $\mathcal M$ is also the optimal policy for the reshaped $\widetilde{\mathcal M}$. However, the theoretical convergence for the reshaped MDP within the framework of Heuristic-Guided Reinforcement Learning (HuRL) has not been proven with the same rigor.

The authors of HuRL made a trick which involves manipulating the coefficient $\beta$, which scales the heuristic-based auxiliary rewards. If $\beta$ increases gradually from 0 to 1 over the training process, the agent effectively transitions from interacting with the modified MDP back to the original MDP. This method ensures that, at the end of training, the agent is exactly solving the original problem. Moreover, according to the Blackwell optimal property \citep{chen2018improving}, convergence can occur before $\beta$ reaches 1, thereby allowing the trained policy to maintain optimality in the original MDP under HuRL. 

However, in the original HuRL paper, among the five test environments, the $\beta$ hyperparameter was actually fixed for four of them. Surprisingly, HuRL still converged to the optimal policy faster than the original MDP without updating $\beta$. This observation poses an enigma within the HuRL framework, as it is unclear why this convergence to the optimum of the original MDP occurs without dynamically adjusting $\beta$. 

Thus, while the practical success of HuRL in these environments is evident, the underlying reasons for such convergence remain to be fully understood. This discrepancy between theoretical expectation and empirical results leaves room for future research to explore why HuRL converges under fixed $\beta$. Nevertheless, proving the convergence properties goes beyond the scope of our paper, which aims to highlight the prevalence of false positive rewards and their impact. An alternative perspective on this issue is to consider the learning process of maximizing auxiliary rewards as akin to performing behavior cloning from experts. In this sense, we do not need to focus on convergence within the current MDP but can view it as a form of pretraining.

\section{Additional Experimental Setup Details}
\label{app:additional_experiment_details}

\subsection{VLM-based Reward Model Procedure}
\label{app_subsec:instruction_guide_procedure}
We provide a brief overview of the VLM-based reward model procedure. For a detailed procedure, please refer to \aref{ireward_procedure}. The VLM-based reward model will have a pointer to the sequence of the instruction sentence, starting at the first sentence. For original models \emph{Pixl2R} and \emph{ELLM-}, we follow the setting in their original work where for each instruction sentence (the full instruction essay will be split into multiple sentences and treat each sentence as atomic instruction $l_k$), the reward model will have a maximum cap of rewards (2.0) it can assign to the agent in one episode. When the cap is reached, the reward model will move its pointer to the next instruction sentence. This cap value was selected as a hyperparameter in prior work, intended to serve as a heuristic threshold indicating instruction completion. However, we note that this approach does not guarantee semantic completion, as high cumulative similarity scores can still occur despite incomplete or incorrect execution. For the \textsc{BiMI} reward model, the reward model will move its pointer to the next instruction sentence when the binary signal is triggered. 

In terms of the compute resources, we mainly use one NVIDIA 3090 GPU (24GB VRAM) for training and running both the VLM-based reward model and the RL policy model.

\subsection{Finetuning VLM-based Reward Models}
In contrast to previous work where they rely on hand-crafted oracle multimodal reward models (e.g., \citet{DBLP:conf/icml/DuWWCDA0A23}), we use actual pretrained VLMs to generate reward signals. 2 VLM backbone models are used in our experiments: 1) \emph{CLIP} \citep{DBLP:conf/icml/RadfordKHRGASAM21}, pretrained by image-text pairs; and (2) \emph{X-CLIP} \citep{Ma2022XCLIPEM}, pretrained by video-text pairs. In particular, \emph{Pixl2R} uses \emph{CLIP} because it only uses the single latest frame as input. In contrast, \emph{ELLM-} takes a slice of trajectory (i.e., multiple frames) as input, and thus uses either \emph{X-CLIP} or \emph{CLIP} with additional RNN encoder as the reward model.

Due to the cartoonish and abstract visuals of the testing environments, we further fine-tune the VLMs to adapt to this new visual domain. This fine-tuning is based on the \emph{contrastive learning} method. To collect data, we use well-trained expert agents from \citep{moon2023ad} to generate expert trajectories for the Crafter environments and annotate them with instructions using internal information from the game engine. For Minigrid environments, we use classical search-based planning robots to generate expert trajectories and annotate them with the corresponding task instructions. For Montezuma's Revenge, we manually annotate the expert trajectories.

For Minigrid and Crafter, we collected approximately 80,000 training pairs, each consisting of a natural language instruction and a corresponding visual observation (i.e. a short trajectory snippet). These annotations were generated using internal game state information with corresponding instruction templates. For Montezuma's Revenge, we manually created around 300 high-quality instruction-observation pairs based on expert demonstrations. These training data are of high quality, as we have made every effort to avoid false positive rewards due to poor training data quality. However, despite the fine-tuning process, false positive rewards remain unavoidable. \cref{fig:training_data_example} presents an example of instruction data employed for VLM finetuning in the Montezuma's Revenge environment.

\begin{figure}[H]
    \centering
    \includegraphics[width=0.7\columnwidth]{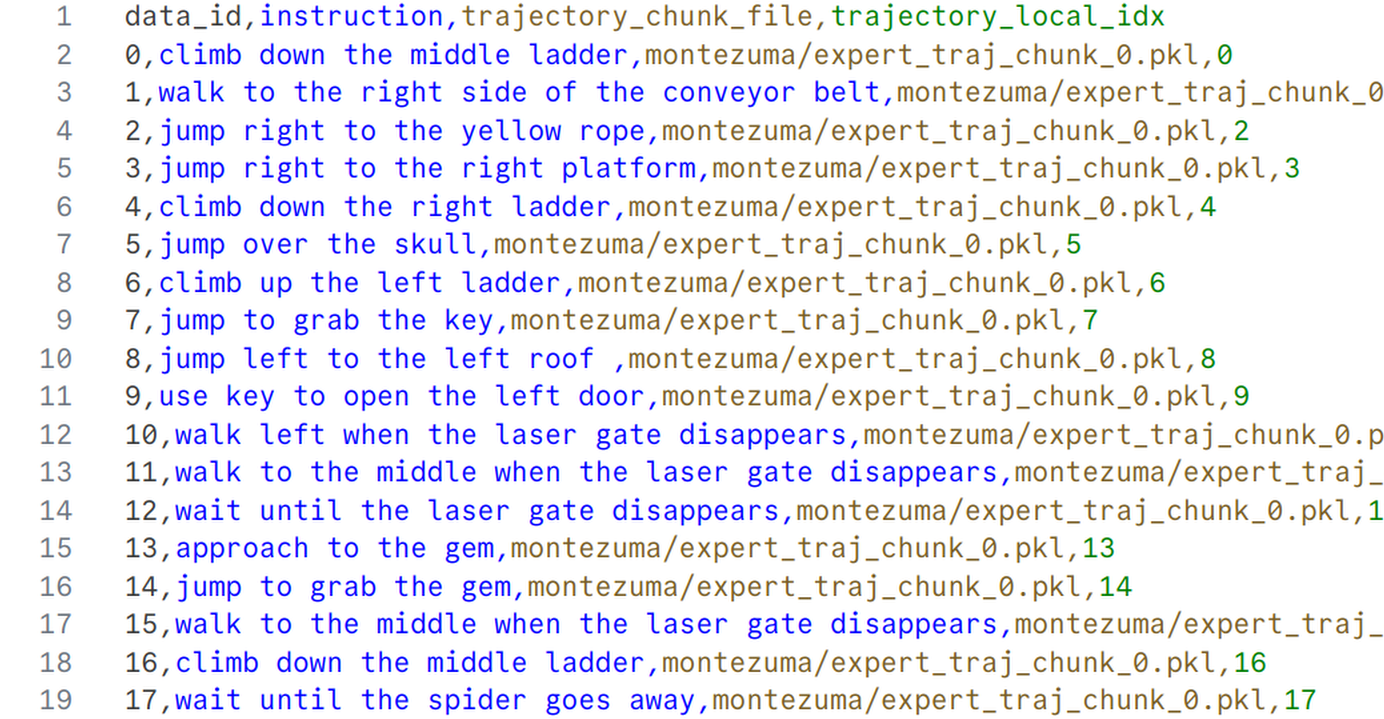}
    \caption{Example of expert instruction data for VLM finetuning under in Montezuma's Revenge environment.}
    \label{fig:training_data_example}
\end{figure}

We determine the threshold $\hat{q}$ using conformal prediction to convert continuous cosine similarity scores into binary labels for evaluating the VLM's ability to recognize correct instruction-observation pairs. This allows us to compute standard classification metrics such as precision, recall, and F1-score—providing a preliminary assessment of VLM performance independent of downstream RL interaction. The performance of the fine-tuned VLM-based reward models is shown in Table~\ref{tab:vlm_performance}. However, this high precision does not guarantee robust agent performance. As we demonstrate later, agents guided by these models significantly underperform in out-of-distribution (O.O.D.) testing environments, where false positive rewards become prevalent due to distributional shifts, revealing a critical limitation in generalization.

\begin{table}[H]
    \centering
    \caption[
        Performance of fine-tuned VLM Reward Model
    ]{Performance of fine-tuned VLM reward model on the testing dataset using the 90th percentile empirical quantile as threshold}
    \label{tab:vlm_performance}
    \begin{tabular}{@{}llllll@{}}
    \toprule
    Environment & Precision & Accuracy & F1 Score & Recall & Model       \\ \midrule
    Crafter     & 0.9847    & 0.9466   & 0.8538   & 0.9702 & CLIP ELLM-  \\
    Crafter     & 0.9799    & 0.9028   & 0.7618   & 0.9842 & CLIP Pixl2R \\
    Crafter     & 0.2095    & 0.2514   & 0.2868   & 0.9657 & XCLIP ELLM- \\ \midrule
    Minigrid    & 0.7260    & 0.9200   & 0.7849   & 0.9763 & CLIP ELLM-  \\
    Minigrid    & 0.6992    & 0.9086   & 0.7592   & 0.9616 & CLIP Pixl2R \\
    Minigrid    & 0.1716    & 0.2310   & 0.2642   & 0.9704 & XCLIP ELLM- \\ \midrule
    Montezuma   & 0.8838    & 0.9638   & 0.8825   & 0.9478 & CLIP ELLM-  \\
    Montezuma   & 0.8343    & 0.9108   & 0.7652   & 0.9842 & CLIP Pixl2R \\
    Montezuma   & 0.8044    & 0.9259   & 0.8045   & 0.9657 & XCLIP ELLM- \\ \bottomrule
    \end{tabular}%
\end{table}

\subsection{Hyperparameters for VLM Reward Model + RL Agents}
In the experiments, all methods are implemented based on PPO with same model architecture. The Minigrid and Crafter environments use the same training hyperparameters as the Achievement Distillation paper \citep{moon2023ad}. For Montezuma's Revenge, we found that the performance of the agent was sensitive to the gamma and GAE lambda parameters. To improve the performance of agents in Montezuma's Revenge, we took two additional steps: (1) normalizing the observation inputs when computing the rewards, and (2) not normalizing the advantage during the GAE calculation. The hyperparameters are shown in Table~\ref{tab:ppo_model_parameter}, Table~\ref{tab:crafter_minigrid_rl_parameter} and Table~\ref{tab:monte_rl_params}.

\begin{figure}[H]
    \footnotesize
    \centering
    \captionof{table}{Agent Policy Model Parameters}
    \label{tab:ppo_model_parameter}
    \resizebox{0.25\linewidth}{!}{%
    \begin{tabular}{@{}ll@{}}
        \toprule
    Parameter                 & Value              \\ \midrule
    model\_cls                & Recurrent PPO      \\
    hidsize                   & 1024               \\
    gru\_layers               & 1                  \\
    impala\_kwargs            &                    \\
    \quad - chans                   & {[}64, 128, 128{]} \\
    \quad - outsize                 & 256                \\
    \quad - nblock                  & 2                  \\
    \quad - post\_pool\_groups      & 1                  \\
    \quad - init\_norm\_kwargs      &                    \\
    \quad - batch\_norm             & false              \\
    \quad - group\_norm\_groups     & 1                  \\
    dense\_init\_norm\_kwargs &                    \\
    \quad - layer\_norm             & true              \\ \bottomrule
    \end{tabular}%
    }
\end{figure}

\begin{figure}[H]
    \centering
\begin{minipage}{0.46\textwidth}
    \footnotesize
    \centering
    \captionof{table}{Crafter and Minigrid RL Training Parameters}
    \label{tab:crafter_minigrid_rl_parameter}
    \resizebox{0.5\textwidth}{!}{%
    \begin{tabular}{@{}ll@{}}
        \toprule
    Parameter         & Value          \\ \midrule
    gamma              & 0.95  \\
    gae\_lambda        & 0.65  \\
    algorithm\_cls    & PPO Algorithm \\
    algorithm\_kwargs &                \\
    \quad - ppo\_nepoch     & 3              \\
    \quad - ppo\_nbatch     & 8              \\
    \quad - clip\_param     & 0.2            \\
    \quad - vf\_loss\_coef  & 0.5            \\
    \quad - ent\_coef       & 0.01           \\
    \quad - lr              & 3.0e-4         \\
    \quad - max\_grad\_norm & 0.5            \\
    \quad - aux\_freq       & 8              \\
    \quad - aux\_nepoch     & 6              \\
    \quad - pi\_dist\_coef  & 1.0            \\
    \quad - vf\_dist\_coef  & 1.0           \\ \bottomrule
    \end{tabular}%
    }
\end{minipage}
\hfill
\begin{minipage}{0.46\textwidth}
    \footnotesize
    \centering
    \captionof{table}{Montezuma's Revenge RL Training Parameters}
    \label{tab:monte_rl_params}
    \resizebox{0.5\textwidth}{!}{%
    \begin{tabular}{@{}ll@{}}
    \toprule
    Parameter             & Value          \\ \midrule
    gamma                 & 0.99           \\
    gae\_lambda           & 0.95           \\
    int\_rew\_type        & ``rnd''          \\
    pre\_obs\_norm\_steps & 50             \\
    algorithm\_cls        & PPO Algorithm \\
    algorithm\_kwargs     &                \\
    \quad - update\_proportion  & 0.25           \\
    \quad - ppo\_nepoch         & 3              \\
    \quad - ppo\_batch\_size    & 256            \\
    \quad - clip\_param         & 0.1            \\
    \quad - vf\_loss\_coef      & 0.5            \\
    \quad - ent\_coef           & 0.001          \\
    \quad - lr                  & 1.0e-4        \\ \bottomrule
    \end{tabular}%
    }
\end{minipage}
\end{figure}

\subsection{Evaluation Metrics}

As detailed in \aref{subsubsec:lit_rev_evaluation_metrics_benchmarks_rl_metrics}, we adopted the \textbf{\emph{score}} metric from the Crafter benchmark \cite{DBLP:conf/iclr/Hafner22} for performance evaluation, as it effectively measures consistent performance across multiple subtasks in sparse reward environments. Unlike the \emph{maximum total rewards} metric, which does not adequately reflect consistent performance, the score metric offers a more reliable indicator of learning progress.

\section{Details of Showing the Prevalence of False Positives in VLM Cosine Similarity Scores}

\label{app_subsec:other_envs_prevalence_of_false_pos}
In this work, we have shown that the VLM-based reward models assign high rewards to manipulated trajectory-instruction pairs, indicating the prevalence of false positive rewards. We list the figures of reward signals from learned VLMs for different types of trajectory-instruction pairs for each individual environment --- the Montezuma, Minigrid, and Crafter environments. All figures show that the VLM-based reward models assign high rewards to manipulated trajectory-instruction pairs, indicating the prevalence of false positive rewards.

\begin{figure}[H]
    \centering
    \includegraphics[width=\columnwidth]{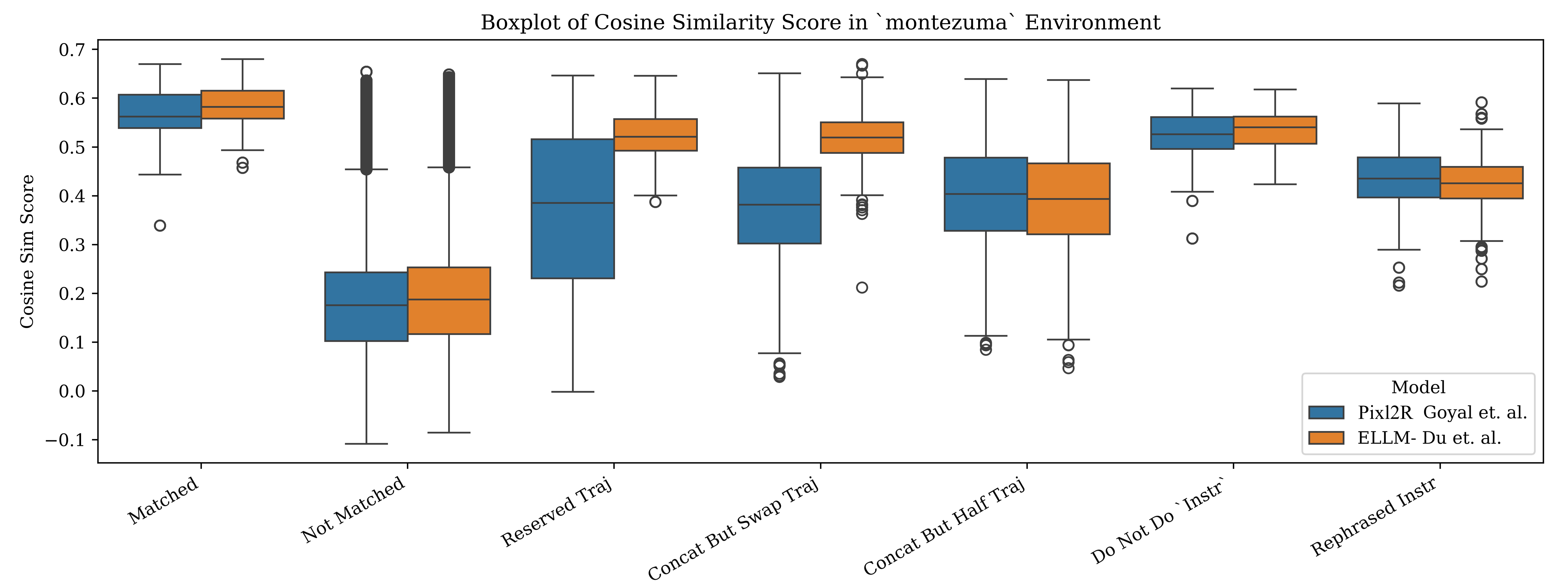}
    \caption{Cosine similarity scores for match, mismatch and manipulated trajectory-instruction pairs in Montezuma.}
\end{figure}

\begin{figure}[H]
    \centering
    \includegraphics[width=\columnwidth]{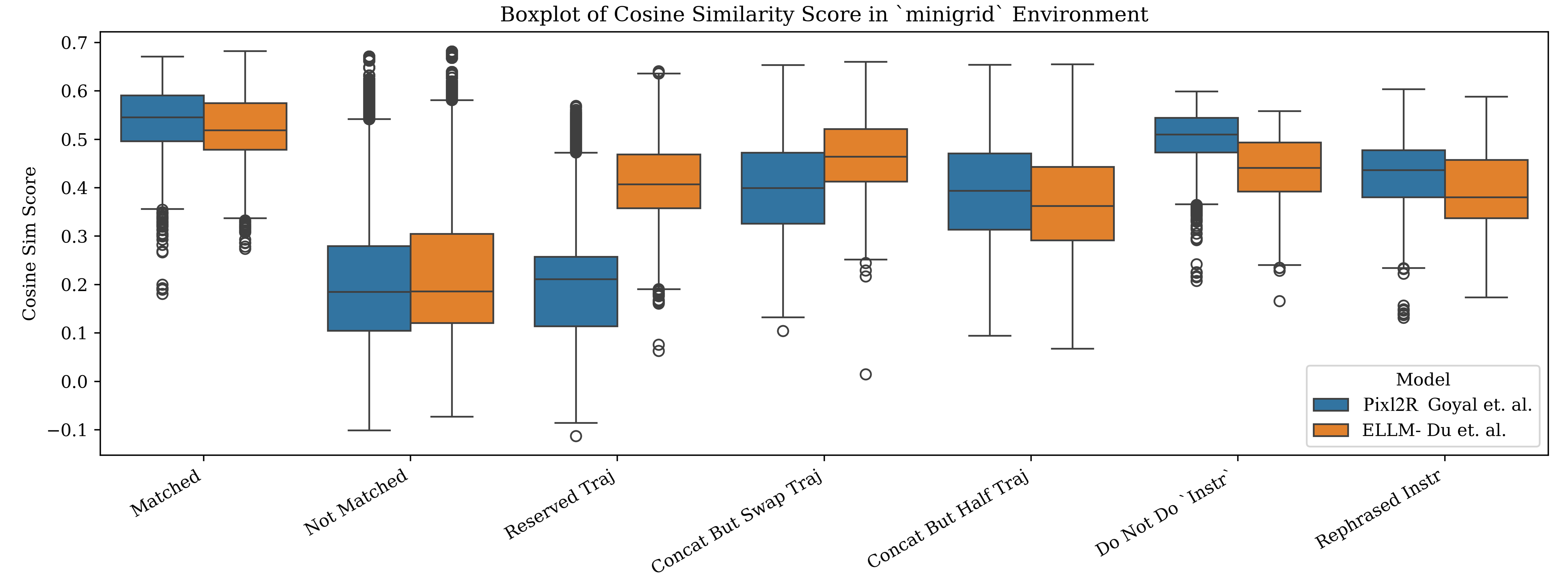}
    \caption{Cosine similarity scores for match, mismatch and manipulated trajectory-instruction pairs in Minigrid.}
\end{figure}

\begin{figure}[H]
    \centering
    \includegraphics[width=\columnwidth]{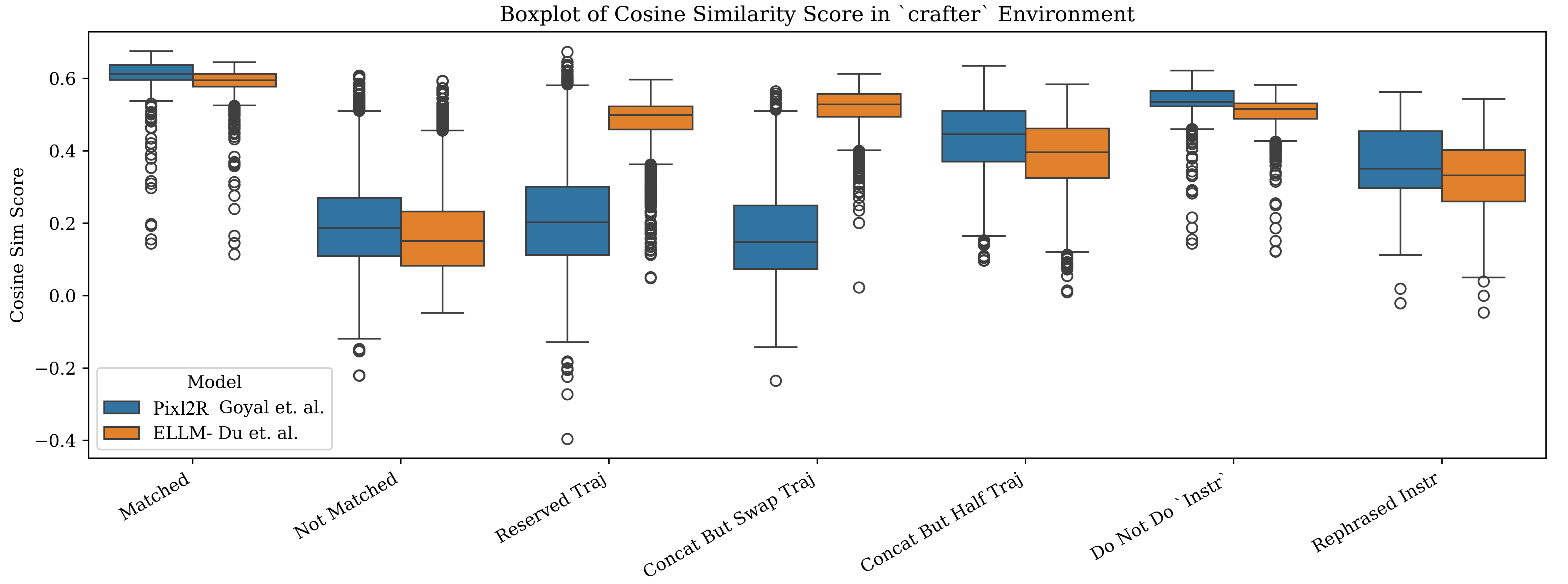}
    \caption{Cosine similarity scores for match, mismatch and manipulated trajectory-instruction pairs in Crafter.}
\end{figure}

\section{Extra Experiment on Simulated Oracle Reward Model to demonstrate \textbf{H3} and \textbf{H4}}
\label{app_subsec:simulated_oracle_reward_model}

In our efforts to assess the impact of false positive rewards from auxiliary reward model without the interference of other factors such as domain shift, poor data quality, and errors from other issues such as the choice of multimodal architectures, we devised a \textbf{simulated} auxiliary reward model (also known as \emph{oracle} auxiliary reward model) that access to internal state information from the game engine. The model compares the sequence of past actions and states of the agent with predefined target intermediate state sets that map to each instruction sentence. This is feasible in Montezuma's Revenge environment as we are able to access coordinate system information directly from the game engine. This access allows us to locate the current positional information of the agent and also label specific intermediate states as targets and assign rewards to the agent accordingly. 

We therefore designed three types of simulated reward model:

\begin{itemize}
    \item \textbf{Sim RM 1 (Perfect)} generates rewards accurately whenever the agent reaches the designated intermediate states. Furthermore, it strictly adheres to the chronological sequence of instructions; rewards for subsequent instructions are only awarded if all preceding instructions have been fulfilled.
    \item \textbf{Sim RM 2 (False Positive)} introduces a tolerance for false positive rewards but with reduced reward magnitudes, all while maintaining the temporal sequence. This is implemented by defining a radius $\sigma$, where if the agent enters a circle centered at the target state with a radius of $\sigma$, it receives a small amount of rewards.
    \item \textbf{Sim RM 3 (Temporal Insensitive False Positive)} disregards the chronological order of instructions, allowing rewards for later tasks even if earlier ones remain unfulfilled. However, note that fulfilling every sub-task will still result in the agent receiving the maximum total rewards. Therefore, in theory, the policy will eventually converge.
\end{itemize}    

Results are reported in Table~\ref{tab:chap_3_sup_oracle_RM_mainresult}. Several important observations are as follows:
\begin{table}[H]
    \footnotesize
\centering
\caption[
    Agent performance in Montezuma's Revenge with simulated (oracle) reward models
]{Agent performance in Montezuma's Revenge, evaluated across three rooms with the goal state being the exit through a designated door. Metrics measured are the Area Under the Curve (AUC) for total reward (where higher is better) and success rate (SR) of reaching the goal state. The baseline is annotated using a $\star$ symbol. The results are averaged over 5 runs, with standard deviations reported.}
\label{tab:chap_3_sup_oracle_RM_mainresult}
\begin{tblr}{width = 0.95\linewidth,colspec = {Q[473]Q[146]Q[106]},row{1} = {p},hline{1-2,8} = {-}{},hline{4,6} = {-}{dashed}}
Model                    & AUC             & SR    \\
(1) PPO \citep{DBLP:journals/corr/SchulmanWDRK17} & all failed      & 0\%   \\
(2) PPO+RND $\star$ \citep{Burda2018ExplorationBR}             & 0.550$\pm$0.066 & 100\% \\
(3) PPO + \textbf{Sim RM 1 (Perfect)}     & 0.287$\pm$0.048 & 100\% \\
(4) PPO+RND + \textbf{Sim RM 1 (Perfect)} & 0.608$\pm$0.073 & 100\% \\
(5) PPO+RND + \textbf{Sim RM 2 (False Positive)} & 0.183$\pm$0.187 & 73.3\%  \\
(6) PPO+RND + \textbf{Sim RM 3 (Temp. Insen.)} & 0.051$\pm$0.116 & 16.7\%   
\end{tblr}
\end{table}

\begin{enumerate}
    \item Perfect auxiliary reward model have shown enhanced performance compared to weak RL baselines like PPO. While agents trained solely with PPO struggled to play Montezuma, incorporating Perfect auxiliary reward model into PPO (i.e., entry (3) in Table~\ref{tab:chap_3_sup_oracle_RM_mainresult}) did find the goal state, with the success rate increasing from 0\% to 100\%. However, we observed that the auxiliary reward model learns more slowly than the intrinsic reward model. This is evidenced by its lower AUC score (0.287 for entry (3) vs. 0.550 for entry (2) in Table~\ref{tab:chap_3_sup_oracle_RM_mainresult}), where AUC reflects how quickly the agent learn to reach the goal state. This finding was initially surprising, but upon examining the agent movement heatmaps in Figure~\ref{fig:heatmapa} and Figure~\ref{fig:heatmapb}, an explanation becomes clear: the PPO+RND model discovers shorter paths to the goal state. The heatmap reveals that the PPO+RND agent learns to directly jump to a rope, bypassing the use of a ladder and conveyor belt as suggested by the expert instructions. This observation highlights a nuance in VLM-based reward signals, potentially overlooking more efficient routes when expert instructions do not serve as landmarks (a well-known concept in heuristic search in the field of classical planning).
    \item We observed a remarkable synergy between the instruction-based reward model and the intrinsic reward model. As shown in entry (4) in Table~\ref{tab:chap_3_sup_oracle_RM_mainresult}, the PPO+RND + Sim RM 1 (Perfect) agent achieves the highest AUC score, demonstrating that it learns to reach the goal state faster than any other model.
    \item False positive rewards significantly impeded learning, as shown by entries (5) and (6) in \cref{tab:chap_3_sup_oracle_RM_mainresult}. Heatmaps (\cref{fig:heatmapb}, \cref{fig:heatmapc}) reveal agents frequently visiting dead ends (e.g., falling off cliffs) under these conditions, suggesting that the agent is trapped in local minima. To assess their impact, we compare these to PPO+RND (entry (2)), a baseline lacking VLM-based instruction rewards. Since (2) does not even have a VLM-based reward model, we treat it as a baseline with \textbf{full false negative} instruction-based rewards. The AUC scores of (5) and (6) are lower than (2), indicating that false positives are more detrimental than false negatives, supporting our hypothesis \textbf{H4}, as detailed shortly.
    \item When the VLM reward model ignored temporal ordering (entry (6)), convergence failed, with the success rate dropping to 16.7\% and RND unable to recover within the time limit. Heatmap (\cref{fig:heatmapd}) illustrates the agent fixating on the final instruction (``walk left to the door'') without securing the key, trapping it in local minima near $s_0$ due to false positives --- a form of composition insensitivity (\cref{sec:approximation_error}). Moreover, given that Montezuma's Revenge environment has a fixed initial state distribution, (i.e., the agent always starts at a fixed $s_0$), it is not guaranteed that the RL algorithm will eventually reach the global optimum with maximum total rewards, as highlighted by \citet{Agarwal2019OnTT}.
\end{enumerate}

\begin{figure}[t]
    \captionsetup[sub]{font=tiny}
    \centering
     \begin{subfigure}[b]{0.24\textwidth}
         \centering
         \includegraphics[width=\textwidth, height=3.8cm]{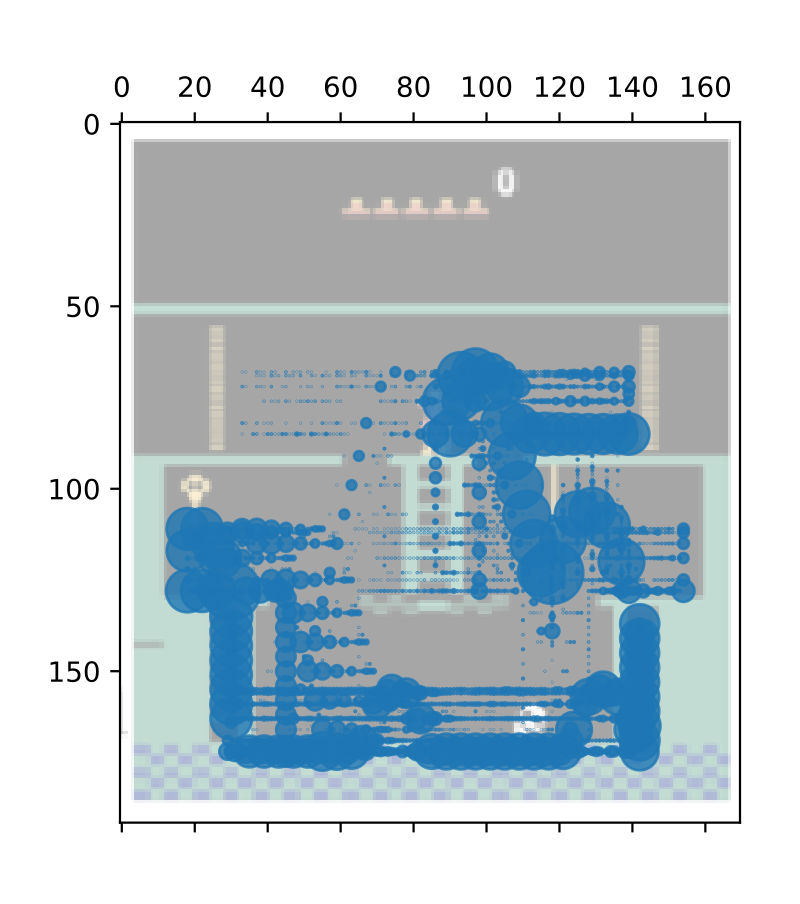}
         \caption{PPO+RND (same as + RM (False Negative))}
         \label{fig:heatmapa} 
     \end{subfigure}
     \hfill
     \begin{subfigure}[b]{0.24\textwidth}
         \centering
         \includegraphics[width=\textwidth, height=3.8cm]{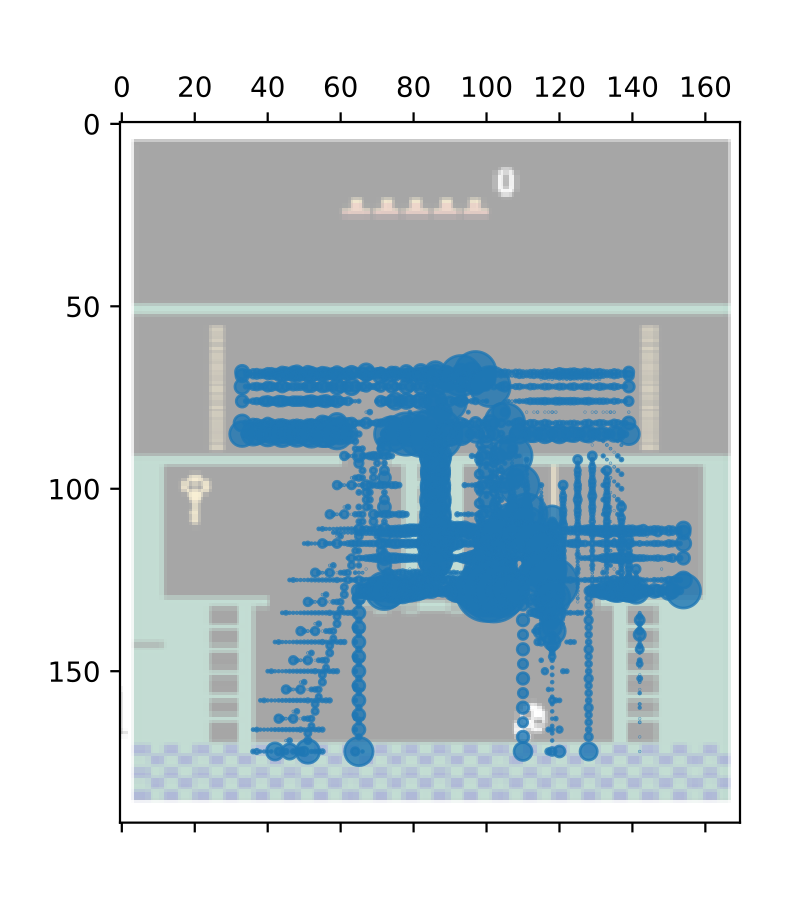}
         \caption{PPO+RND + Sim RM 1 (Perfect Simulation)}
         \label{fig:heatmapb} 
     \end{subfigure}
     \hfill
     \begin{subfigure}[b]{0.24\textwidth}
         \centering
         \includegraphics[width=\textwidth, height=3.8cm]{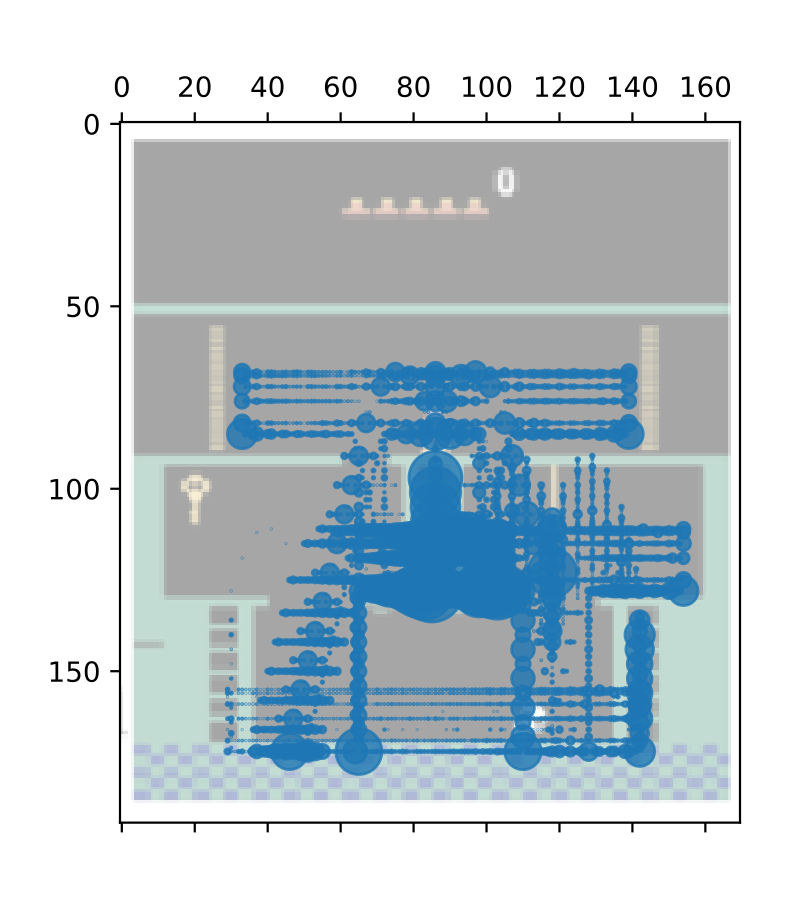}
         \caption{PPO+RND + Sim RM 2 (False Positive)}
         \label{fig:heatmapc} 
     \end{subfigure}
     \hfill
     \begin{subfigure}[b]{0.24\textwidth}
         \centering
         \includegraphics[width=\textwidth, height=3.8cm]{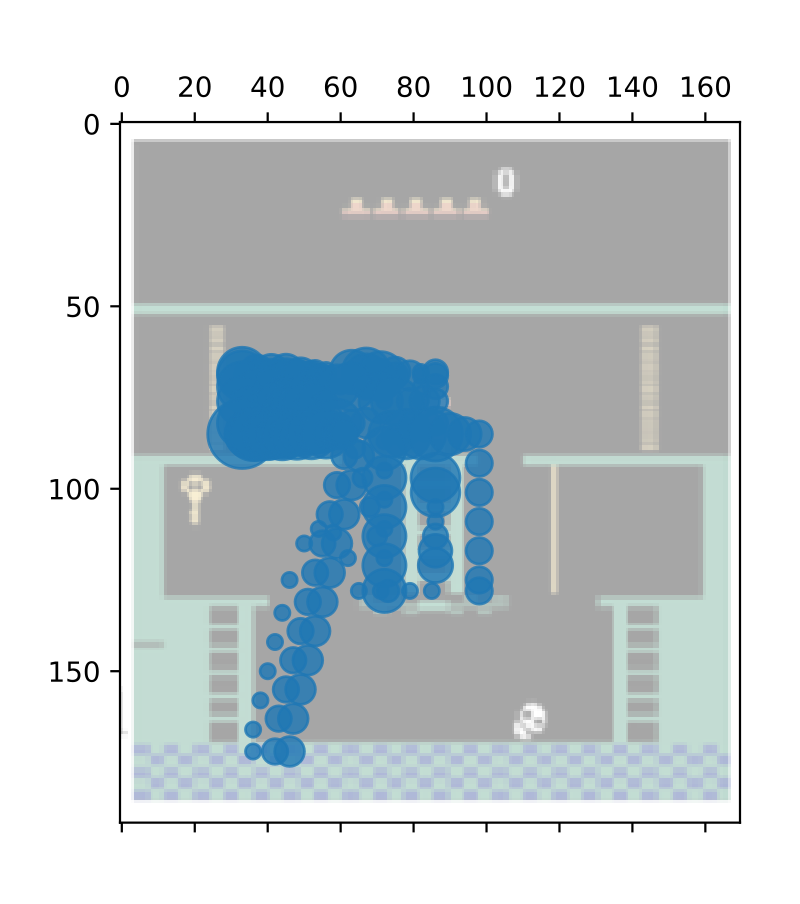}
         \caption{PPO+RND + Sim RM 3 (Temporal Insensitivity)}
        \label{fig:heatmapd}   
     \end{subfigure}

    \caption[
        Movement heatmap for various PPO+RND agents in Montezuma's Revenge
    ]{Movement heatmap for PPO+RND agents when different simulated auxiliary instruction-following-based reward models are involved.}
    \label{fig:heatmap}
\end{figure}

Figure~\ref{fig:heatmap} provides initial evidence of the differential impact of false negatives versus false positives on training outcomes, as posited by \textbf{H4}, suggesting that false positives --- particularly those tied to temporal insensitivity --- more severely degrade final scores.

\section{Algorithm for Empirical Quantile Calculation}
\label{app_subsec:empirical_quantile_calculation}
The empirical quantile $\hat{q}$ is calculated using the conformal prediction method, shown as follows:

\begin{algorithm}[H]
    \footnotesize
    \caption{Calculate Empirical Quantile ($\hat{q}$) in VLM RM}
    \label{alg:chapt_3_threshold_calculation}
    \begin{algorithmic}[1] 
        \Require Calibration set $\{\tau, l\}_n$, where $l$ is the instruction sentence, $\tau$ is the corresponding trajectory, and $n$ is the number of samples;

        User-defined error rate $\alpha$; 

        VLM model reward model $v$

        \LComment{Obtain the similarity-based score}
        \State \{$r\}_n \gets \{v(\tau, l)\}_n$
        \LComment{Compute the quantile level}
        \State $q_\text{level} \gets \frac{\lceil (n-1) \times (1-\alpha) \rceil}{n}$
        \LComment{Compute the empirical quantile}
        \State $\hat q \gets \texttt{np.quantile($\{r\}_n$, $q_\text{level}$, method=`lower')}$ 
        \State \Return $\hat q$
    \end{algorithmic}
\end{algorithm}

\section{Ablation Study Lineplots for BiMI reward}
\label{app_subsec:ablation_study_lineplots_bimi}

\begin{figure}[H]
    \centering
    \includegraphics[width=0.85\textwidth]{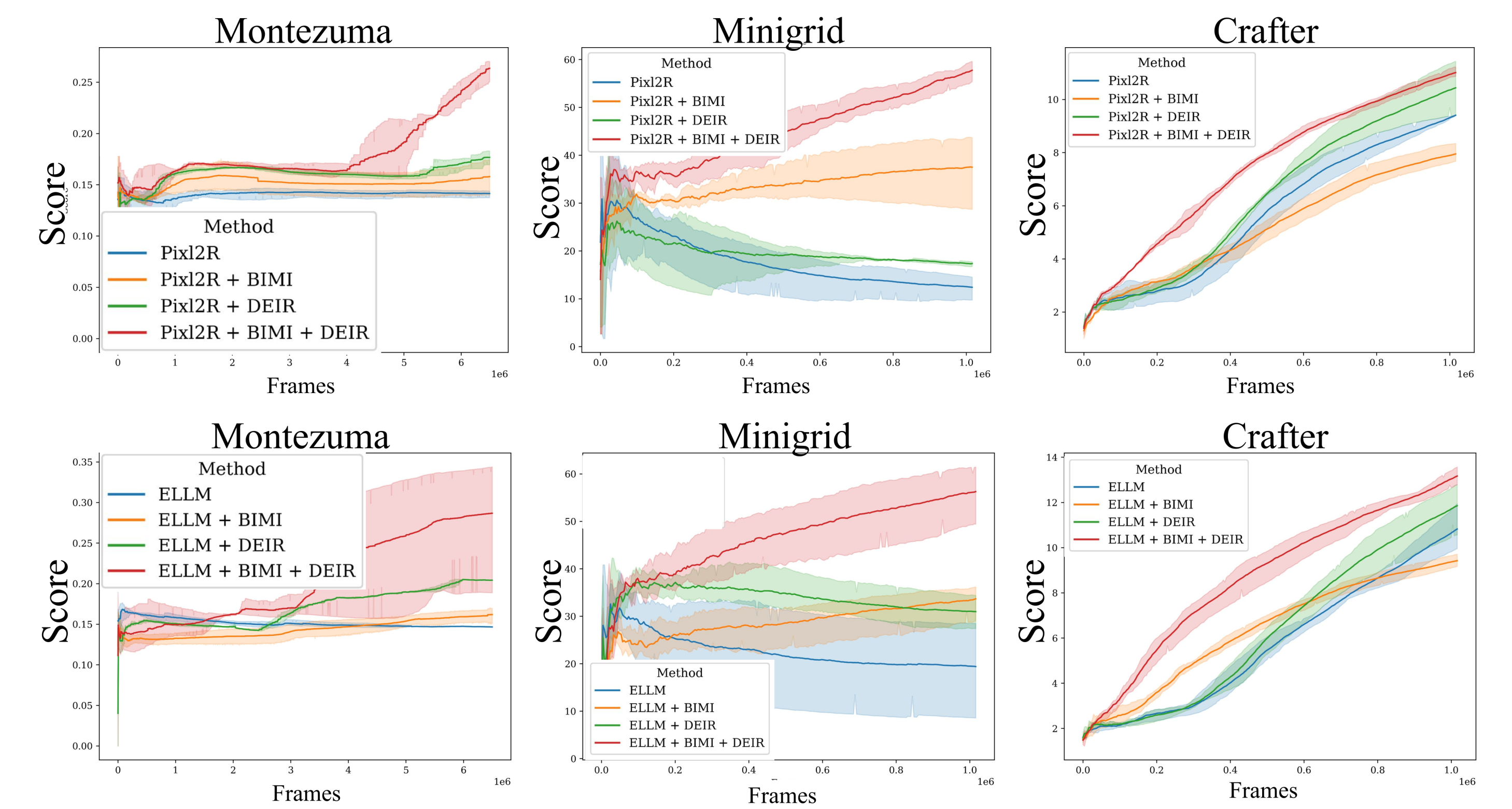}
    \caption[
        Training lineplot of \textsc{BiMI} + intrinsic reward function
    ]{Besides the improvements of the score performance of agents across different environments with the \textsc{BiMI} reward function, it also collaborates well with intrinsic rewards. Combining both can lead to significant performance improvements}
    \label{fig:main_result_lineplot_score}
\end{figure}

\begin{figure}[H]
    \centering
    \includegraphics[width=0.85\textwidth]{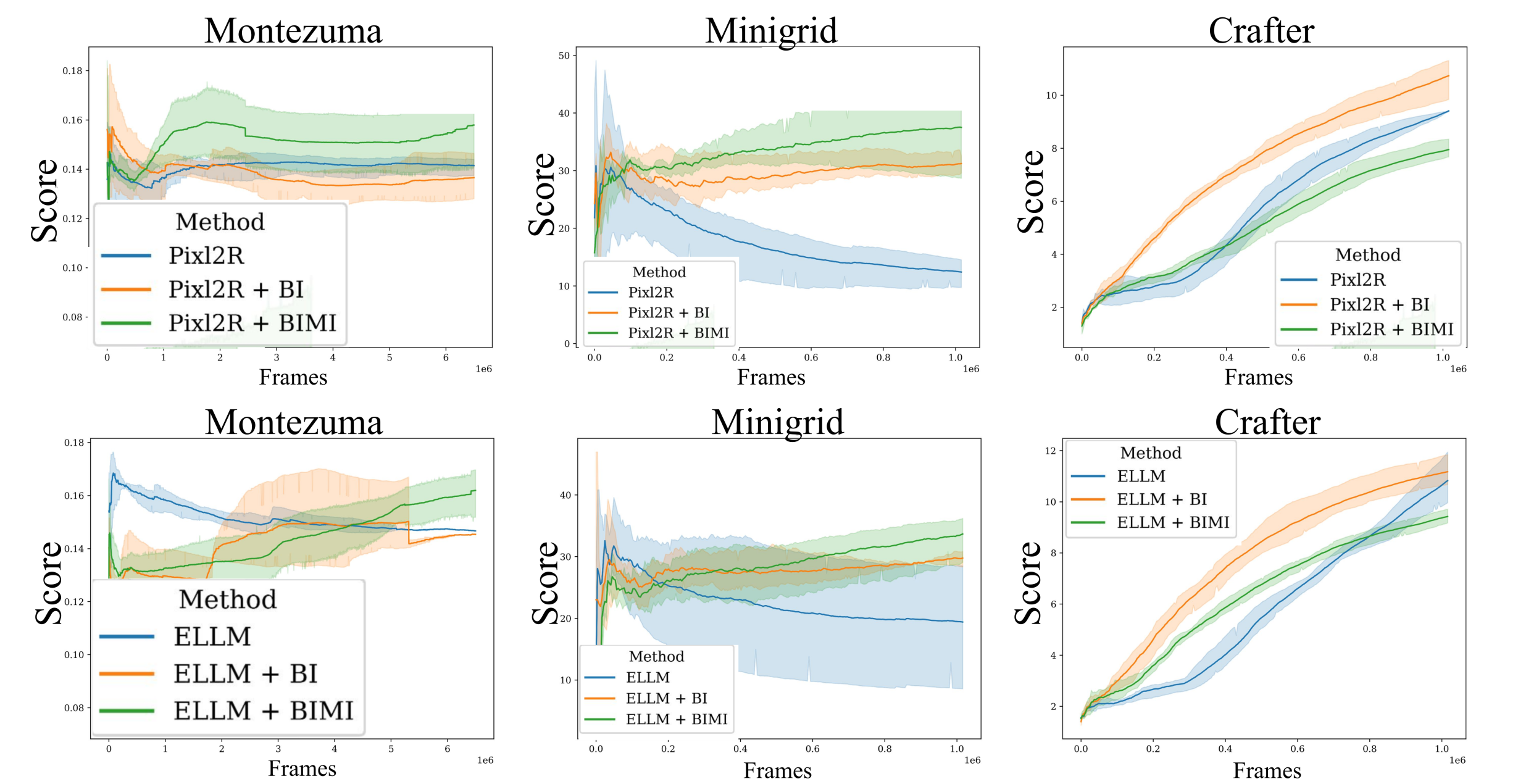}
    \caption[
        Training lineplot of ablation study on \textsc{BiMI} reward function
    ]{Ablation on the components of \textsc{BiMI} reward function. The binary reward (Bi) alone led to a 36.5\% improvement compared to original models. Excluding Crafter, Mutual Information (MI) provided a 23\% further improvement over Bi alone}
    \label{fig:ablation_bi_bimi_lineplot_score}
\end{figure}

\section{Encoder-based Vision-Language Models Explanation}
\label{subsubsec:lit_rev_architecture_training_VLM_architecture}

Similar to LLMs, VLMs can be broadly classified into two categories: \emph{encoder-based models} (or \emph{dual-encoder models}) and \emph{generative-based models} (often referred to as \emph{Vision LLMs}).

\emph{Encoder-based models} focus on mapping vision and language data into a shared semantic space where embeddings of corresponding visual and textual content are positioned close to each other. CLIP \citep{DBLP:conf/icml/RadfordKHRGASAM21} and SigLIP \citep{tschannen2025siglip2multilingualvisionlanguage} exemplify this approach. They are often referred to as VLM dual-encoders as such VLMs consists of both vision encoder and text encoder which can be operated separately to encode vision or language data. Dual-encoder models are particularly well-suited for cross-modal retrieval tasks, such as finding the most relevant image for a given text query or vice versa. This is achieved by computing similarity scores (e.g., cosine similarity) or other distance metrics between the embeddings of the query and the images in the dataset. Dual-encoders are therefore being explored to serve as learned-based reward models for embodied RL agents \citep{DBLP:conf/iclr/RocamondeMNPL24,DBLP:conf/icml/DuWWCDA0A23}, which we discuss in \aref{subsec:lit_rev_model_free_rl_LLM_VLM_instruction_reward}.

In contrast, \emph{generative-based models} such as LLaVA \citep{DBLP:conf/nips/LiuLWL23a} and Sa2VA \citep{yuan2025sa2vamarryingsam2llava} are typically built by connecting a vision encoder (e.g., CLIP-based vision encoder) to an LLM via a projection layer (e.g., a Multilayer Perceptron (MLP)).
The backbone LLM takes both vision feature vectors and text tokens as input and generates textual outputs, enabling these models to perform tasks such as Vision Question Answering (VQA). Although both categories are commonly classified as VLMs, they serve fundamentally different purposes and exhibit distinct behaviors.

\section{Sparse Reward and Random Walk}
\label{app_subsec:app_sec:sparse_reward_random_walk}

Below, we will discuss the characteristics of RL training under sparse rewards

\begin{restatable}{proposition}{propSparseReward}
    \label{prop:sparse_reward_means_random_walk}
    With sparse rewards, the gradient landscape is nearly flat,
making gradient-ascent updates indistinguishable from a random walk in parameter space.
\end{restatable}

While this may seem obvious, no existing textbook has yet explained this concept clearly. Here, we aim to formalize and justify this proposition, as it will be crucial for analyzing the performance of VLM-based reward models in this work.

Recall the gradient of the policy gradient update in online Actor Critic methods: $\theta \leftarrow \theta + \alpha \mathbb E_{\pi_{\theta}}\left[Q_\phi(s, a) \nabla_\theta \log \pi_\theta(a | s)\right]$. Using chain rule, the gradient of the policy gradient update can be decomposed as: $$\mathbb E_{\pi_{\theta}}\left[\frac{Q_\phi(s, a)}{\pi_\theta(a | s)} \nabla_\theta \pi_\theta(a | s)\right]$$

In sparse reward settings, the Q-function \( Q^{\pi_\theta}(s, a) \), often approximated by a neural network in actor-critic methods, exhibits significant randomness. Initially, the critic's weights are randomly initialized (e.g., via Xavier initialization \citep{DBLP:journals/jmlr/GlorotB10}), producing arbitrary Q-values \citep{nature/MnihKSRVBGRFOPB15}. With most transitions yielding \( r = 0 \), the TD error \( \delta = \gamma Q(s', a') - Q(s, a) \) is typically non-zero due to the stochastic variation between \( Q(s, a) \) and \( Q(s', a') \) resulting from random initialization. This leads to random updates to the Q-network, as the noisy TD error lacks a consistent direction, further increasing the stochasticity of Q-value estimates. 

Such stochasticity propagates to the policy gradient update, where it even amplifies this effect: At initialization, \( \pi_\theta(a|s) \approx 1/K \) for a discrete action space with \( K \) actions, as random weights yield a near-uniform softmax distribution. The gradient term \( \frac{\nabla_\theta \pi_\theta(a|s)}{\pi_\theta(a|s)} \) scales \( Q_\phi(s, a) \) by the inverse of the policy probability. Therefore, the update step \( \theta \leftarrow \theta + \alpha_\theta \frac{Q_\phi(s, a) \nabla_\theta \pi_\theta(a|s)}{\pi_\theta(a|s)} \) has a high variance due to noisy Q-value estimates and non-trivial gradient magnitudes from the scaling effect of $\pi_\theta(a|s)$. Consequently, the update exhibits \( \text{Var}(\hat{\nabla}_\theta J(\theta)) > 0 \) and \( \mathbb{E}[\hat{\nabla}_\theta J(\theta)] \approx 0 \). 

This behavior renders policy updates indistinguishable from a \emph{random walk} in parameter space. The parameters \( \theta \) drift stochastically, as each step's direction is dominated by noise rather than progress toward a better policy. This random walk behavior is a key reason why training under sparse rewards is challenging.

\end{document}